\documentclass{article}

\usepackage{microtype}
\usepackage{graphicx}
\usepackage{subcaption}
\usepackage{booktabs} % for professional tables
\usepackage{wrapfig}
\usepackage{multirow}

\usepackage{hyperref}

\usepackage[accepted]{icml2026}

\usepackage{amsmath}
\usepackage{amssymb}
\usepackage{mathtools}
\usepackage{amsthm}
\usepackage[dvipsnames,table]{xcolor}
\usepackage{enumitem}
\definecolor{muonblue}{RGB}{220,235,250}
\definecolor{slowred}{RGB}{200,50,50}
\definecolor{fastgreen}{RGB}{0,150,0}

\usepackage[capitalize,noabbrev]{cleveref}

\theoremstyle{plain}
\newtheorem{theorem}{Theorem}[section]

\newtheorem{lemma}[theorem]{Lemma}

\theoremstyle{definition}

\theoremstyle{remark}

\usepackage[disable,textsize=tiny]{todonotes}
\icmltitlerunning{Can Muon Fine-tune Adam-Pretrained Models?}

\begin{document}

\twocolumn[
  \icmltitle{Can Muon Fine-tune Adam-Pretrained Models?}

  % It is OKAY to include author information, even for blind submissions: the
  % style file will automatically remove it for you unless you've provided
  % the [accepted] option to the icml2026 package.

  % List of affiliations: The first argument should be a (short) identifier you
  % will use later to specify author affiliations Academic affiliations
  % should list Department, University, City, Region, Country Industry
  % affiliations should list Company, City, Region, Country

% You can specify symbols, otherwise they are numbered in order. Ideally, you
  % should not use this facility. Affiliations will be numbered in order of
  % appearance and this is the preferred way.
  \icmlsetsymbol{equal}{*}

  \begin{icmlauthorlist}
    \icmlauthor{Xingyu Qu}{equal,mbzuai}
    \icmlauthor{Peigeng Huang}{equal,nju}
    \icmlauthor{Samuel Horvath}{mbzuai}
  \end{icmlauthorlist}

  \icmlaffiliation{mbzuai}{MBZUAI}
  \icmlaffiliation{nju}{Nanjing University}

  \icmlcorrespondingauthor{Xingyu Qu}{Xingyu.Qu@mbzuai.ac.ae}
  \icmlcorrespondingauthor{Samuel Horvath}{Samuel.Horvath@mbzuai.ac.ae}

  % You may provide any keywords that you find helpful for describing your
  % paper; these are used to populate the "keywords" metadata in the PDF but
  % will not be shown in the document
  \icmlkeywords{Optimizer Mismatch, Muon, LoRA, Fine-tuning}

  \vskip 0.3in
  
  ]

% this must go after the closing bracket ] following \twocolumn[ ...

% This command actually creates the footnote in the first column listing the
% affiliations and the copyright notice. The command takes one argument, which
% is text to display at the start of the footnote. The \icmlEqualContribution
% command is standard text for equal contribution. Remove it (just {}) if you
% do not need this facility.

% Use ONE of the following lines. DO NOT remove the command.
% If you have no special notice, KEEP empty braces:
% \printAffiliationsAndNotice{}  % no special notice (required even if empty)
% Or, if applicable, use the standard equal contribution text:
\printAffiliationsAndNotice{\icmlEqualContribution}

\begin{abstract}
    Muon has emerged as an efficient alternative to Adam for pretraining, yet remains underused for fine-tuning. A key obstacle is that most open models are pretrained with Adam, and naively switching to Muon for fine-tuning leads to degraded performance due to an optimizer mismatch.
    We investigate this mismatch through controlled experiments and relate it to the distinct implicit biases of Adam and Muon. We provide evidence that the mismatch disrupts pretrained knowledge, and that this disruption scales with update strength. This leads us to hypothesize that constraining updates should mitigate the mismatch.
    We validate this with LoRA: across language and vision tasks, LoRA reduces the performance gap between Adam and Muon observed under full fine-tuning. Studies on LoRA rank, catastrophic forgetting, and LoRA variants further confirm that mismatch severity correlates with update strength.
    These results shed light on how optimizer mismatch affects fine-tuning and how it can be mitigated.
    Our code is available \href{https://github.com/XingyuQu/muon-finetune}{here}.
\end{abstract}

\section{Introduction}

Muon~\citep{jordan2024muon}
(\textbf{M}oment\textbf{U}m \textbf{O}rthogonalized by \textbf{N}ewton-Schulz)
has emerged as a promising alternative to Adam~\citep{kingma2014adam,loshchilov2017decoupled}
for large language model (LLM) pretraining.
It orthogonalizes the momentum matrix before each update,
achieving approximately $2\times$ compute efficiency over Adam~\citep{liu2025muon,shah2025practical}
while requiring less memory by eliminating the second moment.
Notably, it has been successfully adopted for training state-of-the-art models up to the trillion-parameter scale,
including Kimi K2/2.5~\citep{team2025kimik2} and GLM-4.5/4.7~\citep{zeng2025glm}.

\begin{figure}[t]
\centering
\includegraphics[width=\columnwidth]{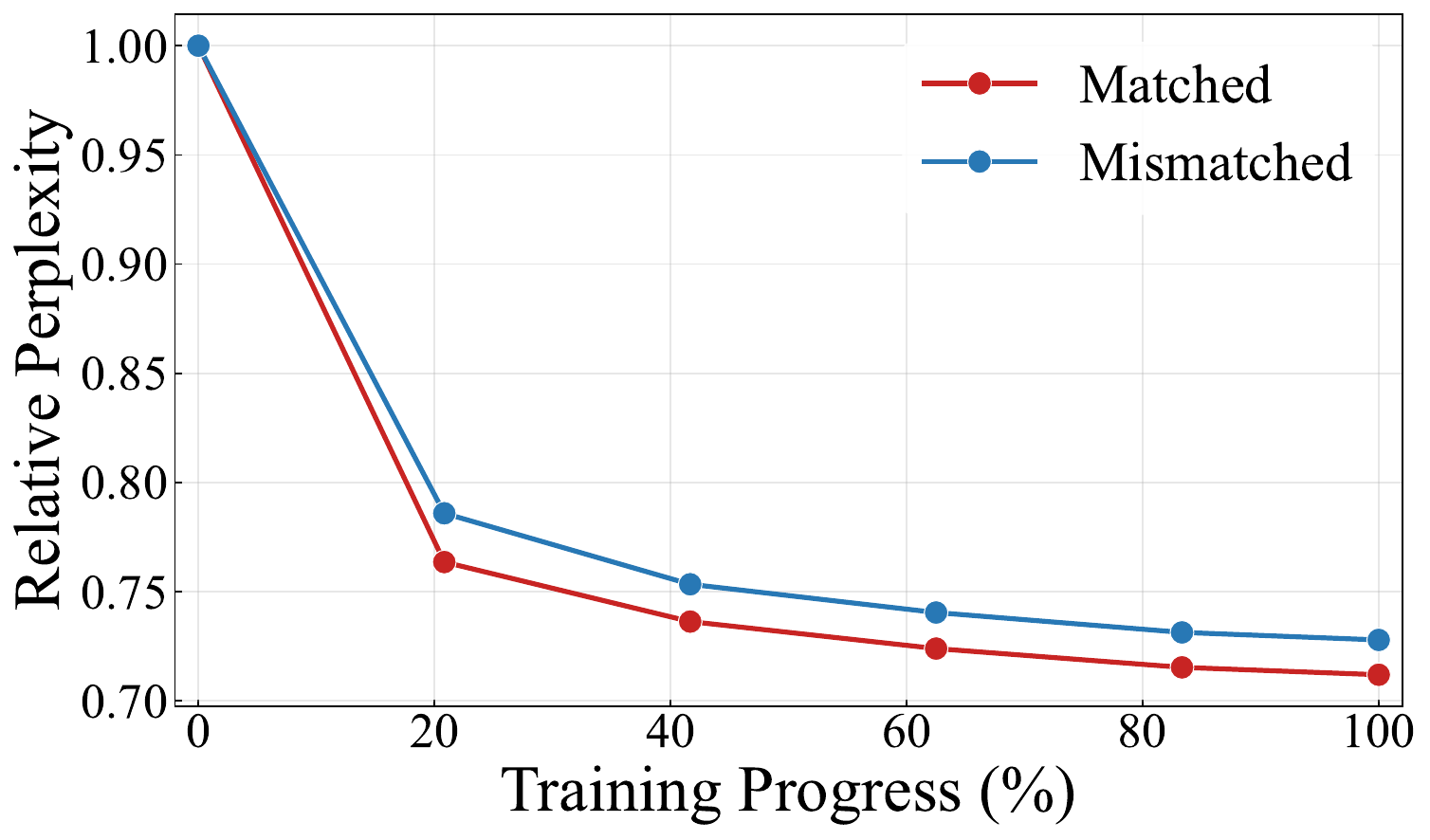}
\caption{Relative perplexity (normalized by pretrained baseline) during full fine-tuning on NanoChat~\citep{karpathy2024nanochat}. Fine-tuning with a mismatched optimizer (e.g., using Muon on an Adam-pretrained model) consistently results in worse perplexity. See Section~\ref{sec:analysis} for details.}
\label{fig:finetune_ppl_trajectory}
\end{figure}

Despite these successes, existing work on Muon has focused almost exclusively on pretraining,
leaving fine-tuning, the dominant training paradigm, largely unexplored.
Preliminary results from \citet{liu2025muon} reveal an \textit{optimizer mismatch} problem: applying Muon to fine-tune an Adam-pretrained model yields suboptimal results compared to Adam, and vice versa. We illustrate this in Figure~\ref{fig:finetune_ppl_trajectory}.
Since most open models are pretrained with Adam, this mismatch severely limits Muon's practical applicability.
Understanding and addressing this mismatch is therefore critical.

This work presents the first in-depth analysis of the optimizer mismatch problem, combining empirical exploration with theoretical insights.
We first reproduce the phenomenon through controlled experiments
and relate it to the distinct implicit biases of Adam and Muon, which produce pretrained weights with different structural properties.
We find that mismatch increases sensitivity to update strength during fine-tuning, suggesting that it degrades performance by disrupting pretrained knowledge.
Based on this analysis, we hypothesize that constraining the extent of updates should mitigate the mismatch. We examine this hypothesis with Low-Rank Adaptation (LoRA)~\citep{hu2021lora}, which freezes the pretrained weights and restricts updates to a low-rank subspace, thereby limiting how much the fine-tuning optimizer can alter them.
We verify this through extensive experiments on language and vision benchmarks,
showing that LoRA combined with Muon (LoRA-Muon) matches or outperforms its Adam counterpart (LoRA-Adam).
We further validate our hypothesis through rank studies, catastrophic forgetting~\citep{mccloskey1989catastrophic,french1999catastrophic} measurements,
and investigation of LoRA variants.

To summarize, we make the following contributions:
% [leftmargin=*,itemsep=2pt,topsep=2pt]
\begin{itemize}
    \item We reproduce and analyze the optimizer mismatch problem at accessible scales, relate it to the distinct implicit biases of Adam and Muon, and provide evidence that mismatch degrades performance by disrupting pretrained knowledge.
    \item We show that constraining updates via LoRA mitigates this mismatch, enabling LoRA-Muon to perform on-par with LoRA-Adam across language and vision tasks. Studies on LoRA rank, catastrophic forgetting, and compatibility with LoRA variants further support this finding.
\end{itemize}

\section{Preliminaries}
\label{sec:preliminaries}

\subsection{Muon Optimizer}
\label{subsec:muon}

Muon~\citep{jordan2024muon} is an optimizer designed for matrix-shaped parameters in neural networks, and is typically paired with Adam for non-matrix parameters such as embeddings and biases. In practice, Muon's implementation varies slightly across frameworks, as detailed in Appendix~\ref{app:muon_variants}. We adopt the implementation of \citet{liu2025muon} (except in Section~\ref{sec:analysis}), who first reported the optimizer mismatch problem. Their implementation serves as the basis for MuonClip, the optimizer used to pretrain the 32B/1T Kimi K2 model~\citep{team2025kimik2}.

Given a parameter matrix $W_t \in \mathbb{R}^{m \times n}$ and its gradient $G_t$, Muon updates the parameters as:
\begin{gather}
    \textstyle O_t = \mathrm{NS}(M_t), \quad
    M_{t} = \beta M_{t-1} + G_t, \\
    \textstyle W_{t+1} = W_t - \eta O_t
\end{gather}
where $\beta$ is the momentum coefficient, $\eta$ is the learning rate, and $\mathrm{NS}(\cdot)$ denotes the Newton-Schulz iteration that approximates the nearest semi-orthogonal matrix. Specifically, for the singular value decomposition $M_t = U\Sigma V^\top$, we have $O_t \approx UV^\top = (M_tM_t^\top)^{-1/2}M_t$. This orthogonalization ensures that updates have nearly uniform singular values, effectively applying equal step sizes across all directions in the weight space. Later works, such as Polar Express (PE)~\citep{amsel2025polar}, replace the fixed Newton-Schulz coefficients with adaptive ones for a more accurate approximation.

In contrast, Adam~\citep{kingma2014adam} is the dominant optimizer for both pretraining and fine-tuning large language models. It uses element-wise adaptive learning rates based on first- and second-moment estimates of the gradient:
\begin{gather}
    \textstyle M_t = \beta_1 M_{t-1} + (1-\beta_1) G_t, \\
    \textstyle V_t = \beta_2 V_{t-1} + (1-\beta_2) G_t^2, \\
    \textstyle W_{t+1} = W_t - \eta \cdot \hat{M}_t \oslash (\sqrt{\hat{V}_t} + \epsilon)
\end{gather}
where $\hat{M}_t = M_t / (1 - \beta_1^t)$ and $\hat{V}_t = V_t / (1 - \beta_2^t)$ are the bias-corrected estimates, and $\oslash$ denotes element-wise division.
The key distinction between these two optimizers lies in their preconditioning: Adam adapts step sizes independently for each parameter via element-wise rescaling $1/(\sqrt{\hat{V}_t} + \epsilon)$, whereas Muon adapts step sizes across singular directions of the gradient matrix via matrix-level preconditioning $(MM^\top)^{-1/2}$. As we show in Section~\ref{sec:analysis}, this difference leads to fundamentally different implicit biases, which in turn give rise to the optimizer mismatch problem.

\subsection{Low-Rank Adaptation (LoRA)}
\label{subsec:lora}

Low-Rank Adaptation (LoRA)~\citep{hu2021lora} is a parameter-efficient fine-tuning method that freezes the pretrained weights and introduces trainable low-rank decomposition matrices. For a pretrained weight matrix $W_0 \in \mathbb{R}^{m \times n}$, LoRA parameterizes the weight update as:
\begin{equation}
    W = W_0 + \Delta W = W_0 + \tilde{\alpha} BA
\end{equation}
where $B \in \mathbb{R}^{m \times r}$ and $A \in \mathbb{R}^{r \times n}$ are the trainable low-rank matrices, $r \ll \min(m, n)$ is the rank, and $\tilde{\alpha}$ is a scaling factor typically set to $\alpha/r$~\citep{hu2021lora} or $\alpha/\sqrt{r}$~\citep{kalajdzievski2023rank}, where $\alpha$ is a hyperparameter. By default, $B$ is initialized to zero and $A$ is drawn from a random Gaussian, so that $\Delta W = 0$ at the start of training. During fine-tuning, only $A$ and $B$ are updated while $W_0$ remains frozen. This approach significantly reduces the number of trainable parameters and memory requirements.

However, LoRA often underperforms full fine-tuning due to the low-rank constraint. Various variants have been proposed to narrow this gap, including initialization techniques that bring LoRA updates closer to full fine-tuning~\citep{zhang2025loraone, meng2024pissa, tastan2025loft}. On the other hand, LoRA has been shown to ``learn less but forget less,'' suggesting that the low-rank constraint helps preserve pretrained knowledge~\citep{biderman2024lora}.

\section{Analyzing Optimizer Mismatch}
\label{sec:analysis}

\citet{liu2025muon} reported that fine-tuning with a mismatched optimizer---using Adam on Muon-pretrained models or vice versa---leads to degraded performance compared to using the same optimizer for both stages.
This \emph{optimizer mismatch} problem significantly limits the practical applicability of Muon for fine-tuning, since most publicly available pretrained models were trained with Adam.
To understand this phenomenon, we conduct controlled experiments in a simplified setting.

\textbf{Experimental setup.}
We pretrain two 561M-parameter NanoChat models~\citep{karpathy2024nanochat} from scratch on $\sim$11B tokens from FineWeb-Edu~\citep{penedo2024fineweb} following the Chinchilla scaling law~\citep{hoffmann2022training}: one with Muon and one with Adam (tuned to achieve similar CORE metrics~\citep{li2024dclm}).
We then fine-tune on WikiText-2~\citep{merity2017pointer} using full fine-tuning and LoRA ($r=8$, $\alpha=16$), each with both optimizers (denoted Full-Muon, Full-Adam, LoRA-Muon, and LoRA-Adam), and report the best validation perplexity over a learning rate sweep (averaged over 3 seeds).
Details on model architecture and experiments are in Appendix~\ref{app:nanochat_details}.

\textbf{Reproducing the mismatch.}
Table~\ref{tab:optimizer_mismatch} confirms the optimizer mismatch phenomenon: for both pretrained models, using the matched optimizer (Full-Muon for Muon-pretrained, Full-Adam for Adam-pretrained) consistently outperforms the mismatched one.
This symmetric pattern indicates a fundamental incompatibility between Muon and Adam when switching optimizers across pretraining and fine-tuning.

\begin{table}[t]
\centering
\caption{WikiText-2 validation perplexity normalized by the pretrained model baseline (averaged over 3 seeds). Best results per column are in \textbf{bold}. \colorbox{muonblue}{Blue} rows indicate Muon fine-tuning. \textcolor{red!70!black}{Red} values show the gap from the matched optimizer, which is reduced with LoRA. Raw values are in Appendix~\ref{app:nanochat_details}.}
\label{tab:optimizer_mismatch}
\newcommand{\gap}[1]{\enspace{\textcolor{red!70!black}{(#1)}}}
\newcommand{\nogap}{\hphantom{\enspace{(+.023)}}}
\begin{tabular}{l l l}
\toprule
Method & Muon Pretrain $\downarrow$ & Adam Pretrain $\downarrow$ \\
\midrule
\rowcolor{muonblue} Full-Muon & \textbf{0.716}\nogap & 0.719\gap{+.009} \\
Full-Adam & 0.739\gap{+.023} & \textbf{0.710}\nogap \\
\rowcolor{muonblue} LoRA-Muon & 0.719\nogap & 0.723\gap{+.002} \\
LoRA-Adam & 0.733\gap{+.014} & 0.721\nogap \\
\bottomrule
\end{tabular}
\end{table}

\subsection{Why Does Mismatch Occur?}
\label{subsec:implicit_bias}

We hypothesize that the mismatch arises from the fundamentally different \emph{implicit biases} of Adam and Muon.
Specifically, Adam uses element-wise preconditioning, while Muon uses $(MM^\top)^{-1/2}$ for matrix-level preconditioning.
This results in different implicit biases toward the max-norm $\|W\|_{\max} = \max_{i,j} |W_{ij}|$ and the spectral norm $\|W\|_2 = \sigma_{\max}(W)$, respectively.
\citet{bernstein2024old} interpret Adam and Muon (without momentum) as steepest descent under the above norms.
On classification problems, \citet{zhang2024implicit,fan2025implicit} show that Adam converges to solutions with maximal max-norm margin, while Muon converges to solutions with maximal spectral-norm margin.
Additionally, \citet{chen2025muon} shows that Muon optimizes a spectral-norm constrained problem, and \citet{kovalev2025understanding} characterizes it as a trust-region method in spectral norm.

To further illustrate this, we analyze a simplified linear regression problem: minimizing $\mathcal{L}(W) = \frac{1}{2}\|Wx - y\|_2^2$ for $W \in \mathbb{R}^{m \times n}$, given $x \in \mathbb{R}^n$ and $y \in \mathbb{R}^m$, which allows closed-form tracking of the optimization dynamics.
For simplicity, we consider Muon with exact orthogonalization and without momentum, and analyze the dynamics of SignGD as a simple yet insightful proxy of Adam~\citep{balles2018dissecting,bernstein2018signsgd}.
In this setting, we show that the two optimizers converge to fundamentally different solutions (Theorems~\ref{thm:signgd_main} and~\ref{thm:muon_main}; proofs in Appendix~\ref{app:implicit_bias}).
Figure~\ref{fig:implicit_bias_and_stable_rank} (left) illustrates this numerically; see Appendix~\ref{app:implicit_bias} for the corresponding loss curves.

\begin{figure}[t]
\centering
\includegraphics[width=0.48\columnwidth]{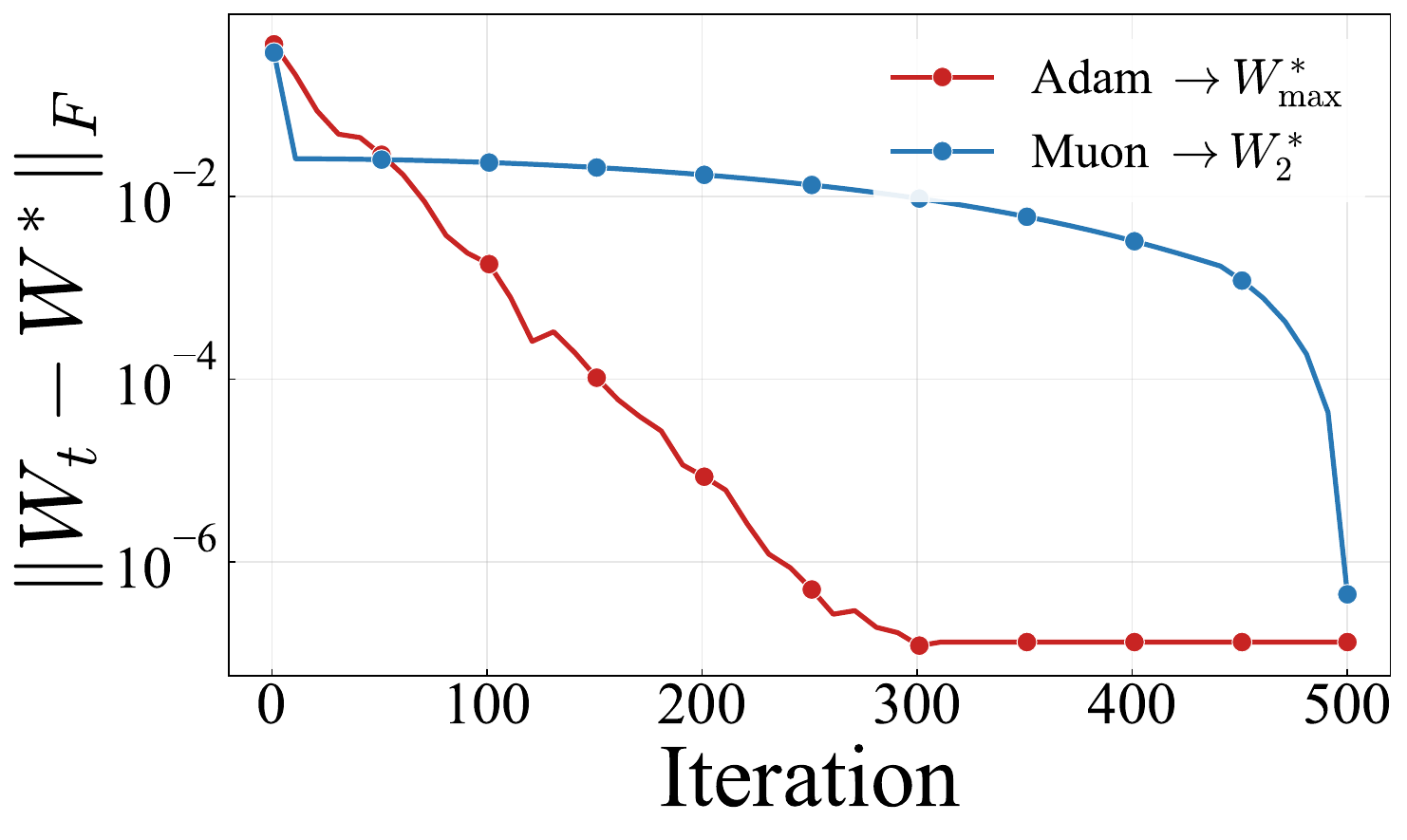}
\hfill
\includegraphics[width=0.48\columnwidth]{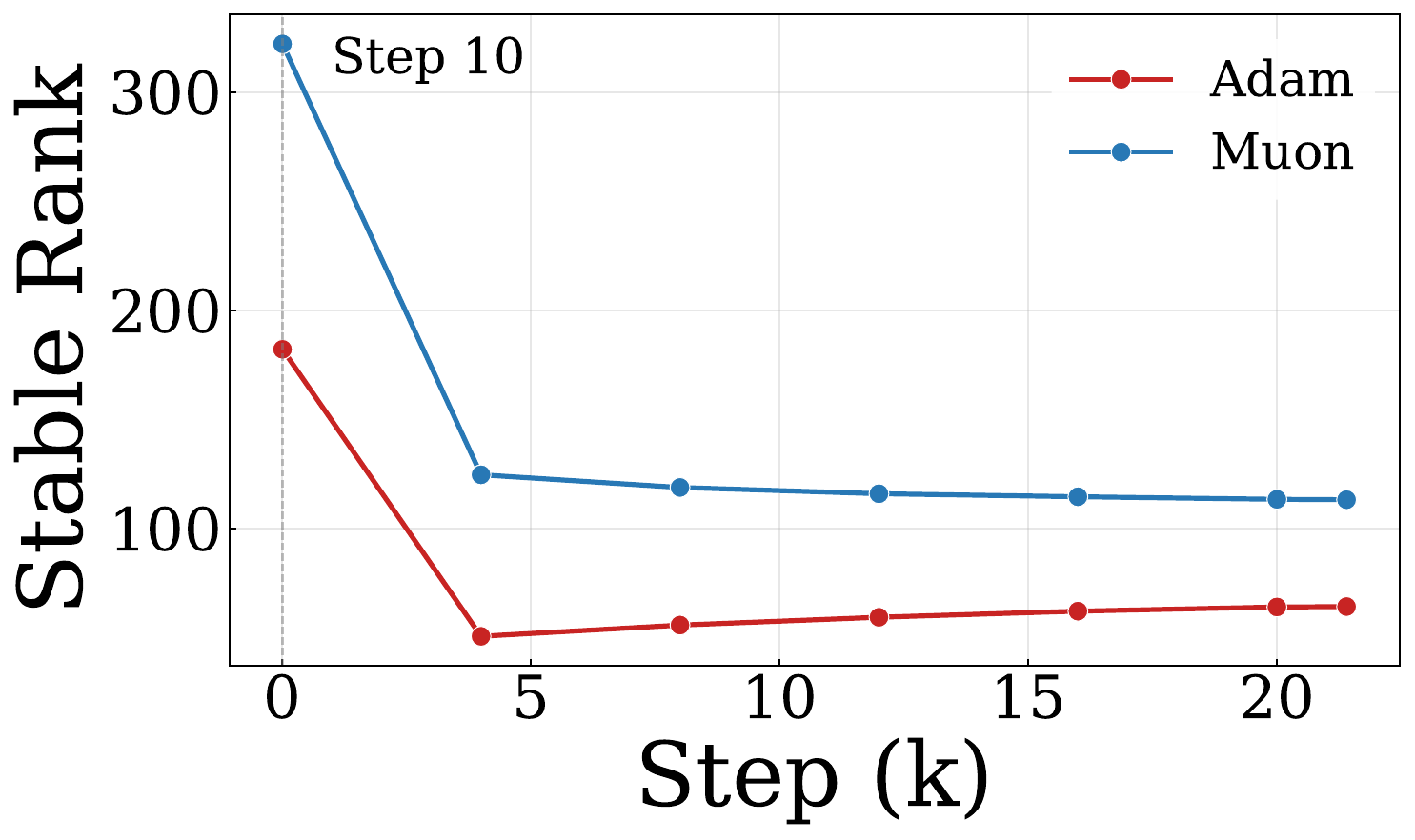}
\caption{\textbf{Left:} Numerical verification of the implicit biases on a toy linear regression problem. Adam converges to the min-max-norm solution $W^*_{\max}$, while Muon converges to the min-spectral-norm solution $W^*_2$. \textbf{Right:} Average stable rank of Q, K, V projections during NanoChat pretraining. Muon-trained weights maintain notably higher stable rank, indicating a distinct spectral structure.}
\label{fig:implicit_bias_and_stable_rank}
\end{figure}

\begin{figure}[t]
\centering
\includegraphics[width=0.48\columnwidth]{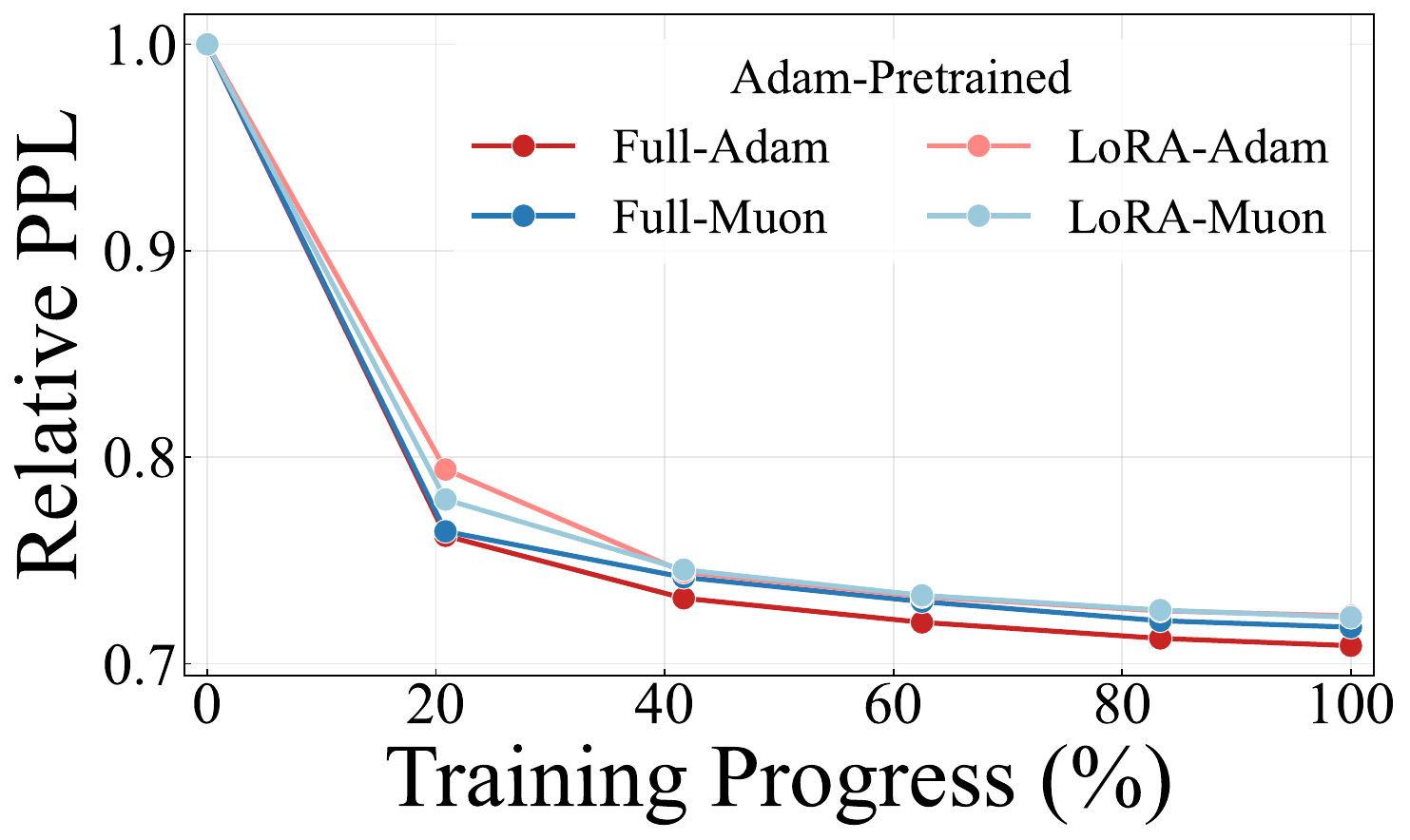}
\hfill
\includegraphics[width=0.48\columnwidth]{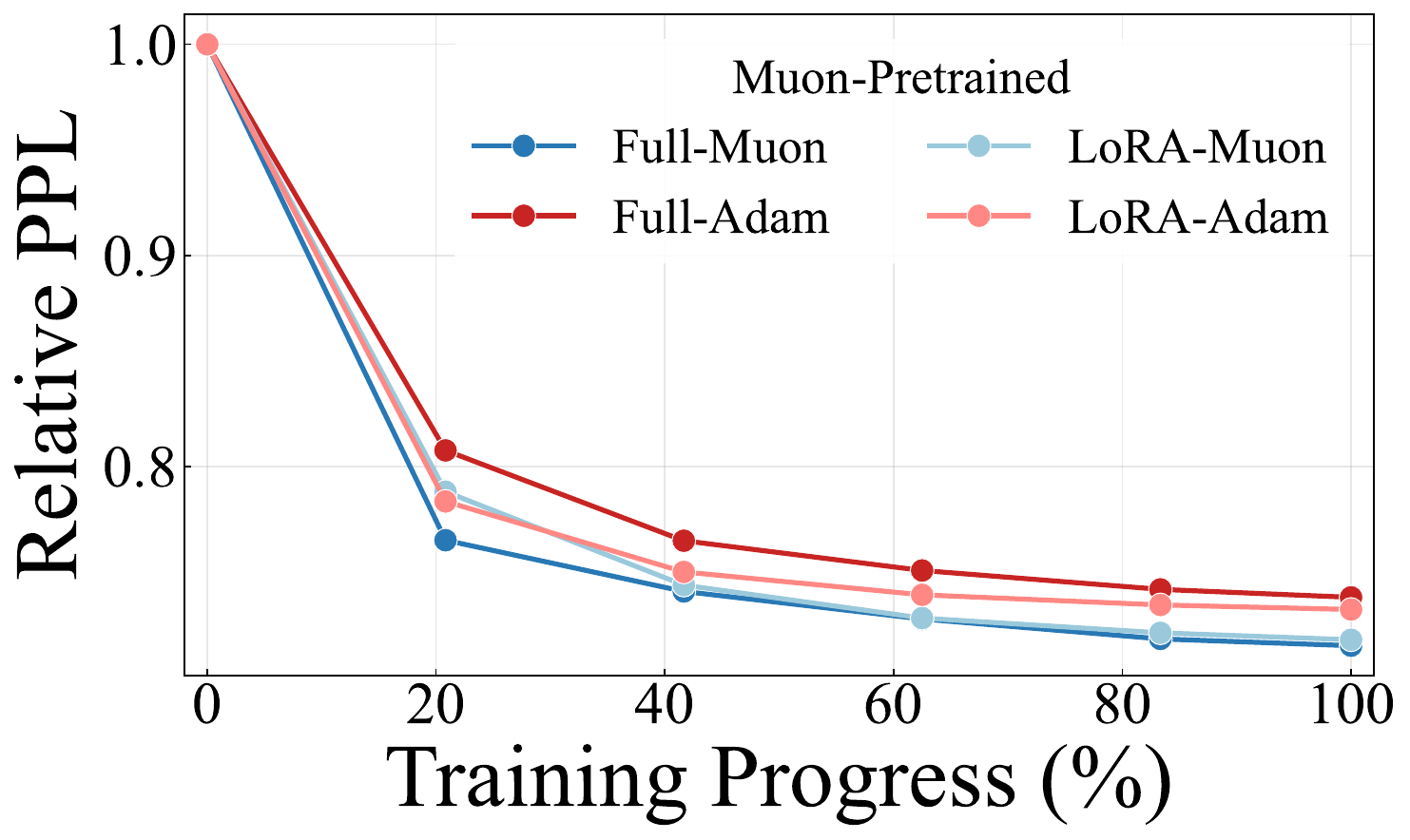}
\caption{Fine-tuning perplexity (PPL) trajectories on Adam-pretrained (left) and Muon-pretrained (right) NanoChat models. LoRA mitigates the optimizer mismatch in both cases.}
\label{fig:finetune_ppl_nanochat}
\end{figure}

\begin{theorem}[Implicit Bias of SignGD]
\label{thm:signgd_main}
Consider SignGD from $W_0 = 0$ with step sizes $\eta_t \to 0$ and $\sum_t \eta_t = \infty$.
The iterates converge to $W^* = y \operatorname{sign}(x)^\top / \|x\|_1$,
which achieves the minimum max-norm among all solutions:
$\|W^*\|_{\max} = \min_{W: Wx = y} \|W\|_{\max}$.
\end{theorem}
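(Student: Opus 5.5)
The proof rests on the rank-one structure of the gradient: it makes SignGD act as $m$ independent scalar sign-descent problems along a single fixed direction $\operatorname{sign}(x)^\top$. Minimality of the limit then follows from Hölder's inequality.

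\textbf{Step 1 (gradient structure).} For $\mathcal{L}(W) = \frac12\|Wx-y\|_2^2$ the gradient is $\nabla \mathcal{L}(W) = r x^\top$ with residual $r = Wx - y$. The entrywise sign of this rank-one matrix factorizes as
\[
\operatorname{sign}(r x^\top) = \operatorname{sign}(r)\operatorname{sign}(x)^\top,
\]
with the convention $\operatorname{sign}(0)=0$. Columns $j$ with $x_j = 0$ therefore never move from $0$.

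\textbf{Step 2 (invariant form).} Starting from $W_0=0$, induction shows every iterate has the form $W_t = d_t \operatorname{sign}(x)^\top$ for some $d_t\in\mathbb{R}^m$. Using $\operatorname{sign}(x)^\top x = \|x\|_1$, the residual is $r_t = \|x\|_1 d_t - y$. The dynamics decouple coordinatewise as
\[
d_{t+1,i} = d_{t,i} - \eta_t \operatorname{sign}\bigl(\|x\|_1 d_{t,i} - y_i\bigr), \qquad i=1,\dots,m .
\]
This is scalar sign descent on $|\,\|x\|_1 d - y_i|$, whose unique zero is $d^\star_i = y_i/\|x\|_1$. I assume $x \neq 0$.

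\textbf{Step 3 (scalar convergence).} This is the only real analytic step, though it is elementary. Each move goes toward $d^\star_i$ by exactly $\eta_t$.
\begin{itemize}
\item While $d_{t,i}$ stays on one side of $d^\star_i$, the distance to $d^\star_i$ decreases by $\eta_t$ at each step. Since $\sum_t\eta_t=\infty$, the iterate must cross or hit $d^\star_i$ in finite time.
\item Once $|d_{t,i}-d^\star_i|\le \eta_{t-1}$, the next step either lands closer or overshoots by at most $\eta_t$. Hence $|d_{s,i}-d^\star_i| \le \max_{k\ge t-1}\eta_k$ for all $s\ge t$.
\item Combining the two: for any $\varepsilon>0$, choose $T$ with $\eta_k<\varepsilon$ for $k\ge T$. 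After time $T$, the iterate reaches the $\varepsilon$-ball around $d^\star_i$ in finite time and never leaves it.
\end{itemize}
Thus $d_t\to y/\|x\|_1$, and so $W_t \to W^* = y\operatorname{sign}(x)^\top/\|x\|_1$. Direct substitution shows $W^*x = y$.

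\textbf{Step 4 (minimality).} For any $W$ with $Wx=y$ and any row $i$, Hölder's inequality gives
\[
|y_i| = \Bigl|\sum_j W_{ij}x_j\Bigr| \le \|W\|_{\max}\|x\|_1 .
\]
Hence $\|W\|_{\max}\ge \|y\|_\infty/\|x\|_1$. Since the entries of $\operatorname{sign}(x)$ lie in $\{-1,0,1\}$, we have $\|W^*\|_{\max} = \|y\|_\infty/\|x\|_1$. So $W^*$ attains the lower bound.

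Steps 1, 2 and 4 are immediate. The main care is in Step 3, the oscillation argument with diminishing steps. It also requires the zero-sign convention, so that both zero entries of $x$ and exact hits $d_{t,i}=d^\star_i$ are handled consistently.
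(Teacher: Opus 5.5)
Your proposal is correct and follows essentially the same route as the paper. Both arguments factor the sign of the rank-one gradient as $\operatorname{sign}(Wx-y)\operatorname{sign}(x)^\top$, prove by induction that $W_t = w_t\operatorname{sign}(x)^\top$, and reduce to $m$ decoupled scalar sign-descent recursions; the paper handles these with a separate lemma built on the bound $|d_{t+1}|\le\max\{|d_t|,\eta_t\}$, which is precisely your Step 3, and it closes with the same H\"older lower bound $\|W\|_{\max}\ge\|y\|_\infty/\|x\|_1$.
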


\begin{theorem}[Implicit Bias of Muon]
\label{thm:muon_main}
Consider Muon from $W_0 = 0$ with step sizes $\eta_t \to 0$ and $\sum_t \eta_t = \infty$.
The iterates converge to $W^* = yx^\top / \|x\|_2^2$,
which achieves the minimum spectral norm among all solutions:
$\|W^*\|_2 = \min_{W: Wx = y} \|W\|_2$.
\end{theorem}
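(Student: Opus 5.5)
The plan is to show that every Muon iterate is a rank-one matrix of the form $W_t = c_t\, u\, \hat{x}^\top$, and then to track the scalar $c_t$.

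\textbf{Setup.} Write $\hat x = x/\|x\|_2$, $\hat y = y/\|y\|_2$ (if $y=0$, the claim is trivial with $W^*=0$). The gradient is $G_t = (W_t x - y)x^\top$, which has rank one.

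\textbf{Invariance of the rank-one form.} I will prove by induction that $W_t = c_t\,\hat y\hat x^\top$ with $c_t \ge 0$ scalar, starting from $c_0=0$. Under this form, $W_t x - y = (c_t\|x\|_2 - \|y\|_2)\,\hat y$. Hence $G_t = (c_t\|x\|_2-\|y\|_2)\|x\|_2\,\hat y\hat x^\top$, and exact orthogonalization of a rank-one matrix $s\,\hat y\hat x^\top$ returns $\operatorname{sign}(s)\,\hat y\hat x^\top$, with the zero matrix mapping to zero.

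\textbf{Scalar dynamics.} The update therefore reads $c_{t+1} = c_t - \eta_t\operatorname{sign}(c_t\|x\|_2 - \|y\|_2)$. This is one-dimensional sign descent toward the target $c^\star = \|y\|_2/\|x\|_2$. The key step, and the only mildly delicate one, is showing $c_t\to c^\star$.
\begin{itemize}
\item Since $\sum_t\eta_t=\infty$, while $c_t<c^\star$ it increases by $\eta_t$ per step, so it must reach or cross $c^\star$ in finitely many steps.
\item Once $|c_t-c^\star|\le\eta_{t-1}$, every subsequent step moves toward $c^\star$ by $\eta_t$ or overshoots. A short case check gives $|c_{t+1}-c^\star|\le\max(|c_t-c^\star|-\eta_t,\ \eta_t)$.
\item Combining these, $\limsup_t|c_t-c^\star|\le\limsup_t\eta_t=0$.
\end{itemize}
It follows that $W_t\to c^\star\hat y\hat x^\top = yx^\top/\|x\|_2^2$.

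\textbf{Optimality.} For any $W$ with $Wx=y$, we have $\|W\|_2\ge\|Wx\|_2/\|x\|_2=\|y\|_2/\|x\|_2$. The rank-one limit has spectral norm exactly $\|y\|_2\|x\|_2/\|x\|_2^2=\|y\|_2/\|x\|_2$, so it attains this lower bound and solves $Wx=y$. This completes the plan.

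For comparison, the SignGD analogue (Theorem~\ref{thm:signgd_main}) would follow the same template but coordinatewise. There the analysis is messier, because each entry $W_{ij}$ evolves via $\operatorname{sign}(r_i x_j)$ with its own residual $r_i$, which decouples the rows. For the Muon statement, the rank-one invariance makes the convergence step the only place where care is needed.
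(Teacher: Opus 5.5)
Your proposal is correct and follows the paper's proof essentially step for step. The paper also proves by induction that $W_t=\alpha_t\,yx^\top/\|x\|_2^2$ (your $c_t=\alpha_t\|y\|_2/\|x\|_2$) and reduces to the scalar sign recurrence $d_{t+1}=d_t-\beta_t\operatorname{sign}(d_t)$. Its Lemma~\ref{lem:sign_convergence} supplies the step you compress into ``combining these'': the bound $|d_{t+1}|\le\max(|d_t|-\eta_t,\eta_t)$ gives convergence only once you re-use the divergence of $\sum_t\eta_t$ to get below every $\varepsilon$, not just for the first crossing. It then finishes with the same bound $\|W\|_2\ge\|Wx\|_2/\|x\|_2$.

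One small slip: $c_t\ge 0$ need not hold (e.g.\ $\eta_1>\eta_0>c^\star$ gives $c_2=\eta_0-\eta_1<0$), but your argument never uses it, so just drop that claim.
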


\begin{figure*}[t]
\centering
\begin{subfigure}[t]{0.48\textwidth}
\centering
\includegraphics[width=\textwidth]{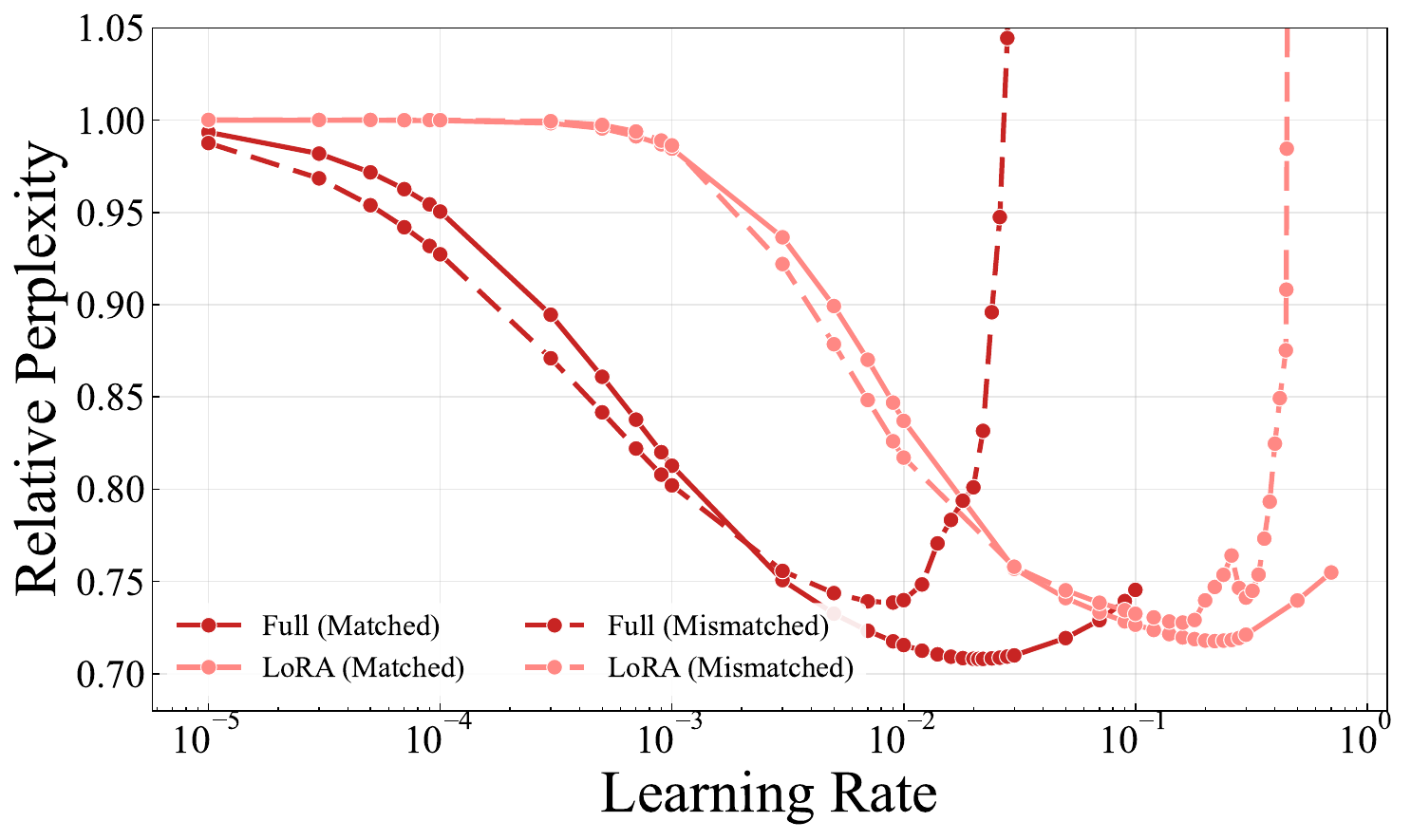}
\caption{Adam fine-tuning}
\label{fig:lr_sweep_adam}
\end{subfigure}
\hfill
\begin{subfigure}[t]{0.48\textwidth}
\centering
\includegraphics[width=\textwidth]{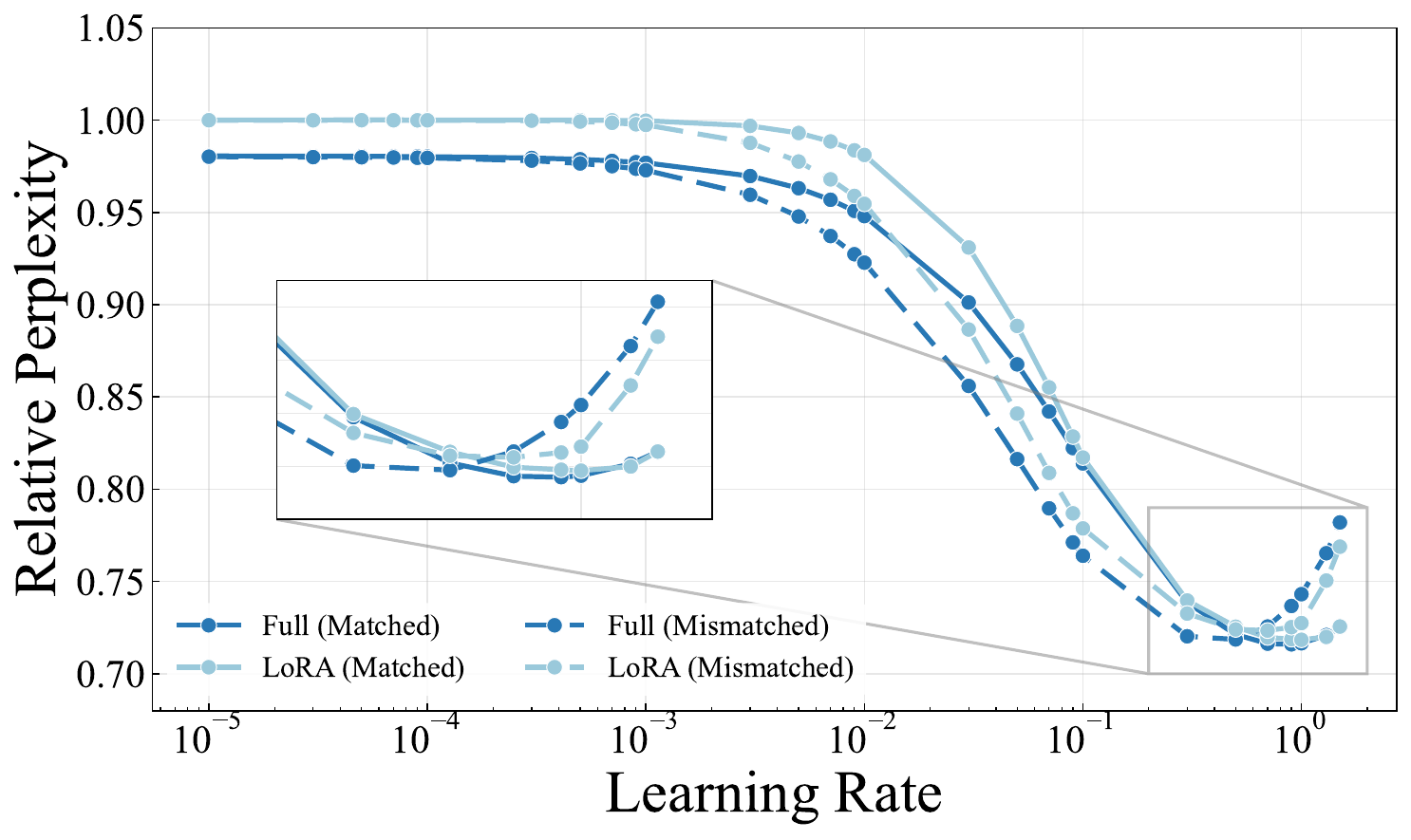}
\caption{Muon fine-tuning}
\label{fig:lr_sweep_muon}
\end{subfigure}
\caption{Learning rate sweeps for fine-tuning with Adam (left) and Muon (right). Solid lines: matched pretraining optimizer. Dashed lines: mismatched pretraining optimizer. Dark colors: full fine-tuning. Light colors: LoRA. Under mismatch, the curve shifts upward and leftward (worse perplexity at a lower optimal learning rate). LoRA reduces the gap between matched and mismatched curves.}
\label{fig:lr_sweep}
\end{figure*}

Beyond this simplified setting, these different implicit biases also lead to structurally different weights in practice.
As shown in Figure~\ref{fig:implicit_bias_and_stable_rank} (right), Muon-trained weights exhibit notably higher stable rank during NanoChat pretraining; see Appendix~\ref{app:spectral_analysis} for additional spectral analysis, including SVD entropy.
Similar observations were reported by \citet{liu2025muon} on larger-scale models.

\textbf{Impact on fine-tuning.}
Given these structural differences, fine-tuning with a mismatched optimizer can alter the pretrained weights in a direction incompatible with the pretraining structure, potentially disrupting the model's learned knowledge.
Figure~\ref{fig:lr_sweep} provides evidence for this through learning rate sweeps: using a mismatched pretrained model shifts the perplexity curve \emph{upward and leftward}—the optimal learning rate becomes smaller, and the best achievable perplexity is worse.
This indicates that the model is more sensitive to update strength under mismatch, and that stronger updates cause more disruption to the pretrained knowledge.
We further corroborate this by measuring catastrophic forgetting in Section~\ref{subsec:forgetting}.

\subsection{LoRA Mitigates Optimizer Mismatch}

Given that mismatch makes the model more sensitive to update strength, we hypothesize that \emph{constraining the extent of updates during fine-tuning should mitigate the mismatch problem}.

We examine this idea with LoRA~\citep{hu2021lora}, which naturally achieves this through two mechanisms:
(1) it preserves the pretrained weights $W_0$ exactly, optimizing only the low-rank adapters;
(2) the low-rank constraint inherently limits the extent of updates.
This aligns with recent findings that LoRA ``learns less and forgets less''~\citep{biderman2024lora}.
We formalize this intuition in the toy linear regression framework of Section~\ref{subsec:implicit_bias}: under LoRA-style constraints, the worst-case mismatch inflation is bounded by a factor that scales with rank $r$, vanishing at $r=1$ and recovering full fine-tuning when $r=n$ (Appendix~\ref{app:lora_theory}).

Table~\ref{tab:optimizer_mismatch} and Figure~\ref{fig:finetune_ppl_nanochat} confirm that LoRA reduces the mismatch gap: the perplexity gap shrinks by 39\% for Muon-pretrained models and 78\% for Adam-pretrained models.
Notably, LoRA-Adam even outperforms Full-Adam on Muon-pretrained models. On Adam-pretrained models, LoRA-Muon converges faster than LoRA-Adam early on, suggesting that Muon's fast early convergence transfers to fine-tuning under LoRA.
Figure~\ref{fig:lr_sweep} provides further evidence: LoRA (light colors) narrows the gap between matched and mismatched curves, allowing larger learning rates under mismatch.

\section{Experiments}
\label{sec:experiments}

Section~\ref{sec:analysis} showed that optimizer mismatch disrupts pretrained knowledge, and that constraining updates via LoRA mitigates this effect.
We now validate this hypothesis on standard benchmarks across natural language understanding (NLU), natural language generation (NLG), and image classification, examining whether the performance gap between Adam and Muon under full fine-tuning diminishes when LoRA is applied.

\textbf{Implementation.}
Following the standard Muon implementation~\citep{jordan2024muon,liu2025muon}, we use Nesterov momentum and shape-dependent learning rate scaling (see Appendix~\ref{app:muon_variants}).
As Muon requires operating on full gradient matrices for Newton-Schulz orthogonalization, it is incompatible with standard distributed training frameworks
such as Fully Sharded Data Parallel (FSDP) and DeepSpeed ZeRO~\citep{rajbhandari2020zero} that shard tensors across devices.
While recent work has proposed distributed Muon variants~\citep{liu2025muon,ahn2025dion,li2025normuon}, these are either not mathematically equivalent to the original Muon or not publicly available.
To ensure a fair comparison, we use standard DDP (Distributed Data Parallel) training for all experiments
with both Muon and Adam.
The only exception is Full-Adam fine-tuning of Llama 2-7B~\citep{touvron2023llama},
where we use DeepSpeed ZeRO-2 due to memory constraints.

\begin{table*}[!ht]
\centering
\caption{Natural language understanding results on GLUE benchmark with T5-Base.
We compare the performance gap between Adam and Muon under full fine-tuning versus LoRA.
Results are accuracy (\%) reported as mean $\pm$ std over 3 seeds.
Best results are \underline{underlined}; best LoRA results are \textbf{bolded}.
\colorbox{muonblue}{Blue} rows indicate Muon-based methods.}
\label{tab:nlu}
\begin{tabular}{lccccc|c}
\toprule
Method & CoLA & MNLI & MRPC & QNLI & SST-2 & Average \\
\midrule
Full-Adam & 82.90\tiny{±0.40} & \underline{86.61}\tiny{±0.15} & 87.99\tiny{±0.88} & 93.04\tiny{±0.22} & \underline{95.15}\tiny{±0.29} & 89.14 \\
\rowcolor{muonblue} Full-Muon & 82.42\tiny{±0.63} & 86.24\tiny{±0.09} & 87.75\tiny{±0.53} & 92.95\tiny{±0.11} & 94.50\tiny{±0.09} & 88.77 \\
\rowcolor{muonblue} Full-Muon-PE & 81.82\tiny{±0.52} & 86.23\tiny{±0.11} & \underline{88.73}\tiny{±1.22} & 92.96\tiny{±0.10} & 94.88\tiny{±0.14} & 88.92 \\
\midrule
LoRA-Adam & 82.81\tiny{±0.39} & 86.12\tiny{±0.05} & 88.24\tiny{±0.60} & \underline{\textbf{93.22}}\tiny{±0.05} & 94.27\tiny{±0.16} & 88.93 \\
\rowcolor{muonblue} LoRA-Muon & 82.52\tiny{±0.35} & 86.41\tiny{±0.08} & 88.15\tiny{±0.83} & 93.18\tiny{±0.07} & 94.57\tiny{±0.24} & 88.97 \\
\rowcolor{muonblue} LoRA-Muon-PE & \underline{\textbf{83.00}}\tiny{±0.23} & \textbf{86.43}\tiny{±0.17} & \textbf{88.48}\tiny{±0.72} & 93.14\tiny{±0.05} & \textbf{94.95}\tiny{±0.09} & \underline{\textbf{89.20}} \\
\bottomrule
\end{tabular}
\end{table*}

\begin{table*}[!ht]
\centering
\caption{Natural language generation results with Llama 2-7B.
We compare the performance gap between Adam and Muon under full fine-tuning versus LoRA
on math (GSM8K accuracy), code (HumanEval Pass@1), and commonsense reasoning (average accuracy).
Results are reported as mean $\pm$ std over 3 seeds.
Best results are \underline{underlined}; best LoRA results are \textbf{bolded}.
\colorbox{muonblue}{Blue} rows indicate Muon-based methods.}
\label{tab:nlg}
    \begin{tabular}{lccc|c}
\toprule
Method & Math & Code & Commonsense & Average \\
\midrule
Full-Adam & \underline{61.66}{\scriptsize$\pm$0.04} & \underline{35.57}{\scriptsize$\pm$1.37} & 67.52{\scriptsize$\pm$0.07} & \underline{54.92} \\
\rowcolor{muonblue} Full-Muon & 57.37{\scriptsize$\pm$0.8} & 34.35{\scriptsize$\pm$0.76} & 67.57{\scriptsize$\pm$0.07} & 53.10 \\
\rowcolor{muonblue} Full-Muon-PE & 57.90{\scriptsize$\pm$0.47} & 35.16{\scriptsize$\pm$0.76} & \underline{67.62}{\scriptsize$\pm$0.12} & 53.56 \\
\midrule
LoRA-Adam & \textbf{59.64}{\scriptsize$\pm$0.66} & 27.85{\scriptsize$\pm$1.44} & 67.37{\scriptsize$\pm$0.11} & 51.62 \\
\rowcolor{muonblue} LoRA-Muon & 59.57{\scriptsize$\pm$0.4} & \textbf{29.47}{\scriptsize$\pm$0.76} & \textbf{67.40}{\scriptsize$\pm$0.11} & \textbf{52.15} \\
\rowcolor{muonblue} LoRA-Muon-PE & 59.24{\scriptsize$\pm$0.66} & 28.86{\scriptsize$\pm$1.75} & 67.36{\scriptsize$\pm$0.12} & 51.82 \\
\bottomrule
\end{tabular}
\end{table*}

\subsection{Natural Language Understanding}
\label{subsec:nlu}

\textbf{Setup.}
Following prior work~\citep{wang2024loraga,zhang2025loraone}, we evaluate on T5-Base~\citep{raffel2020t5} fine-tuned on GLUE~\citep{wang2018glue} tasks (CoLA, MNLI, MRPC, QNLI, SST-2).
T5 was pretrained with Adafactor~\citep{shazeer2018adafactor}, a memory-efficient variant of Adam.
We apply LoRA with rank $r=8$ and $\alpha=16$ to all linear layers
except embeddings and the language model head.
We train for 5 epochs on MRPC and CoLA, and 3 epochs on SST-2, QNLI, and MNLI. We perform a learning rate sweep for each method on each dataset and report results averaged over 3 seeds.
Full experimental details are provided in Appendix~\ref{app:nlu_details}.

\textbf{Results.}
Table~\ref{tab:nlu} presents the results.
As all methods are well-tuned and trained to near-convergence, absolute differences are modest.
Nevertheless, for full fine-tuning, Muon still underperforms Adam,
consistent with the optimizer mismatch phenomenon.
Under LoRA, however, the gap disappears:
LoRA-Muon slightly outperforms LoRA-Adam, and LoRA-Muon-PE (Muon with PE coefficients) achieves the highest average accuracy among all methods, surpassing even Full-Adam.
For full fine-tuning, PE also improves Muon, though a gap with Adam remains.
These results support our hypothesis: LoRA effectively mitigates optimizer mismatch, transforming Muon from underperforming Adam under full fine-tuning to matching or outperforming it under LoRA.

\begin{table*}[!ht]
\centering
\caption{Image classification results with CLIP ViT-B/32.
We compare the performance gap between Adam and Muon under full fine-tuning versus LoRA.
Results are accuracy (\%) reported as mean $\pm$ std over 3 seeds.
Best results are \underline{underlined}; best LoRA results are \textbf{bolded}.
\colorbox{muonblue}{Blue} rows indicate Muon-based methods.}
\label{tab:image}
\begin{tabular}{lcccccc|c}
\toprule
Method & StanfordCars & DTD & GTSRB & RESISC45 & SUN397 & SVHN & Average \\
\midrule
Full-Adam    & 78.02{\scriptsize$\pm$0.33} & \underline{75.49}{\scriptsize$\pm$0.57} & 98.85{\scriptsize$\pm$0.09} & \underline{95.10}{\scriptsize$\pm$0.14} & \underline{74.83}{\scriptsize$\pm$0.13} & 97.04{\scriptsize$\pm$0.12} & \underline{86.55}{\scriptsize$\pm$0.23} \\
\rowcolor{muonblue} Full-Muon    & \underline{79.41}{\scriptsize$\pm$0.74} & 72.39{\scriptsize$\pm$0.54} & 98.46{\scriptsize$\pm$0.15} & 94.58{\scriptsize$\pm$0.13} & 74.21{\scriptsize$\pm$0.25} & 97.25{\scriptsize$\pm$0.06} & 86.05{\scriptsize$\pm$0.31} \\
\rowcolor{muonblue} Full-Muon-PE & 78.31{\scriptsize$\pm$0.33} & 72.93{\scriptsize$\pm$0.11} & 98.55{\scriptsize$\pm$0.12} & 94.68{\scriptsize$\pm$0.31} & 73.83{\scriptsize$\pm$0.19} & \underline{97.34}{\scriptsize$\pm$0.08} & 85.94{\scriptsize$\pm$0.19} \\
\midrule
LoRA-Adam    & 72.50{\scriptsize$\pm$0.16} & \textbf{73.88}{\scriptsize$\pm$0.61} & 98.48{\scriptsize$\pm$0.10} & 94.41{\scriptsize$\pm$0.14} & \textbf{68.86}{\scriptsize$\pm$0.08} & 96.87{\scriptsize$\pm$0.09} & 84.17{\scriptsize$\pm$0.20} \\
\rowcolor{muonblue} LoRA-Muon    & 74.95{\scriptsize$\pm$0.27} & 73.24{\scriptsize$\pm$0.59} & 98.65{\scriptsize$\pm$0.12} & 94.97{\scriptsize$\pm$0.30} & 68.10{\scriptsize$\pm$0.28} & \textbf{96.96}{\scriptsize$\pm$0.08} & 84.48{\scriptsize$\pm$0.27} \\
\rowcolor{muonblue} LoRA-Muon-PE & \textbf{75.45}{\scriptsize$\pm$0.61} & 73.64{\scriptsize$\pm$1.10} & \underline{\textbf{98.91}}{\scriptsize$\pm$0.22} & \textbf{94.98}{\scriptsize$\pm$0.09} & 68.40{\scriptsize$\pm$0.08} & 96.91{\scriptsize$\pm$0.03} & \textbf{84.71}{\scriptsize$\pm$0.35} \\
\bottomrule
\end{tabular}
\end{table*}

\begin{figure*}[!ht]
\centering
\begin{subfigure}[t]{0.48\textwidth}
\centering
\includegraphics[width=\textwidth]{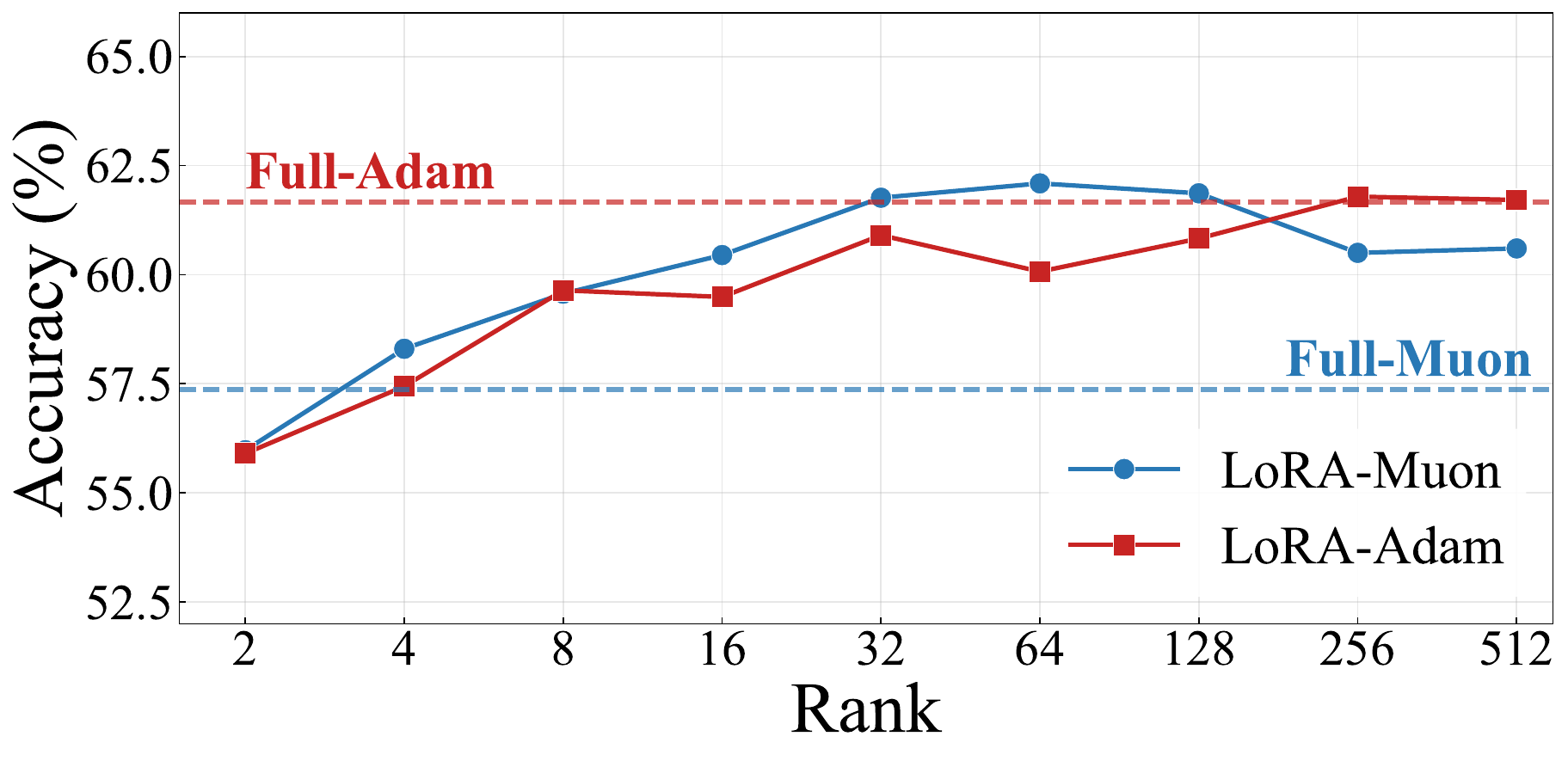}
\caption{Fine-tuned on MetaMath, evaluated on GSM8K.}
\label{fig:rank_study}
\end{subfigure}
\hfill
\begin{subfigure}[t]{0.48\textwidth}
\centering
\includegraphics[width=\textwidth]{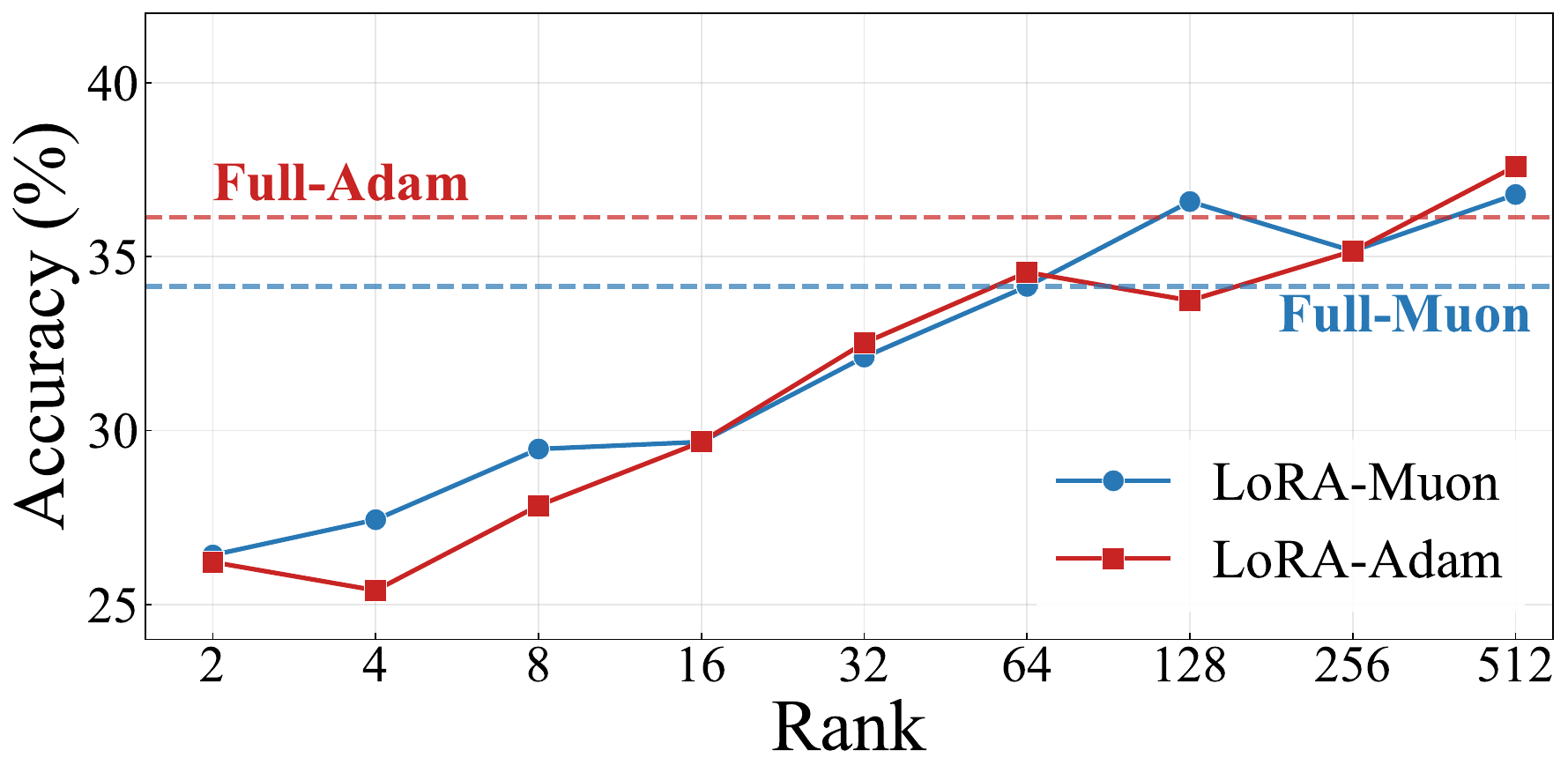}
\caption{Fine-tuned on Code-Feedback, evaluated on HumanEval.}
\label{fig:rank_study_code}
\end{subfigure}
\caption{Effect of LoRA rank on downstream performance when fine-tuning Llama 2-7B. Dashed lines indicate full fine-tuning performance. When mismatch is pronounced (a), LoRA-Muon outperforms LoRA-Adam at low to moderate ranks but degrades at high ranks as updates increasingly resemble full fine-tuning. When the mismatch is mild (b), LoRA-Muon performs comparably across all ranks.}
\label{fig:rank_study_combined}
\end{figure*}

\subsection{Natural Language Generation}
\label{subsec:nlg}

\textbf{Setup.}
Following prior work~\citep{wang2024loraga,zhang2025loraone}, we instruction-tune Llama 2-7B, an Adam-pretrained model, on three tasks:
math, code, and commonsense reasoning.
For math, we use a 100k subset of MetaMathQA~\citep{yu2023metamath}
bootstrapped from GSM8K~\citep{cobbe2021training},
and evaluate accuracy on the GSM8K test set.
For code, we use a 100k subset of Code-Feedback~\citep{zheng2024opencodeinterpreter},
and report Pass@1 on HumanEval~\citep{chen2021evaluating}.
For commonsense reasoning, we instruction-tune on a 52k subset of WizardLM~\citep{xu2024wizardlm},
and evaluate on commonsense reasoning benchmarks
(ARC~\citep{clark2018think}, HellaSwag~\citep{zellers2019hellaswag},
PIQA~\citep{bisk2020piqa}, WinoGrande~\citep{sakaguchi2021winogrande},
BoolQ~\citep{clark2019boolq}, OpenBookQA~\citep{mihaylov2018suit})
using lm-evaluation-harness~\citep{eval-harness}.
All models are trained for 1 epoch. For LoRA methods, we use rank $r=8$ and $\alpha=16$.
For HumanEval and GSM8K evaluation, we use greedy decoding.
We perform a learning rate sweep for each method and task and report results at the final checkpoint, averaged over 3 seeds.
Full experimental details are provided in Appendix~\ref{app:nlg_details}.

\textbf{Results.}
Table~\ref{tab:nlg} presents the results.
For full fine-tuning, Muon underperforms Adam, particularly on math, with a smaller gap on code and negligible differences on commonsense reasoning.
Under LoRA, the gap largely disappears:
LoRA-Muon matches LoRA-Adam on math and outperforms it on code and commonsense reasoning.
These results are consistent with our NLU findings, confirming that LoRA enables Muon to match or surpass Adam across different tasks and models.
We observe the same pattern on Llama 2-13B (Appendix~\ref{app:nlg_details}).
Interestingly, PE consistently improves full fine-tuning but slightly degrades LoRA performance, suggesting that the more accurate orthogonalization in PE does not necessarily benefit the LoRA setting.

\subsection{Image Classification}
\label{subsec:image}

\textbf{Setup.}
Following prior work~\citep{li2024flatlora,wang2024lorapro,he2025gora},
we fine-tune CLIP ViT-B/32~\citep{radford2021clipvitb32}, an Adam-pretrained model,
on six image classification tasks: StanfordCars~\citep{krause2013stanfordcars}, DTD~\citep{cimpoi2014dtd}, GTSRB~\citep{stallkamp2011gtsrb}, RESISC45~\citep{cheng2017resisc45}, SUN397~\citep{xiao2010sun397}, and SVHN~\citep{netzer2011svhn}.
We freeze the CLIP text tower and adapt the vision tower via full fine-tuning and LoRA with $r=8$ and $\alpha=16$.
We perform a learning rate sweep for each method, and report results averaged over 3 seeds.
Full experimental details are provided in Appendix~\ref{app:image_details}.

\textbf{Results.}
Table~\ref{tab:image} reports the results.
Unlike the language tasks, the full fine-tuning gap between Adam and Muon is small in this vision setting.
Under LoRA, Muon and Muon-PE both outperform Adam on average, suggesting that LoRA's mismatch mitigation effect extends to vision tasks.

\textbf{Statistical significance across tasks.}
We assess the statistical significance of LoRA's mismatch mitigation by computing the reduction in the Adam--Muon performance gap when switching from full fine-tuning to LoRA for each task, and aggregating across all tasks in Tables~\ref{tab:nlu}--\ref{tab:image} using random-effects meta-analysis.
The pooled gap reduction is 0.72\% (95\% CI: [0.41, 1.04], $p<0.001$) for Muon and 0.83\% (95\% CI: [0.45, 1.20], $p<0.001$) for Muon-PE, confirming that LoRA significantly mitigates the optimizer mismatch across tasks.

\begin{table*}[!ht]
\centering
\caption{Commonsense reasoning performance after fine-tuning Llama 2-7B on MetaMath. Lower performance indicates more severe catastrophic forgetting.
Results are averaged over 3 seeds.
Pretrained baseline is in \textbf{bold}; best full fine-tuning results are \fbox{boxed}; best LoRA results are \underline{underlined}.
\colorbox{muonblue}{Blue} rows indicate Muon-based methods.}
\label{tab:forgetting_commonsense}
\begin{tabular}{lcccccc}
\toprule
Method & ARC-c & ARC-e & HellaSwag & OBQA & PIQA & Average \\
\midrule
Pretrained & \textbf{45.0} & \textbf{73.8} & \textbf{76.2} & \textbf{44.0} & \textbf{78.7} & \textbf{63.5} \\
\midrule
Full-Adam & \fbox{36.3} & \fbox{56.7} & \fbox{72.3} & \fbox{42.2} & \fbox{76.6} & \fbox{56.8} \\
\rowcolor{muonblue} Full-Muon & 35.0 & 56.2 & 69.5 & 40.2 & 75.9 & 55.4 \\
\rowcolor{muonblue} Full-Muon-PE & 33.7 & 54.3 & 68.2 & 39.1 & 74.9 & 54.1 \\
\midrule
LoRA-Adam & \underline{37.3} & \underline{56.0} & \underline{74.2} & \underline{43.9} & \underline{77.0} & \underline{57.7} \\
\rowcolor{muonblue} LoRA-Muon & 36.7 & 54.8 & 73.2 & 43.3 & 76.7 & 56.9 \\
\rowcolor{muonblue} LoRA-Muon-PE & 37.1 & 54.3 & 72.5 & 41.7 & 76.3 & 56.4 \\
\bottomrule
\end{tabular}
\end{table*}

\begin{figure*}[!ht]
\begin{minipage}[t]{0.48\textwidth}
\centering
\includegraphics[width=\textwidth]{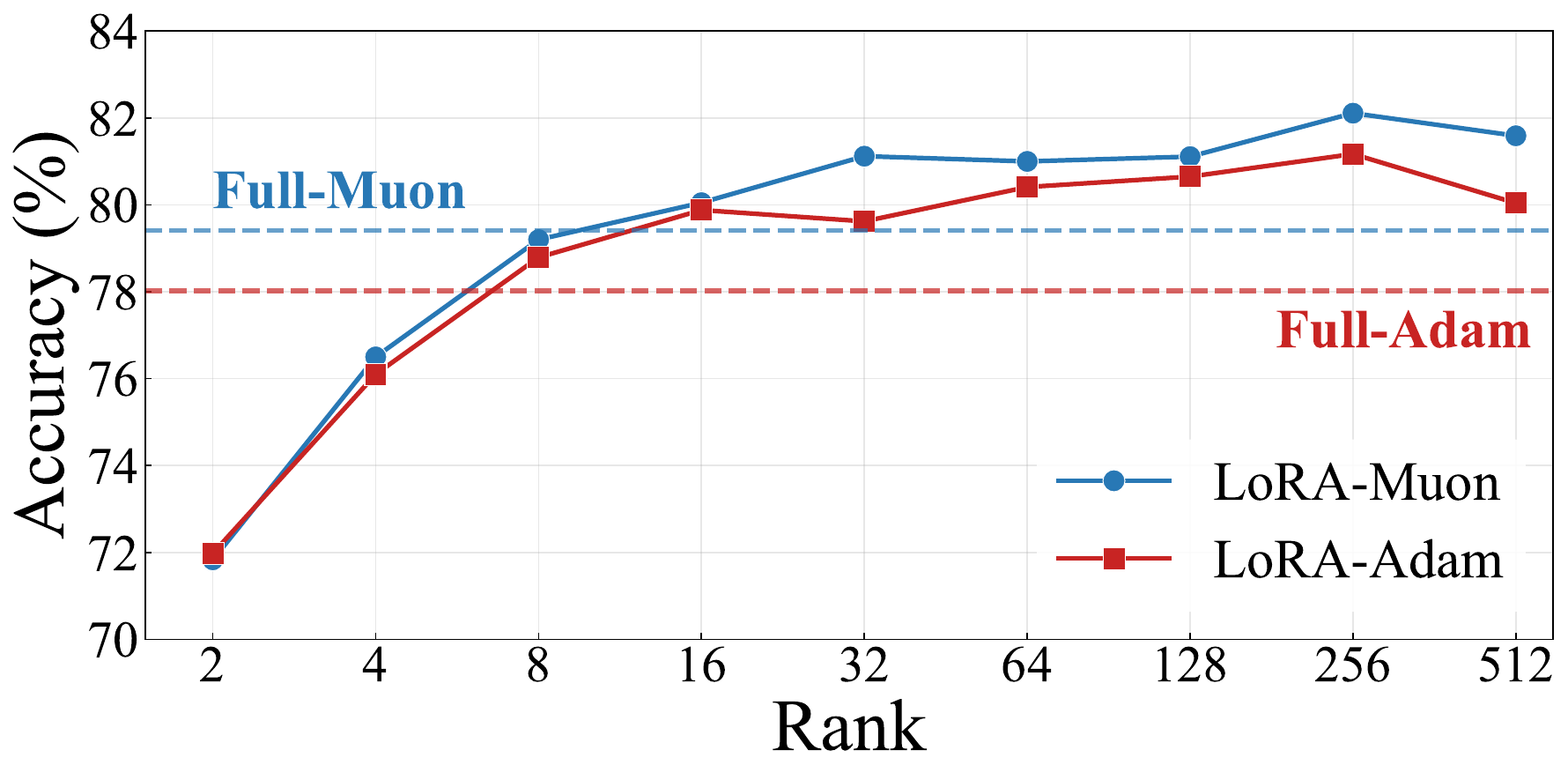}
\captionof{figure}{Effect of LoRA rank on accuracy when fine-tuning CLIP ViT-B/32 on StanfordCars. LoRA-Muon outperforms LoRA-Adam across nearly all ranks ($r \geq 4$). Dashed lines indicate full fine-tuning performance.}
\label{fig:rank_study_vision}
\end{minipage}
\hfill
\begin{minipage}[t]{0.48\textwidth}
\centering
\includegraphics[width=\textwidth]{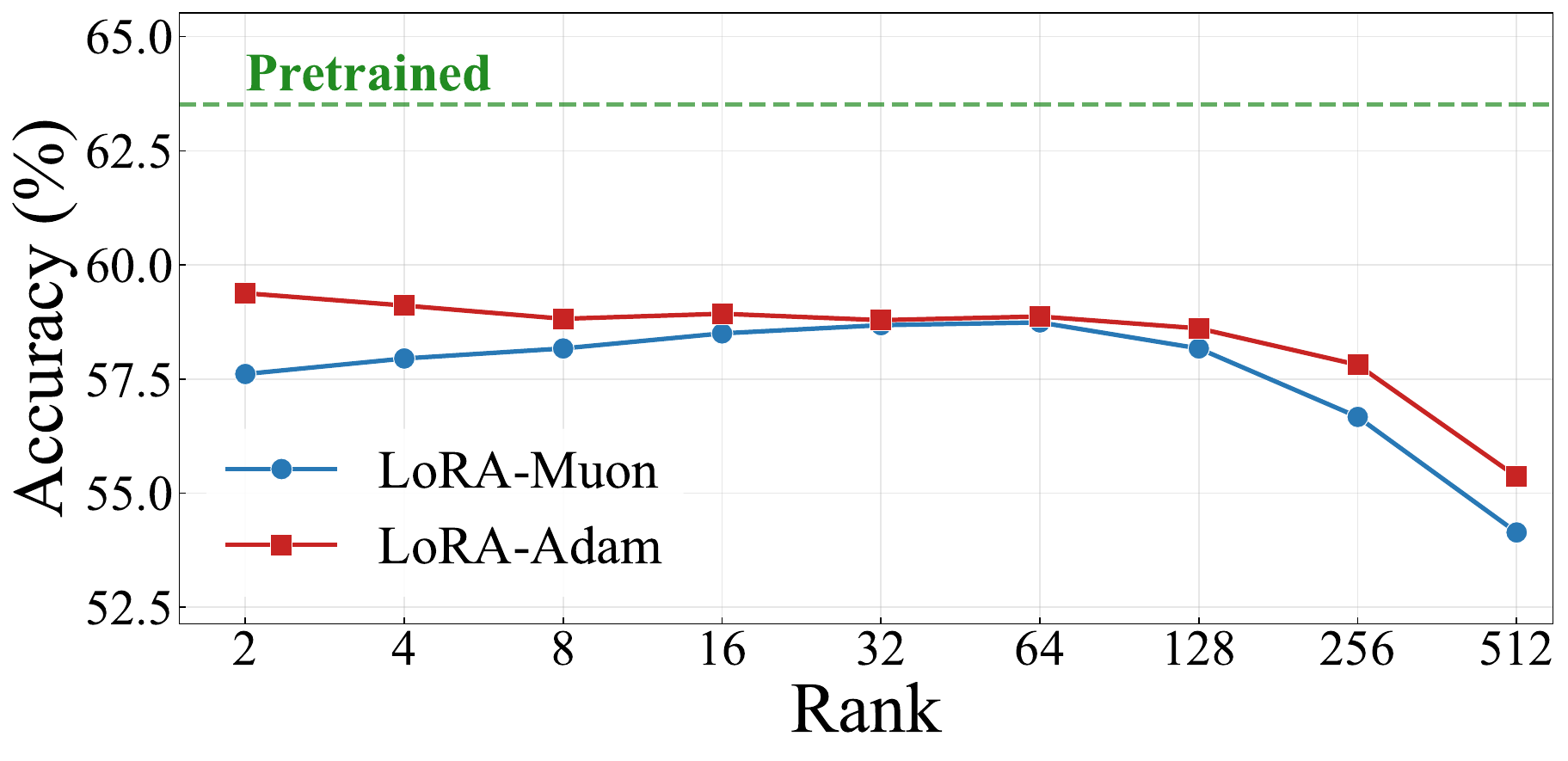}
\captionof{figure}{Effect of LoRA rank on catastrophic forgetting when fine-tuning Llama 2-7B on MetaMath, measured by commonsense reasoning accuracy. Lower accuracy indicates more severe forgetting.}
\label{fig:forgetting_commonsense_rank}
\end{minipage}
\end{figure*}

\subsection{Effect of LoRA Rank}
\label{subsec:rank}

Our analysis in Section~\ref{sec:analysis} suggests that LoRA mitigates optimizer mismatch by limiting updates to the pretrained weights.
A natural prediction is that this benefit may diminish at higher ranks, as LoRA increasingly resembles full fine-tuning.
We test this on both language and vision tasks, conducting rank studies on MetaMath, Code-Feedback, and StanfordCars.

\textbf{Setup.}
We vary the LoRA rank across $r \in \{2, 4, 8, 16, 32, 64, 128, 256, 512\}$ with $\alpha = 2r$.
For each rank, we perform a learning rate sweep and report the best result averaged over 3 seeds.
Other settings follow Sections~\ref{subsec:nlg} and~\ref{subsec:image}.

\textbf{Results.}
Figures~\ref{fig:rank_study}, \ref{fig:rank_study_code}, and \ref{fig:rank_study_vision} present the results.
On MetaMath (Figure~\ref{fig:rank_study}), LoRA-Muon matches or outperforms LoRA-Adam at low to moderate ranks. At higher ranks, however, LoRA-Muon begins to degrade while LoRA-Adam continues to improve---consistent with our hypothesis, as higher-rank updates increasingly resemble full fine-tuning where mismatch is most severe.
On Code-Feedback (Figure~\ref{fig:rank_study_code}), where the mismatch is milder, LoRA-Muon performs comparably to LoRA-Adam across all ranks.
On StanfordCars (Figure~\ref{fig:rank_study_vision}), where the mismatch is also mild, LoRA-Muon outperforms LoRA-Adam across nearly all ranks, with the advantage widening at higher ranks.

These results support our hypothesis: constraining the extent of updates mitigates optimizer mismatch.
When mismatch is pronounced, this constraint is beneficial at low ranks but becomes insufficient at high ranks as the updates increasingly resemble full fine-tuning.
When the mismatch is mild, LoRA-Muon performs well across all ranks, as Muon can leverage its faster convergence~\citep{liu2025muon}.

\begin{table*}[!ht]
\centering
\caption{Comparison of LoRA variants on GLUE benchmark with T5-Base.
All methods use the same training setup and learning rate sweep.
Results are test accuracy (\%) averaged over 3 seeds.
Best results are \textbf{bolded}.
\colorbox{muonblue}{Blue} rows indicate Muon-based methods.}
\label{tab:variants}
\begin{tabular}{lccccc|c}
\toprule
Method & CoLA & MNLI & MRPC & QNLI & SST-2 & Average \\
\midrule
LoRA-Adam & 82.81 & 86.12 & 88.24 & 93.22 & 94.27 & 88.93 \\
\rowcolor{muonblue} LoRA-Muon-PE & 83.00 & 86.43 & 88.48 & 93.14 & \textbf{94.95} & \textbf{89.20} \\
\midrule
rsLoRA-Adam~\citep{kalajdzievski2023rank} & 83.09 & 86.12 & 88.48 & 93.11 & 94.76 & 89.11 \\
LoRA-One-Adam~\citep{zhang2025loraone} & \textbf{83.32} & 86.16 & 88.48 & 93.25 & 94.61 & 89.16 \\
PiSSA-Adam~\citep{meng2024pissa} & 82.81 & 86.26 & 88.32 & 93.01 & 94.38 & 88.95 \\
\rowcolor{muonblue} rsLoRA-Muon-PE & 83.22 & 86.20 & 88.48 & 93.16 & 94.53 & 89.12 \\
\rowcolor{muonblue} LoRA-One-Muon-PE & 83.19 & \textbf{86.49} & 88.32 & 93.15 & 94.30 & 89.09 \\
\rowcolor{muonblue} PiSSA-Muon-PE & 82.84 & 86.19 & \textbf{89.79} & 92.79 & 94.00 & 89.12 \\
\midrule
AdaLoRA-Adam~\citep{zhang2023adalora} & 82.49 & 85.68 & 86.48 & \textbf{93.29} & 94.38 & 88.46 \\
LoRA-Pro-Adam~\citep{wang2024lorapro} & 82.81 & 86.23 & 88.56 & 93.15 & 94.80 & 89.11 \\
LoRA-RITE-Adam~\citep{yen2024lora} & 83.09 & 86.47 & 87.99 & 93.17 & 94.27 & 89.00 \\
DoRA-Adam~\citep{liu2024dora} & 82.74 & 86.28 & 88.24 & 93.21 & 94.57 & 89.01 \\
\bottomrule
\end{tabular}
\end{table*}

\subsection{Measuring Catastrophic Forgetting}
\label{subsec:forgetting}

In Section~\ref{sec:analysis}, we hypothesized that optimizer mismatch degrades performance by disrupting pretrained knowledge.
Catastrophic forgetting~\citep{mccloskey1989catastrophic,french1999catastrophic} provides a direct way to test this hypothesis.
Following~\citet{kotha2024understanding}, we measure forgetting by fine-tuning on MetaMath and evaluating on commonsense reasoning benchmarks.
We use the Llama 2-7B models fine-tuned in Section~\ref{subsec:nlg} and exclude tasks where forgetting is negligible under full fine-tuning, as they are not informative for our analysis (see Appendix~\ref{app:forgetting_details} for details).

\textbf{Results.}
Table~\ref{tab:forgetting_commonsense} shows commonsense benchmark performance after math fine-tuning.
Full fine-tuning causes substantial forgetting for both optimizers, but Full-Muon forgets more than Full-Adam despite achieving worse fine-tuning performance (Table~\ref{tab:nlg}).
This suggests that the mismatch does not simply lead to weaker learning but actively disrupts pretrained knowledge.
LoRA methods preserve pretrained knowledge better, with LoRA-Muon showing less forgetting than Full-Muon, supporting our hypothesis that constraining updates helps mitigate optimizer mismatch.

\textbf{Effect of Rank on Forgetting.}
To examine how rank affects forgetting, we vary the LoRA rank while fixing the learning rate for fair comparison (Figure~\ref{fig:forgetting_commonsense_rank}).
For LoRA-Adam, forgetting increases steadily with rank, consistent with the observation that higher-rank LoRA learns more but also forgets more~\citep{biderman2024lora}.
LoRA-Muon, however, initially shows \emph{decreasing} forgetting as rank grows, narrowing the gap with LoRA-Adam until it nearly vanishes at moderate ranks ($r=32$--$64$).
This suggests that at low ranks, limited capacity leads to greater disruption of pretrained knowledge, whereas higher ranks provide more room to learn without as much forgetting.
Beyond moderate ranks, both methods forget rapidly, but LoRA-Muon forgets faster as the effect of optimizer mismatch becomes more pronounced, consistent with Section~\ref{subsec:rank}.

\textbf{Weight Distance from Pretrained Model.}
The forgetting analysis above measures disruption through downstream task performance.
We complement this with a weight-space perspective: measuring the L2 and cosine distance between fine-tuned and pretrained weights for the models in Table~\ref{tab:nlg} (full results in Appendix Table~\ref{tab:weight_distance}).
Under full fine-tuning, Muon's cosine distance is 5.6--7.4$\times$ larger than Adam's on Math/Code, confirming a more aggressive departure from the pretrained structure.
Under LoRA, this reverses: Muon's cosine distance is only 0.2--0.8$\times$ of Adam's.
Notably, on Commonsense---where mismatch is mild, and Full-Muon already matches Full-Adam (Table~\ref{tab:nlg})---Muon's distance is already smaller under full fine-tuning (0.65--0.75$\times$), and LoRA further reduces it to 0.15--0.18$\times$.
This correlation between mismatch severity and weight displacement suggests that LoRA specifically preserves the pretrained structure where mismatch would otherwise disrupt it.
We also provide a spectral analysis of the LoRA matrices themselves during fine-tuning (Appendix~\ref{app:lora_spectral}), showing that Muon's implicit bias toward uniform singular values manifests in the adapter weights.

\subsection{Investigating LoRA Variants}
\label{subsec:variants}

Having shown that LoRA enables Muon to be effective for fine-tuning,
we examine whether existing LoRA variants, typically evaluated with Adam, are also compatible with Muon.
We evaluate on our NLU benchmark (Section~\ref{subsec:nlu}) using identical training settings
and the same learning rate sweep for each method.
As PE consistently improves Muon-based methods on this benchmark, we report results with PE enabled for all Muon variants.

We first consider optimizer-agnostic variants that can be directly applied to LoRA-Muon:
rsLoRA~\citep{kalajdzievski2023rank}, which modifies the LoRA scaling $\tilde{\alpha}$ from $\alpha/r$ to $\alpha/\sqrt{r}$ to stabilize training across different ranks;
LoRA-One~\citep{zhang2025loraone}, which initializes the LoRA matrices using a one-step gradient approximation to accelerate early convergence;
and PiSSA~\citep{meng2024pissa}, which initializes with the principal singular components of the pretrained weights to better preserve pretrained capabilities.
We also compare against algorithm-modifying variants---AdaLoRA~\citep{zhang2023adalora}, LoRA-Pro~\citep{wang2024lorapro}, LoRA-RITE~\citep{yen2024lora}, and DoRA~\citep{liu2024dora}---which modify the training algorithm and are not directly compatible with Muon.

As shown in Table~\ref{tab:variants}, the optimizer-agnostic variants improve LoRA-Adam, but applying them to LoRA-Muon-PE does not yield further improvement.
This is consistent with our analysis in previous sections: rsLoRA increases the effective scaling $\tilde{\alpha}$, amplifying update magnitude and thus the mismatch effect (Section~\ref{sec:analysis}), while LoRA-One and PiSSA are designed to make LoRA updates closer to full fine-tuning, where mismatch is more severe (Section~\ref{subsec:rank}).
These results suggest that existing LoRA variants may require adaptation to work effectively with Muon.
Overall, LoRA-Muon-PE achieves the best average performance among all methods, outperforming even the algorithm-modifying variants that require additional memory and computation (e.g., LoRA-Pro, LoRA-RITE).
This highlights Muon's potential as a competitive optimizer for LoRA fine-tuning.

\subsection{Computational Efficiency}
\label{subsec:efficiency}

Tables~\ref{tab:wallclock_llama} and~\ref{tab:wallclock_clip} in the Appendix report wall-clock training time for Llama 2-7B and CLIP ViT-B/32, respectively.
Under LoRA, where both optimizers use the same DDP framework, LoRA-Muon is only 1.1--1.2$\times$ slower than LoRA-Adam on Llama 2-7B and 1.0--1.1$\times$ on CLIP, indicating modest per-step overhead from the Newton-Schulz iteration.
For full fine-tuning on Llama, the apparent gap is larger (2.3--2.9$\times$), but this comparison is confounded by different distributed strategies: as noted at the start of Section~\ref{sec:experiments}, Full-Adam cannot fit in memory with DDP and requires DeepSpeed ZeRO-2, whereas Muon's lower memory footprint enables standard DDP.
On CLIP, where both use single-GPU training and the comparison is direct, Full-Muon is only 1.0--1.2$\times$ slower.
Indeed, Muon stores only one momentum buffer versus Adam's two (momentum + second moment), saving 50\% of optimizer states (${\sim}$14GB for Llama 2-7B in FP32)---this memory efficiency is itself a practical advantage.
We note that \citet{liu2025muon} find Muon ${\sim}2\times$ more efficient than Adam under compute-optimal pretraining, and as Muon's ecosystem matures, we expect the practical overhead to diminish further.

\section{Discussion}
\label{sec:discussion}

We investigated the optimizer mismatch problem when switching between Adam and Muon across pretraining and fine-tuning, linking it to the distinct implicit biases of the two optimizers and showing that it degrades performance by disrupting pretrained knowledge.
This insight led us to show that LoRA mitigates the issue, enabling LoRA-Muon to match or outperform LoRA-Adam across language and vision tasks.
Studies on LoRA rank, catastrophic forgetting, and LoRA variants further supported our hypothesis.
These findings shed light on how to leverage Muon's efficiency for fine-tuning without suffering from optimizer mismatch.

\textbf{Practical Recommendations.}
For practitioners fine-tuning Adam-pretrained models with Muon, our results suggest:
(1) under LoRA, Muon can serve as a drop-in replacement for Adam without performance loss, while saving 50\% optimizer-state memory;
(2) tune the learning rate separately for Muon, as the optimal learning rate often differs from Adam's (Figure~\ref{fig:lr_sweep});
(3) prefer moderate LoRA ranks that balance expressiveness against mismatch severity (Section~\ref{subsec:rank});
(4) do not assume that LoRA variants optimized for Adam transfer directly to Muon (Section~\ref{subsec:variants}).

\textbf{Limitations and Future Work.}
While we empirically show that mismatched implicit biases disrupt pretrained knowledge, a theoretical characterization remains open; it may also be possible to reduce this structural gap before fine-tuning through specialized initialization or warmup.
Due to resource constraints, our Muon-pretrained experiments are limited to NanoChat (561M); for Adam-pretrained models, our main experiments use 7B with preliminary results on 13B (Appendix~\ref{app:nlg_details}).
Our experiments show that mismatch severity varies across tasks; understanding what factors determine this remains an open question.
Beyond LoRA, our hypothesis suggests that other techniques for constraining updates---such as regularization or methods to mitigate catastrophic forgetting---may also help.
Additionally, existing LoRA variants do not benefit LoRA-Muon, suggesting the need for Muon-specific adaptations.

\section*{Impact Statement}
This paper presents work aimed at advancing the field of Machine Learning. There are many potential societal consequences of our work, none of which we feel must be specifically highlighted here.

\bibliography{references}

@misc{jordan2024muon,
  author       = {Keller Jordan and Yuchen Jin and Vlado Boza and Jiacheng You and
                  Franz Cesista and Laker Newhouse and Jeremy Bernstein},
  title        = {Muon: An optimizer for hidden layers in neural networks},
  year         = {2024},
  url          = {https://kellerjordan.github.io/posts/muon/}
}

@inproceedings{kingma2014adam,
    author = {Diederik P. Kingma and
                  Jimmy Ba},
    title = {Adam: {A} Method for Stochastic Optimization},
    booktitle = {International Conference on Learning Representations},
    year = {2015},
}

@inproceedings{loshchilov2017decoupled,
    author = {Ilya Loshchilov and
                  Frank Hutter},
    title = {Decoupled Weight Decay Regularization},
    booktitle = {International Conference on Learning Representations},
    year = {2019},
}

@article{liu2025muon,
    author = {Liu, Jingyuan and Su, Jianlin and Yao, Xingcheng and Jiang, Zhejun and Lai, Guokun and Du, Yulun and Qin, Yidao and Xu, Weixin and Lu, Enzhe and Yan, Junjie and others},
    title = {Muon is Scalable for LLM Training},
    journal = {arXiv preprint arXiv:2502.16982},
    year = {2025}
}

@article{shah2025practical,
  title={Practical efficiency of muon for pretraining},
  author={Shah, Ishaan and Polloreno, Anthony M and Stratos, Karl and Monk, Philip and Chaluvaraju, Adarsh and Hojel, Andrew and Ma, Andrew and Thomas, Anil and Tanwer, Ashish and Shah, Darsh J and others},
  journal={arXiv preprint arXiv:2505.02222},
  year={2025}
}

@article{team2025kimik2,
    author = {Kimi Team and Bai, Yifan and Bao, Yiping and Chen, Guanduo and Chen, Jiahao and Chen, Ningxin and Chen, Ruijue and Chen, Yanru and Chen, Yuankun and Chen, Yutian and others},
    title = {Kimi {K2:} Open Agentic Intelligence},
    journal={arXiv preprint arXiv:2507.20534},
    year={2025}
}

@article{zeng2025glm,
  title={Glm-4.5: Agentic, reasoning, and coding (arc) foundation models},
  author={Zeng, Aohan and Lv, Xin and Zheng, Qinkai and Hou, Zhenyu and Chen, Bin and Xie, Chengxing and Wang, Cunxiang and Yin, Da and Zeng, Hao and Zhang, Jiajie and others},
  journal={arXiv preprint arXiv:2508.06471},
  year={2025}
}

@misc{karpathy2024nanochat,
  author = {Andrej Karpathy},
  title = {nanochat: The best {ChatGPT} that {\$}100 can buy},
  year = {2025},
  publisher = {GitHub},
  url = {https://github.com/karpathy/nanochat}
}

@inproceedings{hu2021lora,
    author = {Hu, Edward J and Shen, Yelong and Wallis, Phillip and Allen-Zhu, Zeyuan and Li, Yuanzhi and Wang, Shean and Wang, Lu and Chen, Weizhu and others},
    title = {Lo{RA}: Low-Rank Adaptation of Large Language Models},
    booktitle = {International Conference on Learning Representations},
    year = {2022},
}

@incollection{mccloskey1989catastrophic,
  title={Catastrophic interference in connectionist networks: The sequential learning problem},
  author={McCloskey, Michael and Cohen, Neal J},
  booktitle={Psychology of learning and motivation},
  volume={24},
  pages={109--165},
  year={1989},
  publisher={Elsevier}
}

@article{french1999catastrophic,
  title={Catastrophic forgetting in connectionist networks},
  author={French, Robert M},
  journal={Trends in cognitive sciences},
  volume={3},
  number={4},
  pages={128--135},
  year={1999},
  publisher={Elsevier}
}

@inproceedings{amsel2025polar,
    author = {Noah Amsel and David Persson and Christopher Musco and Robert M. Gower},
    title = {The Polar Express: Optimal Matrix Sign Methods and Their Application to the Muon Algorithm},
    booktitle = {International Conference on Learning Representations},
    year = {2026}
}

@article{kalajdzievski2023rank,
  title={A rank stabilization scaling factor for fine-tuning with lora},
  author={Kalajdzievski, Damjan},
  journal={arXiv preprint arXiv:2312.03732},
  year={2023}
}

@inproceedings{
    zhang2025loraone,
    title={Lo{RA}-{One}: One-Step Full Gradient Could Suffice for Fine-Tuning Large Language Models, Provably and Efficiently},
    author={Yuanhe Zhang and Fanghui Liu and Yudong Chen},
    booktitle={Proceedings of the International Conference on Machine Learning},
    year={2025},
}

@inproceedings{meng2024pissa,
  title={Pissa: Principal singular values and singular vectors adaptation of large language models},
  author={Meng, Fanxu and Wang, Zhaohui and Zhang, Muhan},
  booktitle ={Advances in Neural Information Processing Systems},
  year={2024}
}

@article{biderman2024lora,
  title={{LoRA} Learns Less and Forgets Less},
  author={Biderman, Dan and Portes, Jacob and Ortiz, Jose Javier Gonzalez and Paul, Mansheej and Greengard, Philip and Jennings, Connor and King, Daniel and Havens, Sam and Chiley, Vitaliy and Frankle, Jonathan and others},
  journal={Transactions on Machine Learning Research},
  year={2024}
}

@inproceedings{penedo2024fineweb,
    author = {Penedo, Guilherme and Kydl\'{\i}\v{c}ek, Hynek and Allal, Loubna Ben and Lozhkov, Anton and Mitchell, Margaret and Raffel, Colin and Von Werra, Leandro and Wolf, Thomas},
    title = {The {FineWeb} datasets: decanting the web for the finest text data at scale},
    booktitle = {Advances in Neural Information Processing Systems},
    year = {2024}
}

@inproceedings{hoffmann2022training,
    author = {Hoffmann, Jordan and Borgeaud, Sebastian and Mensch, Arthur and Buchatskaya, Elena and Cai, Trevor and Rutherford, Eliza and Casas, Diego de Las and Hendricks, Lisa Anne and Welbl, Johannes and Clark, Aidan and others},
    title = {Training compute-optimal large language models},
    booktitle = {Advances in Neural Information Processing Systems},
    year = {2022}
}

@inproceedings{li2024dclm,
    author = {Li, Jeffrey and Fang, Alex and Smyrnis, Georgios and Ivgi, Maor and Jordan, Matt and Gadre, Samir Yitzhak and Bansal, Hritik and Guha, Etash and Keh, Sedrick Scott and Arora, Kushal and others},
    title = {{DataComp-LM}: in search of the next generation of training sets for language models},
    booktitle = {Advances in Neural Information Processing Systems},
    year = {2024}
}

@inproceedings{
    merity2017pointer,
    title={Pointer Sentinel Mixture Models},
    author={Stephen Merity and Caiming Xiong and James Bradbury and Richard Socher},
    booktitle={International Conference on Learning Representations},
    year={2017},
}

@article{bernstein2024old,
  title={Old optimizer, new norm: An anthology},
  author={Bernstein, Jeremy and Newhouse, Laker},
  journal={arXiv preprint arXiv:2409.20325},
  year={2024}
}

@inproceedings{
    zhang2024implicit,
    title={The Implicit Bias of {Adam} on Separable Data},
    author={Chenyang Zhang and Difan Zou and Yuan Cao},
    booktitle={Advances in Neural Information Processing Systems},
    year={2024},
}

@inproceedings{
    fan2025implicit,
    title={Implicit Bias of Spectral Descent and Muon on Multiclass Separable Data},
    author={Chen Fan and Mark Schmidt and Christos Thrampoulidis},
    booktitle={Advances in Neural Information Processing Systems},
    year={2025}
}

@article{chen2025muon,
  title={Muon Optimizes Under Spectral Norm Constraints},
  author={Chen, Lizhang and Li, Jonathan and Liu, Qiang},
  journal={Transactions on Machine Learning Research},
  year={2026}
}

@article{kovalev2025understanding,
  title={Understanding gradient orthogonalization for deep learning via non-euclidean trust-region optimization},
  author={Kovalev, Dmitry},
  journal={arXiv preprint arXiv:2503.12645},
  year={2025}
}

@inproceedings{balles2018dissecting,
  title={Dissecting {A}dam: The sign, magnitude and variance of stochastic gradients},
  author={Balles, Lukas and Hennig, Philipp},
  booktitle={Proceedings of the International Conference on Machine Learning},
  year={2018}
}

@inproceedings{bernstein2018signsgd,
  title={{signSGD}: Compressed optimisation for non-convex problems},
  author={Bernstein, Jeremy and Wang, Yu-Xiang and Azizzadenesheli, Kamyar and Anandkumar, Animashree},
  booktitle={Proceedings of the International conference on machine learning},
  year={2018}
}

@inproceedings{rajbhandari2020zero,
  author={Samyam Rajbhandari and Jeff Rasley and Olatunji Ruwase and Yuxiong He},
  title={{ZeRO}: memory optimizations toward training trillion parameter models},
  year={2020},
  booktitle={Proceedings of the International Conference for High Performance Computing, Networking, Storage and Analysis}
}

@article{ahn2025dion,
  title={Dion: Distributed Orthonormalized Updates},
  author={Ahn, Kwangjun and Xu, Byron and Abreu, Natalie and Langford, John},
  journal={arXiv preprint: 2504.05295},
  year={2025}
}

@article{li2025normuon,
  title={{NorMuon}: Making Muon more efficient and scalable},
  author={Li, Zichong and Liu, Liming and Liang, Chen and Chen, Weizhu and Zhao, Tuo},
  journal={arXiv preprint arXiv:2510.05491},
  year={2025}
}

@article{touvron2023llama,
  title={Llama 2: Open foundation and fine-tuned chat models},
  author={Touvron, Hugo and Martin, Louis and Stone, Kevin and Albert, Peter and Almahairi, Amjad and Babaei, Yasmine and Bashlykov, Nikolay and Batra, Soumya and Bhargava, Prajjwal and Bhosale, Shruti and others},
  journal={arXiv preprint arXiv:2307.09288},
  year={2023}
}

@inproceedings{wang2024loraga,
    title={Lo{RA}-{GA}: Low-Rank Adaptation with Gradient Approximation},
    author={Shaowen Wang and Linxi Yu and Jian Li},
    booktitle={Advances in Neural Information Processing Systems},
    year={2024}
}

@article{raffel2020t5,
  author  = {Colin Raffel and Noam Shazeer and Adam Roberts and Katherine Lee and Sharan Narang and Michael Matena and Yanqi Zhou and Wei Li and Peter J. Liu},
  title   = {Exploring the Limits of Transfer Learning with a Unified Text-to-Text Transformer},
  journal = {Journal of Machine Learning Research},
  year    = {2020}
}

@inproceedings{wang2018glue,
  author={Alex Wang and Amanpreet Singh and Julian Michael and Felix Hill and Omer Levy and Samuel R. Bowman},
  title={{GLUE}: A Multi-Task Benchmark and Analysis Platform for Natural Language Understanding},
  year={2019},
  booktitle={International Conference on Learning Representations},
}

@inproceedings{shazeer2018adafactor,
  author={Noam Shazeer and Mitchell Stern},
  title={Adafactor: Adaptive Learning Rates with Sublinear Memory Cost},
  year={2018},
  booktitle={Proceedings of the International Conference on Machine Learning},
}

@inproceedings{yu2023metamath,
 author = {Yu, Longhui and JIANG, Weisen and Shi, Han and YU, Jincheng and Liu, Zhengying and Zhang, Yu and Kwok, James and Li, Zhenguo and Weller, Adrian and Liu, Weiyang},
 booktitle = {International Conference on Learning Representations},
 title = {{MetaMath}: Bootstrap Your Own Mathematical Questions for Large Language Models},
 year = {2024}
}

@article{cobbe2021training,
  title={Training Verifiers to Solve Math Word Problems},
  author={Cobbe, Karl and Kosaraju, Vineet and Bavarian, Mohammad and Chen, Mark and Jun, Heewoo and Kaiser, Lukasz and Plappert, Matthias and Tworek, Jerry and Hilton, Jacob and Nakano, Reiichiro and Hesse, Christopher and Schulman, John},
  journal={arXiv preprint arXiv:2110.14168},
  year={2021}
}

@inproceedings{zheng2024opencodeinterpreter,
  author={Tianyu Zheng and Ge Zhang and Tianhao Shen and Xueling Liu and Bill Yuchen Lin and Jie Fu and Wenhu Chen and Xiang Yue},
  title={{O}pen{C}ode{I}nterpreter: Integrating Code Generation with Execution and Refinement},
  year={2024},
  booktitle={Findings of the Association for Computational Linguistics},
}

@article{chen2021evaluating,
  title={Evaluating Large Language Models Trained on Code},
  author={Mark Chen and Jerry Tworek and Heewoo Jun and Qiming Yuan and Henrique Ponde de Oliveira Pinto and Jared Kaplan and Harri Edwards and Yuri Burda and Nicholas Joseph and Greg Brockman and others},
  year={2021},
  journal={arXiv preprint arXiv:2107.03374}
}

@inproceedings{xu2024wizardlm,
    title={Wizard{LM}: Empowering Large Pre-Trained Language Models to Follow Complex Instructions},
    author={Can Xu and Qingfeng Sun and Kai Zheng and Xiubo Geng and Pu Zhao and Jiazhan Feng and Chongyang Tao and Qingwei Lin and Daxin Jiang},
    booktitle={International Conference on Learning Representations},
    year={2024}
}

@article{clark2018think,
  title={Think you have solved question answering? try arc, the ai2 reasoning challenge},
  author={Clark, Peter and Cowhey, Isaac and Etzioni, Oren and Khot, Tushar and Sabharwal, Ashish and Schoenick, Carissa and Tafjord, Oyvind},
  journal={arXiv preprint arXiv:1803.05457},
  year={2018}
}

@inproceedings{zellers2019hellaswag,
  author={Rowan Zellers and Ari Holtzman and Yonatan Bisk and Ali Farhadi and Yejin Choi},
  title={{H}ella{S}wag: Can a Machine Really Finish Your Sentence?},
  year={2019},
  booktitle={Proceedings of the Association for Computational Linguistics},
}

@inproceedings{bisk2020piqa,
  author={Yonatan Bisk and Rowan Zellers and Ronan LeBras and Jianfeng Gao and Yejin Choi},
  title={{PIQA:} Reasoning about Physical Commonsense in Natural Language},
  year={2020},
  booktitle={Proceedings of the {AAAI} Conference on Artificial Intelligence}
}

@inproceedings{sakaguchi2021winogrande,
  author={Keisuke Sakaguchi and Ronan Le Bras and Chandra Bhagavatula and Yejin Choi},
  title={{WinoGrande}: An Adversarial Winograd Schema Challenge at Scale},
  year={2020},
  booktitle={Proceedings of the {AAAI} Conference on Artificial Intelligence},
}

@inproceedings{clark2019boolq,
  author={Christopher Clark and Kenton Lee and Ming-Wei Chang and Tom Kwiatkowski and Michael Collins and Kristina Toutanova},
  title={{BoolQ}: Exploring the Surprising Difficulty of Natural Yes/No Questions},
  year={2019},
  booktitle={Proceedings of the Conference of the North {A}merican Chapter of the Association for Computational Linguistics: Human Language Technologies}
}

@inproceedings{mihaylov2018suit,
  author={Todor Mihaylov and Peter Clark and Tushar Khot and Ashish Sabharwal},
  title={Can a Suit of Armor Conduct Electricity? A New Dataset for Open Book Question Answering},
  year={2018},
  booktitle={Proceedings of the Conference on Empirical Methods in Natural Language Processing},
}

@misc{eval-harness,
  author       = {Gao, Leo and Tow, Jonathan and Abbasi, Baber and Biderman, Stella and Black, Sid and DiPofi, Anthony and Foster, Charles and Golding, Laurence and Hsu, Jeffrey and Le Noac'h, Alain and others},
  title        = {The Language Model Evaluation Harness},
  month        = 07,
  year         = 2024,
  publisher    = {Zenodo},
  version      = {v0.4.3},
  doi          = {10.5281/zenodo.12608602},
  url          = {https://zenodo.org/records/12608602}
}

@inproceedings{li2024flatlora,
    title={{Flat-LoRA}: Low-Rank Adaptation over a Flat Loss Landscape},
    author={Tao Li and Zhengbao He and Yujun Li and Yasheng Wang and Lifeng Shang and Xiaolin Huang},
    booktitle={Proceedings of the International Conference on Machine Learning},
    year={2025}
}

@inproceedings{wang2024lorapro,
  title={{LoRA-Pro}: Are Low-Rank Adapters Properly Optimized?},
  author={Wang, Zhengbo and Liang, Jian and He, Ran and Wang, Zilei and Tan, Tieniu},
  booktitle={International Conference on Learning Representations},
  year={2025}
}

@inproceedings{radford2021clipvitb32,
  title = 	 {Learning Transferable Visual Models From Natural Language Supervision},
  author =       {Radford, Alec and Kim, Jong Wook and Hallacy, Chris and Ramesh, Aditya and Goh, Gabriel and Agarwal, Sandhini and Sastry, Girish and Askell, Amanda and Mishkin, Pamela and Clark, Jack and Krueger, Gretchen and Sutskever, Ilya},
  booktitle = {Proceedings of the International Conference on Machine Learning},
  year = 	 {2021},
}

@inproceedings{krause2013stanfordcars,
  title={3D Object Representations for Fine-Grained Categorization},
  author={Krause, Jonathan and Stark, Michael and Deng, Jia and Fei-Fei, Li},
  booktitle={Proceedings of the IEEE International Conference on Computer Vision Workshops},
  year={2013}
}

@inproceedings{cimpoi2014dtd,
    author = {Cimpoi, Mircea and Maji, Subhransu and Kokkinos, Iasonas and Mohamed, Sammy and Vedaldi, Andrea},
    title = {Describing Textures in the Wild},
    booktitle = {Proceedings of the IEEE Conference on Computer Vision and Pattern Recognition},
    year = {2014}
}

@inproceedings{stallkamp2011gtsrb,
  author={Stallkamp, Johannes and Schlipsing, Marc and Salmen, Jan and Igel, Christian},
  booktitle={The International Joint Conference on Neural Networks}, 
  title={ The German Traffic Sign Recognition Benchmark: A multi-class classification competition }, 
  year={2011},
}

@article{cheng2017resisc45,
  author={Cheng, Gong and Han, Junwei and Lu, Xiaoqiang},
  journal={Proceedings of the IEEE}, 
  title={Remote Sensing Image Scene Classification: Benchmark and State of the Art}, 
  year={2017},
  volume={105},
  number={10},
  pages={1865-1883},
  doi={10.1109/JPROC.2017.2675998}
}

@inproceedings{xiao2010sun397,
  author={Xiao, Jianxiong and Hays, James and Ehinger, Krista A. and Oliva, Aude and Torralba, Antonio},
  booktitle={2010 IEEE Computer Society Conference on Computer Vision and Pattern Recognition}, 
  title={{SUN} database: Large-scale scene recognition from abbey to zoo}, 
  year={2010},
  pages={3485-3492},
  doi={10.1109/CVPR.2010.5539970}
}

@inproceedings{netzer2011svhn,
  title={Reading digits in natural images with unsupervised feature learning},
  author={Netzer, Yuval and Wang, Tao and Coates, Adam and Bissacco, Alessandro and Wu, Baolin and Ng, Andrew Y and others},
  booktitle={NIPS workshop on deep learning and unsupervised feature learning},
  year={2011},
}

@inproceedings{
    kotha2024understanding,
    title={Understanding Catastrophic Forgetting in Language Models via Implicit Inference},
    author={Suhas Kotha and Jacob Mitchell Springer and Aditi Raghunathan},
    booktitle={International Conference on Learning Representations},
    year={2024},
}

@inproceedings{
    zhang2023adalora,
    title={Adaptive Budget Allocation for Parameter-Efficient Fine-Tuning },
    author={Qingru Zhang and Minshuo Chen and Alexander Bukharin and Pengcheng He and Yu Cheng and Weizhu Chen and Tuo Zhao},
    booktitle={International Conference on Learning Representations },
    year={2023}
}

@inproceedings{yen2024lora,
    title={Lo{RA} Done {RITE}: Robust Invariant Transformation Equilibration for Lo{RA} Optimization},
    author={Jui-Nan Yen and Si Si and Zhao Meng and Felix Yu and Sai Surya Duvvuri and Inderjit S Dhillon and Cho-Jui Hsieh and Sanjiv Kumar},
    booktitle={International Conference on Learning Representations},
    year={2025}
}

@inproceedings{liu2024dora,
    title={Do{RA}: Weight-Decomposed Low-Rank Adaptation},
    author={Shih-yang Liu and Chien-Yi Wang and Hongxu Yin and Pavlo Molchanov and Yu-Chiang Frank Wang and Kwang-Ting Cheng and Min-Hung Chen},
    booktitle={Proceedings of the International Conference on Machine Learning},
    year={2024}
}

@inproceedings{tastan2025loft,
  title={LoFT: Low-Rank Adaptation That Behaves Like Full Fine-Tuning},
  author={Tastan, Nurbek and Laskaridis, Stefanos and Takac, Martin and Nandakumar, Karthik and Horvath, Samuel},
  booktitle={International Conference on Learning Representations},
  year={2026}
}

@article{grishina2025accelerating,
  title={Accelerating Newton-Schulz Iteration for Orthogonalization via Chebyshev-type Polynomials},
  author={Grishina, Ekaterina and Smirnov, Matvey and Rakhuba, Maxim},
  journal={arXiv preprint arXiv:2506.10935},
  year={2025}
}

@article{su2025isotropic,
  title={Isotropic Curvature Model for Understanding Deep Learning Optimization: Is Gradient Orthogonalization Optimal?},
  author={Su, Weijie},
  journal={arXiv preprint arXiv:2511.00674},
  year={2025}
}

@inproceedings{vaswani2017attention,
 author = {Vaswani, Ashish and Shazeer, Noam and Parmar, Niki and Uszkoreit, Jakob and Jones, Llion and Gomez, Aidan N and Kaiser, \L ukasz and Polosukhin, Illia},
 booktitle = {Advances in Neural Information Processing Systems},
 title = {Attention is All you Need},
 year = {2017}
}

@article{su2021roformer,
    title = {Ro{F}ormer: Enhanced transformer with Rotary Position Embedding},
    journal = {Neurocomputing},
    volume = {568},
    pages = {127063},
    year = {2024},
    issn = {0925-2312},
    doi = {https://doi.org/10.1016/j.neucom.2023.127063},
    author = {Jianlin Su and Murtadha Ahmed and Yu Lu and Shengfeng Pan and Wen Bo and Yunfeng Liu},
}

@inproceedings{
    so2021searching,
    title={Searching for Efficient Transformers for Language Modeling},
    author={David So and Wojciech Ma{\'n}ke and Hanxiao Liu and Zihang Dai and Noam Shazeer and Quoc V Le},
    booktitle={Advances in Neural Information Processing Systems},
    year={2021}
}

@article{hendrycks2016gelu,
  title={Gaussian Error Linear Units (Gelus)},
  author={Dan Hendrycks and Kevin Gimpel},
  journal={arXiv preprint arXiv:1606.08415},
  year={2023}
}

@misc{cherti2025clipbenchmark,
  author       = {Cherti, Mehdi and
                  Beaumont, Romain},
  title        = {{CLIP} benchmark},
  month        = may,
  year         = 2025,
  publisher    = {Zenodo},
  doi          = {10.5281/zenodo.15403103},
  url          = {https://doi.org/10.5281/zenodo.15403103},
  swhid        = {swh:1:dir:8cf49a5dd06f59224844a1e767337a1d14ee56c2
                   ;origin=https://doi.org/10.5281/zenodo.15403102;vi
                   sit=swh:1:snp:dd153b26f702d614346bf814f723d59fef3d
                   77a2;anchor=swh:1:rel:cff2aeb98f42583b44fdab5374e9
                   fa71793f2cff;path=CLIP\_benchmark-main
                  },
}

@inproceedings{li-etal-2024-revisiting,
  author={Hongyu Li and Liang Ding and Meng Fang and Dacheng Tao},
  title={Revisiting Catastrophic Forgetting in Large Language Model Tuning},
  year={2024},
  booktitle={Findings of the Association for Computational Linguistics: EMNLP}
}

@inproceedings{henry2020qknorm,
    title = "Query-Key Normalization for Transformers",
    author = "Henry, Alex  and
      Dachapally, Prudhvi Raj  and
      Pawar, Shubham Shantaram  and
      Chen, Yuxuan",
    booktitle = "Findings of the Association for Computational Linguistics: EMNLP",
    year = "2020",
}

@article{radford2019language,
  title={Language models are unsupervised multitask learners},
  author={Radford, Alec and Wu, Jeffrey and Child, Rewon and Luan, David and Amodei, Dario and Sutskever, Ilya and others},
  journal={OpenAI blog},
  year={2019}
}

@inproceedings{gupta2018shampoo,
  title={Shampoo: Preconditioned stochastic tensor optimization},
  author={Gupta, Vineet and Koren, Tomer and Singer, Yoram},
  booktitle={International Conference on Machine Learning},
  year={2018},
}

@inproceedings{shuttleworth2025lora,
    title={Lo{RA} vs Full Fine-tuning: An Illusion of Equivalence},
    author={Reece S Shuttleworth and Jacob Andreas and Antonio Torralba and Pratyusha Sharma},
    booktitle={Advances in Neural Information Processing Systems},
    year={2025},
}

@article{ivison2023camels,
  title={Camels in a changing climate: Enhancing lm adaptation with tulu 2},
  author={Ivison, Hamish and Wang, Yizhong and Pyatkin, Valentina and Lambert, Nathan and Peters, Matthew and Dasigi, Pradeep and Jang, Joel and Wadden, David and Smith, Noah A and Beltagy, Iz and others},
  journal={arXiv preprint arXiv:2311.10702},
  year={2023}
}

@inproceedings{vu2022overcoming,
  title={Overcoming Catastrophic Forgetting in Zero-Shot Cross-Lingual Generation},
  author={Vu, Tu and Barua, Aditya and Lester, Brian and Cer, Daniel and Iyyer, Mohit and Constant, Noah},
  booktitle={Proceedings of the Conference on Empirical Methods in Natural Language Processing},
  year={2022}
}

@inproceedings{huang2024mitigating,
  title={Mitigating Catastrophic Forgetting in Large Language Models with Self-Synthesized Rehearsal},
  author={Huang, Jianheng and Cui, Leyang and Wang, Ante and Yang, Chengyi and Liao, Xinting and Song, Linfeng and Yao, Junfeng and Su, Jinsong},
  booktitle={Proceedings of the Association for Computational Linguistics},
  year={2024}
}

@article{kalajdzievski2024scaling,
  title={Scaling laws for forgetting when fine-tuning large language models},
  author={Kalajdzievski, Damjan},
  journal={arXiv preprint arXiv:2401.05605},
  year={2024}
}

@inproceedings{kleiman2023predicting,
  title={Predicting task forgetting in large language models},
  author={Kleiman, Anat and Frankle, Jonathan and Kakade, Sham M and Paul, Mansheej},
  booktitle = {{ICML} Workshop on Deployable Generative {AI}},
  year={2023}
}

@inproceedings{he2025gora,
    title={Go{RA}: Gradient-driven Adaptive Low Rank Adaptation},
    author={Haonan He and Peng Ye and Yuchen Ren and Yuan Yuan and LuyangZhou and ShucunJu and Lei Chen},
    booktitle={Advances in Neural Information Processing Systems},
    year={2025}
}

@Misc{peft,
  title =        {{PEFT}: State-of-the-art Parameter-Efficient Fine-Tuning methods},
  author =       {Sourab Mangrulkar and Sylvain Gugger and Lysandre Debut and Younes Belkada and Sayak Paul and Benjamin Bossan and Marian Tietz},
  howpublished = {\url{https://github.com/huggingface/peft}},
  year =         {2022}
}

@article{nesterov1983method,
    author={Yu.~E.~Nesterov},
    title={A method of solving a convex programming problem with convergence rate {$O(1/k^2)$}},
    journal={Doklady Akademii Nauk SSSR},
    year={1983},
    volume={269},
    number={3},
    pages={543--547},
}

@inproceedings{Ansel_PyTorch_2_Faster_2024,
author = {Ansel, Jason and Yang, Edward and He, Horace and Gimelshein, Natalia and Jain, Animesh and Voznesensky, Michael and Bao, Bin and Bell, Peter and Berard, David and Burovski, Evgeni and others},
booktitle = {29th ACM International Conference on Architectural Support for Programming Languages and Operating Systems, Volume 2 (ASPLOS '24)},
doi = {10.1145/3620665.3640366},
month = apr,
publisher = {ACM},
title = {{PyTorch 2: Faster Machine Learning Through Dynamic Python Bytecode Transformation and Graph Compilation}},
year = {2024}
}
\bibliographystyle{icml2026}

%%%%%%%%%%%%%%%%%%%%%%%%%%%%%%%%%%%%%%%%%%%%%%%%%%%%%%%%%%%%%%%%%%%%%%%%%%%%%%%
%%%%%%%%%%%%%%%%%%%%%%%%%%%%%%%%%%%%%%%%%%%%%%%%%%%%%%%%%%%%%%%%%%%%%%%%%%%%%%%
% APPENDIX
%%%%%%%%%%%%%%%%%%%%%%%%%%%%%%%%%%%%%%%%%%%%%%%%%%%%%%%%%%%%%%%%%%%%%%%%%%%%%%%
%%%%%%%%%%%%%%%%%%%%%%%%%%%%%%%%%%%%%%%%%%%%%%%%%%%%%%%%%%%%%%%%%%%%%%%%%%%%%%%
\newpage
\appendix
\onecolumn

\begin{center}
\textbf{\Large Appendix}
\end{center}
\vspace{0.5em}
\noindent
\begin{tabular}{@{}ll@{}}
\hyperref[sec:related]{\textbf{A}} & \hyperref[sec:related]{Related Work} \\[0.3em]
\hyperref[app:muon_variants]{\textbf{B}} & \hyperref[app:muon_variants]{Muon Implementation} \\[0.3em]
\hyperref[app:nanochat_details]{\textbf{C}} & \hyperref[app:nanochat_details]{NanoChat Experiment} \\[0.3em]
\hyperref[app:implicit_bias]{\textbf{D}} & \hyperref[app:implicit_bias]{Theoretical Analysis: Implicit Bias of Optimizers} \\
& \quad \hyperref[app:lora_theory]{D.3\quad LoRA Mitigates Mismatch} \\[0.3em]
\hyperref[app:details]{\textbf{E}} & \hyperref[app:details]{Experimental Details} \\
& \quad \hyperref[app:nlu_details]{E.1\quad Natural Language Understanding} \\
& \quad \hyperref[app:nlg_details]{E.2\quad Natural Language Generation} \\
& \quad \hyperref[app:image_details]{E.3\quad Image Classification} \\
& \quad \hyperref[app:rank_study_details]{E.4\quad LoRA Rank Study} \\
& \quad \hyperref[app:forgetting_details]{E.5\quad Catastrophic Forgetting Evaluation} \\
& \quad \hyperref[app:variants_details]{E.6\quad LoRA Variants} \\[0.3em]
\hyperref[app:additional_plots]{\textbf{F}} & \hyperref[app:additional_plots]{Additional Results} \\
& \quad \hyperref[app:spectral_analysis]{F.1\quad Weight Spectral Analysis} \\
& \quad \hyperref[app:lora_spectral]{F.2\quad Spectral Analysis of LoRA Matrices} \\[0.3em]
\hyperref[app:compute]{\textbf{G}} & \hyperref[app:compute]{Computational Resources} \\
\end{tabular}
\vspace{1em}

\section{Related Work}
\label{sec:related}

\paragraph{Muon}
Muon was originally proposed by \citet{jordan2024muon}
as an optimizer for hidden layers in neural networks.
Built upon SGD with momentum, Muon adds a per-layer orthogonalization step
that projects the momentum matrix onto the set of semi-orthogonal matrices
via Newton-Schulz iteration.
This can be interpreted as steepest descent under the spectral norm~\citep{bernstein2024old},
equalizing the contribution of all update directions.
Muon can also be viewed as an ``accumulation-free'' variant of Shampoo~\citep{gupta2018shampoo},
where the preconditioner is computed from the current gradient alone
rather than accumulated over training history. 

Several works have validated Muon's scalability and efficiency.
\citet{shah2025practical} demonstrated that Muon extends the compute-time Pareto frontier
and maintains data efficiency at batch sizes far exceeding the critical batch size,
while requiring less memory than Adam due to storing only the first moment.
\citet{liu2025muon} showed that Muon achieves approximately $2\times$ compute efficiency
over Adam in scaling law experiments,
and introduced techniques, including weight decay and per-parameter update scaling
that enable stable training at larger scales.
Their Moonlight model (3B/16B MoE) was trained using these improvements.
At an industrial scale, Muon and its variants have been adopted for training state-of-the-art large language models, including
Kimi K2/2.5~\citep{team2025kimik2} and GLM-4.5/4.7~\citep{zeng2025glm}, demonstrating its practical viability for frontier model development.

Recent work has proposed several improvements to the Muon algorithm.
NorMuon~\citep{li2025normuon} addresses the issue that orthogonalized updates
exhibit high variance in per-neuron norms
by adding neuron-wise normalization with adaptive learning rates,
achieving further efficiency gains over vanilla Muon.
Dion~\citep{ahn2025dion} replaces Newton-Schulz iteration with amortized power iteration
to better integrate with distributed training and weight sharding,
reducing both computation and communication costs.
On the algorithmic side, \citet{amsel2025polar} and \citet{grishina2025accelerating}
proposed adaptive polynomial methods
that accelerate the orthogonalization step,
achieving faster convergence than the standard Newton-Schulz coefficients.

Theoretical understanding of Muon has also progressed.
\citet{kovalev2025understanding} provided the first convergence analysis
by interpreting Muon as a trust-region method under the spectral norm,
proving $O(1/\epsilon^4)$ iteration complexity for non-convex objectives.
\citet{su2025isotropic} proposed an isotropic curvature model, suggesting that
while Muon's direction of homogenizing singular values is correct,
full orthogonalization may not be strictly optimal---spectrum homogenization suffices.

Despite these advances, existing work on Muon has focused almost exclusively on pretraining,
leaving fine-tuning largely unexplored.
\citet{liu2025muon} first identified the \emph{optimizer mismatch} problem: fine-tuning an Adam-pretrained model with Muon (or vice versa) leads to degraded performance,
significantly limiting Muon's practical applicability given that most open models are pretrained with Adam.
This problem remains poorly understood and unresolved.
This work presents the first systematic analysis of optimizer mismatch,
combining theoretical insights with empirical investigation,
and shows that constraining updates during fine-tuning, e.g., via LoRA, can mitigate this issue.

\paragraph{Low-Rank Adaptation.}
Low-Rank Adaptation (LoRA)~\citep{hu2021lora} has become the dominant parameter-efficient fine-tuning method for large language models,
enabling adaptation with significantly reduced memory and storage costs.
However, LoRA often underperforms full fine-tuning due to the low-rank constraint on updates~\citep{ivison2023camels,biderman2024lora,shuttleworth2025lora}.
Numerous variants have been proposed to narrow this gap,
including initialization-based methods such as PiSSA~\citep{meng2024pissa}, LoRA-GA~\citep{wang2024loraga}, GoRA~\citep{he2025gora}, and LoRA-One~\citep{zhang2025loraone},
and algorithm-modifying variants like AdaLoRA~\citep{zhang2023adalora}, DoRA~\citep{liu2024dora}, LoRA-Pro~\citep{wang2024lorapro}, and LoRA-RITE~\citep{yen2024lora}.
Many of these methods aim to make LoRA updates closer to full fine-tuning trajectories.
On the other hand, full fine-tuning can cause catastrophic forgetting~\citep{mccloskey1989catastrophic,french1999catastrophic},
where adapting to new tasks degrades the general capabilities acquired during pretraining~\citep{vu2022overcoming,kleiman2023predicting,kalajdzievski2024scaling,huang2024mitigating}.
Recent work showed that LoRA ``learns less but forgets less''~\citep{biderman2024lora},
as the low-rank constraint limits the magnitude of weight changes and helps preserve pretrained knowledge.
This property of LoRA — limiting weight changes while preserving pretrained knowledge — is central to our finding that LoRA can mitigate the optimizer mismatch problem.

\section{Muon Implementation}
\label{app:muon_variants}

Several Muon implementations exist in the community, differing in two aspects: the scaling factor applied to the orthogonalized update and the momentum update rule.
This section provides a brief introduction and explains our choice.

For a weight matrix $W \in \mathbb{R}^{m \times n}$, the two main implementation apply the following update rules:
\begin{itemize}
    \item \textbf{Original}~\citep{jordan2024muon}: $W \leftarrow W - \eta \sqrt{\max\left(1,m/n\right)} \cdot \mathrm{NS}(M_t)$
    \item \textbf{Moonlight}~\citep{liu2025muon}: $W \leftarrow W - 0.2\eta \sqrt{\max(m, n)} \cdot \mathrm{NS}(M_t)$
\end{itemize}
where $\mathrm{NS}(\cdot)$ denotes Newton-Schulz orthogonalization and $M$ is the momentum buffer, and $M_{t}$ is the updated momentum at step $t.$

The scaling factors differ in their dependence on matrix dimensions: the original Muon uses $\sqrt{\max(1, m/n)}$, which is sensitive to the ordering of $m$ and $n$, while the Moonlight variant uses $\sqrt{\max(m, n)}$, which is symmetric.
For momentum, the original Muon uses an exponential moving average (EMA) update $M_t = \beta M_{t-1} + (1-\beta) G_t$, similar to Adam's first moment, while the Moonlight variant uses classical momentum $M_t = \beta M_{t-1} + G_t$.
We adopt the Moonlight variant in our experiments.
This choice is motivated by two factors: the optimizer mismatch problem was first identified using this implementation~\citep{liu2025muon}, and it has been successfully used to train large-scale models such as Kimi K2/2.5~\citep{team2025kimik2}. Following standard practice~\citep{jordan2024muon,liu2025muon}, we apply Muon only to two-dimensional weight matrices, while other parameters (e.g., biases, embeddings) are optimized with Adam.

Following standard Muon practice~\citep{jordan2024muon,liu2025muon}, we also employ Nesterov-style momentum~\citep{nesterov1983method} in all our experiments.

For the orthogonalization step, the standard Newton-Schulz iteration uses fixed quintic polynomial coefficients $(a, b, c) = (3.4445, -4.7750, 2.0315)$ per step~\citep{jordan2024muon}.
Polar Express (PE)~\citep{amsel2025polar} replaces these with adaptive coefficients: for each iteration $i$, optimal coefficients $(a_i, b_i, c_i)$ are precomputed by solving for equioscillating quintic polynomials that minimize the approximation error on the interval $[\ell, 1]$, where $\ell$ is a lower bound on the smallest singular value ratio.
This adaptive scheme accelerates convergence to the orthogonal matrix.
We evaluate both standard Newton-Schulz (Muon) and the PE variant (Muon-PE) in our experiments.

\section{NanoChat Experiment}
\label{app:nanochat_details}

We use the NanoChat framework~\citep{karpathy2024nanochat} for our controlled experiments in Section~\ref{sec:analysis}, including its Muon optimizer implementation, which follows the original Muon variant (Appendix~\ref{app:muon_variants}) without Polar Express.
NanoChat implements a GPT-style~\citep{vaswani2017attention,radford2019language} decoder-only Transformer with several modern architectural choices:
\begin{itemize}
    \item \textbf{Architecture}: RoPE positional embeddings~\citep{su2021roformer}, QK-norm for training stability~\citep{henry2020qknorm}, and ReLU$^2$ activation~\citep{so2021searching} instead of GELU~\citep{hendrycks2016gelu}.
    \item \textbf{Model size}: Depth 20, hidden dimension $d=1536$, 12 attention heads, resulting in approximately 561M parameters (the ``\$100 model'' configuration).
    \item \textbf{Pretraining data}: $\sim$11B tokens from the FineWeb-Edu dataset~\citep{penedo2024fineweb}, following the Chinchilla-optimal 20:1 data-to-parameter ratio.
\end{itemize}

\paragraph{Pretraining setup.}
Both models are pretrained for 21,400 iterations with a total batch size of 524,288 tokens and a sequence length of 2048.
Following NanoChat's design, we use different learning rates for different parameter groups.
The ``base'' learning rates are scaled by $(d / 768)^{-0.5}$ for embedding and unembedding parameters to account for model dimension.
\begin{itemize}
    \item \textbf{Muon}: Matrix learning rate 0.02 (no scaling), embedding learning rate $0.2 \times (1536/768)^{-0.5} \approx 0.14$, unembedding learning rate $0.004 \times (1536/768)^{-0.5} \approx 0.003$.
    \item \textbf{Adam}: Matrix learning rate 1e-3 (with the same scaling applied to embedding/unembedding), tuned to achieve a comparable CORE~\citep{li2024dclm} metric to Muon.
\end{itemize}
Table~\ref{tab:nanochat_core} shows the CORE metric for our pretrained models compared to GPT-2 Large~\citep{radford2019language}.
Figure~\ref{fig:pretraining_curves} shows the training loss and validation BPB (bits per byte) curves during pretraining.

\begin{figure}[h]
\centering
\includegraphics[width=0.48\columnwidth]{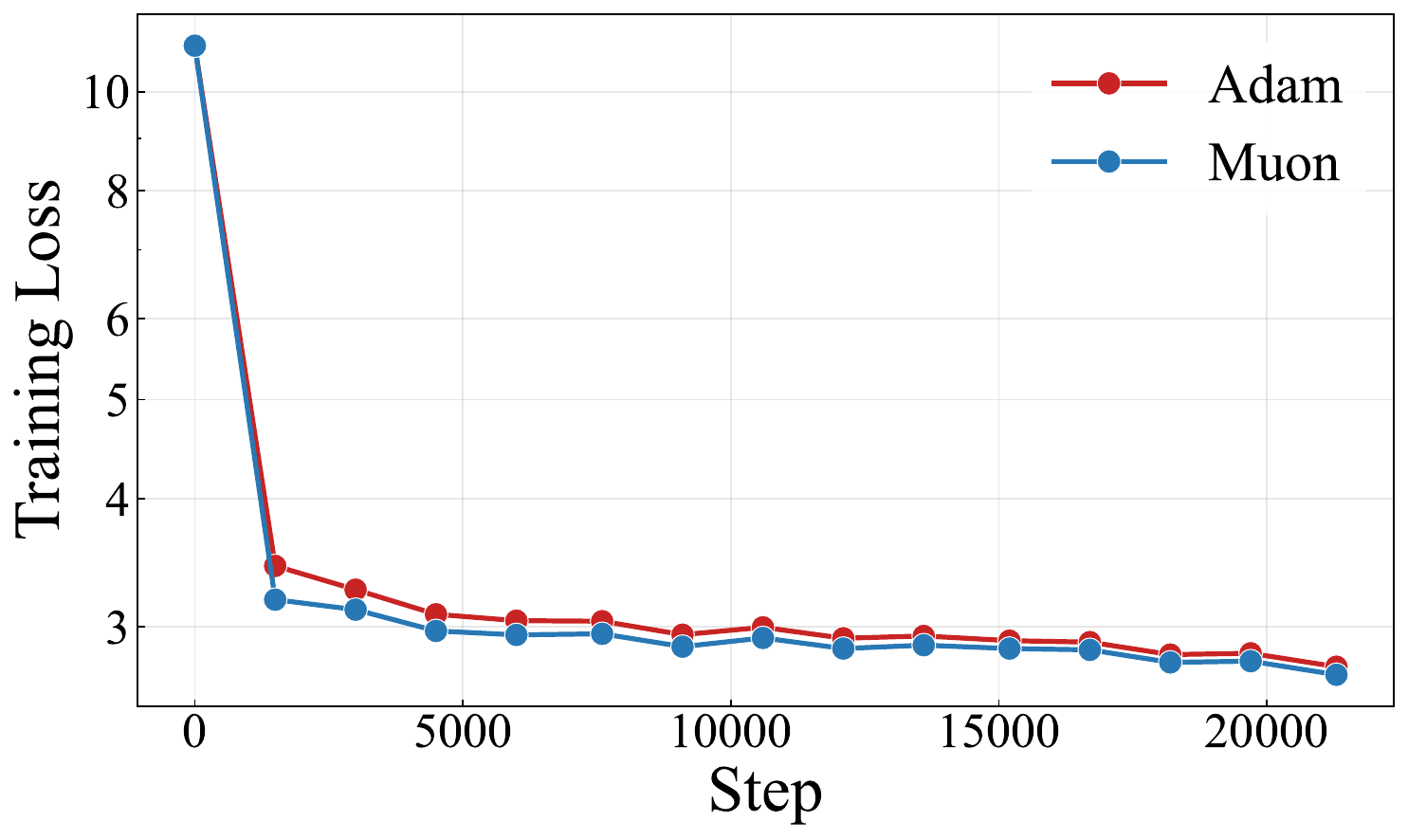}
\hfill
\includegraphics[width=0.48\columnwidth]{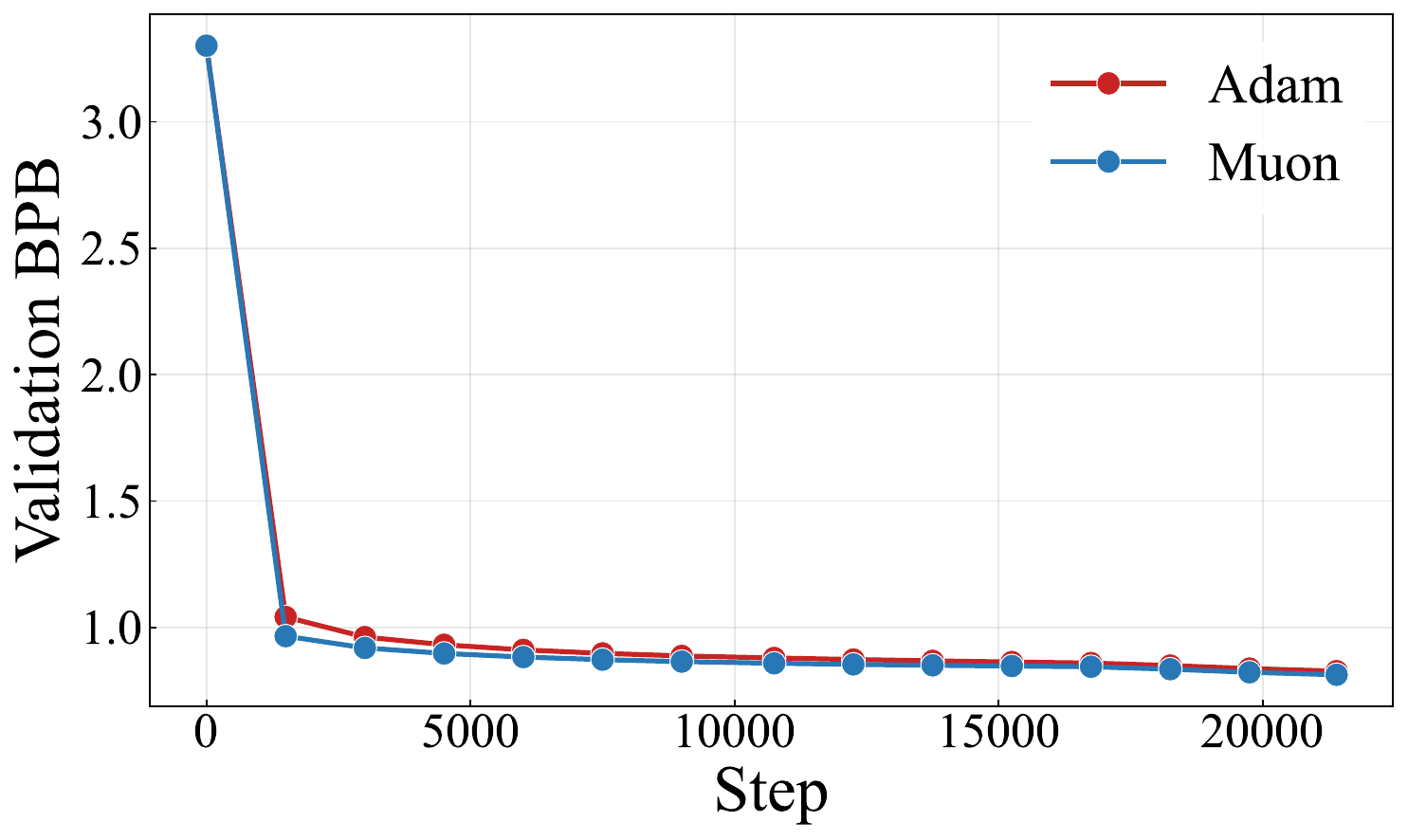}
\caption{NanoChat pretraining curves. \textbf{Left:} Training loss. \textbf{Right:} Validation BPB (bits per byte). Both optimizers achieve similar final performance, with Muon converging slightly faster.}
\label{fig:pretraining_curves}
\end{figure}

\begin{table}[h]
\centering
\caption{CORE metric for pretrained models. GPT-2 Large score is from the NanoChat repository\protect\footnotemark.}
\label{tab:nanochat_core}
\begin{tabular}{lc}
\toprule
Model & CORE $\uparrow$ \\
\midrule
GPT-2 Large (774M) & 0.21 \\
NanoChat-Muon (561M) & 0.21 \\
NanoChat-Adam (561M) & 0.19 \\
\bottomrule
\end{tabular}
\end{table}
\footnotetext{\url{https://github.com/karpathy/nanochat/discussions/1}}

\paragraph{Fine-tuning setup.}
For fine-tuning on WikiText-2, we use:
\begin{itemize}
    \item \textbf{Sequence length}: 1024
    \item \textbf{LoRA configuration}: Rank $r=8$, $\alpha=16$, dropout 0, applied to all attention projections (Q, K, V, output) and MLP projections (up, down).
    \item \textbf{Training}: 1 epoch, 3 random seeds per configuration.
\end{itemize}
We sweep learning rates from 1e-5 to 9e-1 (with the same $(d/768)^{-0.5}$ scaling as pretraining) and report the best validation perplexity.
Table~\ref{tab:wikitext_best_lr} shows the selected learning rates for each configuration.

\begin{table}[h]
\centering
\caption{Selected learning rates and validation perplexity for WikiText-2 fine-tuning.}
\label{tab:wikitext_best_lr}
\begin{tabular}{llcc}
\toprule
Pretrain & Fine-tune Method & Best LR & PPL $\downarrow$ \\
\midrule
\multirow{4}{*}{Muon} & Full-Muon & 0.9 & \textbf{14.38} \\
& Full-Adam & 0.009 & 14.84 \\
& LoRA-Muon & 0.9 & 14.44 \\
& LoRA-Adam & 0.1 & 14.72 \\
\midrule
\multirow{4}{*}{Adam} & Full-Adam & 0.03 & \textbf{14.95} \\
& Full-Muon & 0.5 & 15.13 \\
& LoRA-Adam & 0.3 & 15.18 \\
& LoRA-Muon & 0.7 & 15.23 \\
\bottomrule
\end{tabular}
\end{table}

\section{Theoretical Analysis: Implicit Bias of Optimizers}
\label{app:implicit_bias}

In this section, we analyze the implicit biases of Muon and SignGD on a simplified underdetermined linear regression problem. We use SignGD as a proxy for Adam~\citep{balles2018dissecting,bernstein2018signsgd}. A direct analysis of Adam's implicit bias remains challenging due to its adaptive second-moment estimation. For theoretical clarity, we consider gradient descent without momentum. For Muon, we analyze its idealized form where the orthogonalization is exact; in practice, Muon approximates this via Newton-Schulz iteration.

% \subsection{Problem Setup}

Consider learning a matrix $W \in \mathbb{R}^{m \times n}$ to satisfy $Wx = y$ for a given $x \in \mathbb{R}^n$ and $y \in \mathbb{R}^m$. The loss function is
\[
L(W) = \frac{1}{2}\|Wx - y\|^2_2,
\]
with gradient $\nabla_W L = (Wx - y)x^\top$. The initialization is chosen as $W_0 = 0$.

% \subsection{Preliminary Lemma}

We first prove the following lemma:

\begin{lemma}
\label{lem:sign_convergence}
Let $\{\eta_t\}_{t \geq 0}$ be a sequence of step sizes satisfying:
\begin{enumerate}[label=(\roman*)]
    \item $\eta_t > 0$ for all $t$,
    \item $\lim_{t \to \infty} \eta_t = 0$,
    \item $\sum_{t=0}^{\infty} \eta_t = \infty$.
\end{enumerate}
Then any sequence $\{d_t\}_{t \geq 0}$ defined by the recurrence
\[
d_{t+1} = d_t - \eta_t \operatorname{sign}(d_t)
\]
satisfies $\lim_{t \to \infty} d_{t} = 0$ for any initial value $d_0 \in \mathbb{R}$.
\end{lemma}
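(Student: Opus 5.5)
The argument splits into two phases: first the iterate reaches a neighborhood of zero of size $\max_{s\ge t}\eta_s$, and after that it can never leave that neighborhood. Throughout we use the convention $\operatorname{sign}(0)=0$, so that $0$ is a fixed point of the recurrence. The key one-step observation is as follows. If $d_t\neq 0$, then
\[
|d_{t+1}| = \bigl||d_t|-\eta_t\bigr|.
\]
Hence two cases arise:
\begin{itemize}
\item If $|d_t|\ge \eta_t$, then $|d_{t+1}| = |d_t|-\eta_t$, so the magnitude decreases by exactly $\eta_t$.
\item If $|d_t|<\eta_t$, then $|d_{t+1}| = \eta_t-|d_t| < \eta_t$, so the iterate overshoots zero but lands within distance $\eta_t$ of it.
\end{itemize}
If $d_t=0$, then $d_s=0$ for all $s\ge t$.

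\textbf{Phase 1: reaching the neighborhood.} We show that for every $T$ there exists some $t\ge T$ with $|d_t|<\eta_{t-1}$, or else $d_t=0$. Suppose instead that $|d_t|\ge \eta_t$ for all $t\ge T$. Then by the first case, for all $N$,
\[
|d_{T+N}| = |d_T|-\sum_{s=T}^{T+N-1}\eta_s .
\]
Since $\sum_s\eta_s=\infty$, the right-hand side eventually becomes negative, which is a contradiction. Therefore some $t\ge T$ satisfies $|d_t|<\eta_t$, and the second case then gives $|d_{t+1}|\le \eta_t$.

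\textbf{Phase 2: staying in the neighborhood.} We show that once $|d_t|\le \epsilon_t := \sup_{s\ge t-1}\eta_s$, this bound persists in the form $|d_{s}|\le \epsilon_t$ for all $s\ge t$. By induction, suppose $|d_s|\le \epsilon_t$. Both cases of the one-step observation give
\[
|d_{s+1}| \le \max\bigl(|d_s|-\eta_s,\ \eta_s\bigr) \le \max\bigl(|d_s|,\ \eta_s\bigr) \le \epsilon_t ,
\]
because $\eta_s\le \epsilon_t$ for $s\ge t-1$. The case $d_s=0$ is trivial.

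\textbf{Conclusion.} Fix $\varepsilon>0$. Since $\eta_t\to 0$, choose $T$ such that $\eta_s<\varepsilon$ for all $s\ge T$. Phase 1 produces some $t\ge T$ with $|d_{t+1}|\le \eta_t<\varepsilon$. Phase 2, applied from time $t+1$, gives $|d_s|\le \sup_{r\ge t}\eta_r\le \varepsilon$ for all $s\ge t+1$. Hence $d_t\to 0$.

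The step needing the most care is the trapping argument in Phase 2. We cannot assume that $\eta_t$ is monotone, so the bound must be stated using the tail supremum $\sup_{s\ge t}\eta_s$ rather than $\eta_t$ itself. Beyond that, the only subtlety is handling $d_t=0$ via the sign convention; the rest is routine.
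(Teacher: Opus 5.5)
Your proof is correct and follows essentially the same route as the paper: a divergent-sum contradiction shows that $|d_t|$ eventually falls below the step-size level, and the trapping bound $|d_{t+1}|\le\max\{|d_t|,\eta_t\}$ then keeps it there. The paper runs both phases at the fixed threshold $\varepsilon$ (using $\eta_s<\varepsilon$ for all $s\ge T$), so the tail supremum is not actually needed. Also, your Phase 1 claim states $\eta_{t-1}$ where you in fact prove $|d_t|<\eta_t$ and then $|d_{t+1}|\le\eta_t$, but this slip is harmless.
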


\begin{proof}
Let $r_t := |d_t| \geq 0$. We first analyze the recurrence for $r_t$.

If $d_t = 0$, then $d_{t+1} = 0$ and $r_{t+1} = 0$.

If $d_t \neq 0$, then
\[
r_{t+1} = |d_t - \eta_t \operatorname{sign}(d_t)| = ||d_t| - \eta_t| = |r_t - \eta_t|.
\]
In either case, $r_{t+1} \leq \max\{r_t, \eta_t\}$.

Since $\lim_{t \to \infty} \eta_t = 0$, for any $\varepsilon > 0$, there exists $T$ such that $\eta_t < \varepsilon$ for all $t \geq T$.

We claim there exists $t_\varepsilon \geq T$ such that $r_{t_\varepsilon} < \varepsilon$.

We prove it by contradiction. Suppose that $r_t \geq \varepsilon$ for all $t \geq T$. Since $\eta_t < \varepsilon \leq r_t$ for $t \geq T$, we have $r_{t+1} = r_t - \eta_t$. Thus
\[
r_t = r_T - \sum_{k=T}^{t-1} \eta_k.
\]
Since $\sum_{k=T}^{\infty} \eta_k = \infty$, the right-hand side becomes negative for sufficiently large $t$, contradicting $r_t \geq 0$.

Once $r_t < \varepsilon$ for some $t \geq T$, we have
\[
r_{t+1} = |r_t - \eta_t| \leq \max\{r_t, \eta_t\} < \varepsilon,
\]
since both $r_t < \varepsilon$ and $\eta_t < \varepsilon$. By induction, $r_s < \varepsilon$ for all $s \geq t_\varepsilon$.

Since $\varepsilon > 0$ is arbitrary, we conclude $\lim_{t \to \infty} r_{t} = 0$, i.e., $\lim_{t \to \infty} d_{t} = 0$. 
\end{proof}

\subsection{Sign Gradient Descent}

\begin{theorem}[Implicit Bias of SignGD, Restatement of Theorem~\ref{thm:signgd_main}]
\label{thm:signgd}
Let $x \neq 0.$ Consider the SignGD iteration from $W_0 = 0$:
\[
W_{t+1} = W_t - \eta_t \operatorname{sign}(\nabla_W L(W_t)),
\]
where step sizes satisfy conditions (i)--(iii) of Lemma~\ref{lem:sign_convergence}. Then
\[
W_t \to W^* = \frac{y \cdot \operatorname{sign}(x)^\top}{\|x\|_1},
\]
and $W^* x = y$. Moreover, $W^*$ achieves the minimum max-norm among all solutions to $Wx = y$:
\[
\|W^*\|_{\max} := \max_{i,j}|W^*_{ij}| = \frac{\|y\|_\infty}{\|x\|_1} = \min_{W: Wx = y} \|W\|_{\max}.
\]
\end{theorem}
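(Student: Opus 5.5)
The plan is to reduce the matrix dynamics to a family of independent scalar recursions, one per row, and then apply Lemma~\ref{lem:sign_convergence}.

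\emph{Structure of the sign.} Write the residual as $r_t := W_t x - y \in \mathbb{R}^m$. The gradient is $r_t x^\top$, so $\operatorname{sign}(\nabla_W L(W_t))_{ij} = \operatorname{sign}(r_{t,i})\operatorname{sign}(x_j)$. Hence the update is the rank-one matrix $-\eta_t \operatorname{sign}(r_t)\operatorname{sign}(x)^\top$. Since $W_0 = 0$, induction gives
\[
W_t = c_t \operatorname{sign}(x)^\top \quad \text{for some } c_t \in \mathbb{R}^m,
\]
with $c_0 = 0$ and $c_{t+1} = c_t - \eta_t \operatorname{sign}(r_t)$. Using $\operatorname{sign}(x)^\top x = \|x\|_1$, the residual is $r_t = \|x\|_1 c_t - y$.

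\emph{Reduction to the lemma.} Define $d_t := c_t - y/\|x\|_1$. Because $\|x\|_1 > 0$ (as $x \neq 0$), we have $\operatorname{sign}(r_t) = \operatorname{sign}(d_t)$ coordinatewise. Therefore each coordinate satisfies $d_{t+1,i} = d_{t,i} - \eta_t \operatorname{sign}(d_{t,i})$. Lemma~\ref{lem:sign_convergence} then gives $d_{t,i} \to 0$ for every $i$. It follows that $c_t \to y/\|x\|_1$ and $W_t \to W^* = y\operatorname{sign}(x)^\top/\|x\|_1$. Substituting, $W^* x = y\,\|x\|_1/\|x\|_1 = y$.

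\emph{Optimality in max-norm.} Each entry of $W^*$ has magnitude $|y_i||\operatorname{sign}(x_j)|/\|x\|_1 \le \|y\|_\infty/\|x\|_1$, with equality attained at the row maximizing $|y_i|$ and any column with $x_j \ne 0$. For the lower bound, take any $W$ with $Wx = y$ and choose $i$ with $|y_i| = \|y\|_\infty$. Then
\[
\|y\|_\infty = \Bigl|\sum_j W_{ij}x_j\Bigr| \le \|W\|_{\max}\|x\|_1
\]
by Hölder's inequality, which gives $\|W\|_{\max} \ge \|y\|_\infty/\|x\|_1$.

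\emph{Main obstacle.} The only step needing care is the convention $\operatorname{sign}(0) = 0$. It matters when some $x_j = 0$, where the corresponding column stays zero, consistent with $W^*$. It also matters when a residual coordinate hits exactly zero, where the lemma's recursion already freezes $d_{t,i}$ at $0$. Once these are checked, the coordinatewise identification $\operatorname{sign}(r_t) = \operatorname{sign}(d_t)$ holds exactly and the rest of the argument is routine.
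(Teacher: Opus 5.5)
Your proposal is correct and follows the paper's proof essentially step for step: the rank-one induction $W_t = c_t \operatorname{sign}(x)^\top$, the shift $d_t = c_t - y/\|x\|_1$ that reduces each coordinate to the scalar recursion of Lemma~\ref{lem:sign_convergence}, and H\"older's inequality for the max-norm lower bound. Your explicit remarks on the $\operatorname{sign}(0)=0$ convention and on the maximum entry of $W^*$ being attained at a column with $x_j \neq 0$ are small clarifications that the paper leaves implicit.
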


\begin{proof}
For any matrix $W$, the gradient is $\nabla_W L(W) = (Wx - y)x^\top$. The $(i,j)$-entry is
\[
[\nabla_W L(W)]_{ij} = (Wx - y)_i \cdot x_j,
\]
thus
\[
\operatorname{sign}(\nabla_W L(W)) = \operatorname{sign}(Wx - y) \cdot \operatorname{sign}(x)^\top.
\]

We then prove by induction that $W_t = w_t \cdot \operatorname{sign}(x)^\top$ for some $w_t \in \mathbb{R}^m$.

\emph{Base case}: $W_0 = 0 = 0 \cdot \operatorname{sign}(x)^\top$ with $w_0 = 0$.

\emph{Inductive step}: Suppose $W_t = w_t \cdot \operatorname{sign}(x)^\top$. Then
\[
W_t x = w_t (\operatorname{sign}(x)^\top x) = w_t \|x\|_1,
\]
and
\[
\operatorname{sign}(\nabla_W L(W_t)) = \operatorname{sign}(w_t \|x\|_1 - y) \cdot \operatorname{sign}(x)^\top.
\]
The update gives
\[
W_{t+1} = w_t \operatorname{sign}(x)^\top - \eta_t \operatorname{sign}(w_t \|x\|_1 - y) \operatorname{sign}(x)^\top = w_{t+1} \operatorname{sign}(x)^\top,
\]
where $w_{t+1} = w_t - \eta_t \operatorname{sign}(w_t \|x\|_1 - y)$.

Let $s := \|x\|_1 > 0$ and $w^* := \frac{y}{s}$. Define the error $d_t := w_t - w^*$. Then
\[
(d_{t+1})_i = (d_t)_i - \eta_t \operatorname{sign}(s(w_t)_i - y_i) = (d_t)_i - \eta_t \operatorname{sign}((d_t)_i).
\]
By Lemma~\ref{lem:sign_convergence}, $\lim_{t \to \infty}(d_t)_i = 0$ for each $i$. Thus $w_t \to w^* = \frac{y}{\|x\|_1}$, and
\[
W_t = w_t \operatorname{sign}(x)^\top \to \frac{y \operatorname{sign}(x)^\top}{\|x\|_1} = W^*,
\]
which is the solution to the problem given:

\[
W^* x = \frac{y \operatorname{sign}(x)^\top x}{\|x\|_1} = \frac{y \|x\|_1}{\|x\|_1} = y.
\]

Furthermore, we have $\|W^*\|_{\max} = \max_{i,j}|W^*_{ij}| = \frac{\|y\|_\infty}{\|x\|_1}$.
For any $W$ satisfying $Wx = y$, consider row $i$ where $|y_i| = \|y\|_\infty$. By H\"older's inequality:
\[
\|y\|_\infty = |y_i| = |W_i^\top x| \leq \|W_i\|_\infty \|x\|_1 \leq \|W\|_{\max} \|x\|_1.
\]
Thus $\|W\|_{\max} \geq \frac{\|y\|_\infty}{\|x\|_1} = \|W^*\|_{\max}$, so $W^*$ achieves the lower bound.
\end{proof}

\subsection{Muon}

\begin{theorem}[Implicit Bias of Muon, Restatement of Theorem~\ref{thm:muon_main}]
\label{thm:muon}
Let $x \neq 0$ and $y \neq 0$. Define $\operatorname{ortho}(G)$ as the orthogonal factor from the polar decomposition of $G$: if $G = U\Sigma V^\top$ is the compact SVD (where $U$ and $V$ have orthonormal columns), then $\operatorname{ortho}(G) = UV^\top$; we set $\operatorname{ortho}(0) = 0$. Consider the Muon iteration from $W_0 = 0$:
\[
W_{t+1} = W_t - \eta_t \operatorname{ortho}(\nabla_W L(W_t)),
\]
where step sizes satisfy conditions (i)--(iii) of Lemma~\ref{lem:sign_convergence}. Then
\[
W_t \to W^* = \frac{yx^\top}{\|x\|_2^2}.
\]
Moreover, $W^*$ achieves the minimum spectral norm among all solutions to $Wx = y$:
\[
\|W^*\|_2 = \frac{\|y\|_2}{\|x\|_2} = \min_{W: Wx = y} \|W\|_2.
\]
\end{theorem}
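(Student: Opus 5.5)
I will mirror the proof of Theorem~\ref{thm:signgd}: first identify an invariant one-dimensional (here, rank-one) manifold on which the iterates live, then reduce the dynamics to a scalar recurrence handled by Lemma~\ref{lem:sign_convergence}. The key observation is that the gradient $\nabla_W L(W) = (Wx - y)x^\top$ is always rank one (or zero). For a rank-one matrix $G = ab^\top$ with $a,b \neq 0$, the compact SVD is $G = \|a\|_2\|b\|_2 \cdot \frac{a}{\|a\|_2}\frac{b^\top}{\|b\|_2}$, so $\operatorname{ortho}(ab^\top) = \frac{a}{\|a\|_2}\frac{b^\top}{\|b\|_2}$. Hence $\operatorname{ortho}(\nabla_W L(W)) = \frac{(Wx-y)}{\|Wx-y\|_2}\frac{x^\top}{\|x\|_2}$ whenever $Wx \neq y$, and $0$ otherwise.

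Next I will show by induction that $W_t = c_t\, \frac{y x^\top}{\|y\|_2\|x\|_2}$ for a scalar $c_t$, with $c_0 = 0$. If this holds, then $W_t x - y = \bigl(\frac{c_t\|x\|_2}{\|y\|_2} - 1\bigr) y$, which is a scalar multiple of $y$, so its normalization is $\operatorname{sign}\bigl(\frac{c_t\|x\|_2}{\|y\|_2}-1\bigr)\frac{y}{\|y\|_2}$ (or $0$ when the scalar vanishes). The update then stays of the same form, with $c_{t+1} = c_t - \eta_t \operatorname{sign}(c_t - c^*)$ where $c^* := \|y\|_2/\|x\|_2$. This is where $y \neq 0$ is used, and the convention $\operatorname{ortho}(0)=0$ matches $\operatorname{sign}(0)=0$.

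Setting $d_t = c_t - c^*$ gives exactly the recurrence $d_{t+1} = d_t - \eta_t \operatorname{sign}(d_t)$, so Lemma~\ref{lem:sign_convergence} yields $c_t \to c^*$. It follows that $W_t \to \frac{\|y\|_2}{\|x\|_2}\frac{yx^\top}{\|y\|_2\|x\|_2} = \frac{yx^\top}{\|x\|_2^2} = W^*$, and one checks directly that $W^*x = y$.

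For optimality, $W^*$ is rank one, so $\|W^*\|_2 = \|y\|_2\|x\|_2/\|x\|_2^2 = \|y\|_2/\|x\|_2$. For any $W$ with $Wx = y$, we have $\|y\|_2 = \|Wx\|_2 \le \|W\|_2\|x\|_2$, which gives the matching lower bound. The only mildly delicate step is the rank-one SVD computation for $\operatorname{ortho}$ together with the sign bookkeeping (a negative scalar factor is absorbed into the left singular vector). Everything else is routine.
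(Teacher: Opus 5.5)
Your proof is correct and follows the paper's argument: you show the iterates stay on the line spanned by $yx^\top$ via the rank-one formula for $\operatorname{ortho}$, reduce to the scalar sign recurrence of Lemma~\ref{lem:sign_convergence}, and obtain the lower bound from $\|Wx\|_2 \le \|W\|_2\|x\|_2$. The only difference is cosmetic: your normalization $W_t = c_t\, yx^\top/(\|y\|_2\|x\|_2)$ lets you apply the lemma with the original steps $\eta_t$, whereas the paper writes $W_t = \alpha_t\, yx^\top/\|x\|_2^2$ and must use the rescaled steps $\beta_t = \eta_t\|x\|_2/\|y\|_2$.
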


\begin{proof}
For any matrix $W_{t},$ the gradient is $\nabla_W L(W_t) = r_t x^\top$ where $r_t := W_t x - y$. When $r_t \neq 0$, this is a rank-1 matrix. For any rank-1 matrix $uv^\top$ with $u, v \neq 0$:
\[
\operatorname{ortho}(uv^\top) = \frac{u}{\|u\|_2} \cdot \frac{v^\top}{\|v\|_2}.
\]
Thus, when $r_t \neq 0$:
\[
\operatorname{ortho}(\nabla_W L(W_t)) = \frac{r_t}{\|r_t\|_2} \cdot \frac{x^\top}{\|x\|_2}.
\]

We prove by induction that $W_t = \alpha_t \cdot \frac{yx^\top}{\|x\|_2^2}$ for some scalar $\alpha_t$.

\emph{Base case}: $W_0 = 0$ corresponds to $\alpha_0 = 0$.

\emph{Inductive step}: Suppose the claim holds for $t$. Then
\[
W_t x = \alpha_t y, \quad r_t = W_t x - y = (\alpha_t - 1)y.
\]
If $\alpha_t \neq 1$ (so $r_t \neq 0$):
\[
\frac{r_t}{\|r_t\|_2} = \operatorname{sign}(\alpha_t - 1) \frac{y}{\|y\|_2}.
\]
The update gives
\[
W_{t+1} = W_t - \eta_t \operatorname{sign}(\alpha_t - 1) \frac{y}{\|y\|_2} \cdot \frac{x^\top}{\|x\|_2}
= \left(\alpha_t - \beta_t \operatorname{sign}(\alpha_t - 1)\right) \frac{yx^\top}{\|x\|_2^2},
\]
where $\beta_t := \eta_t \frac{\|x\|_2}{\|y\|_2} > 0$. Thus $\alpha_{t+1} = \alpha_t - \beta_t \operatorname{sign}(\alpha_t - 1)$.

Let $d_t := \alpha_t - 1$. Then
\[
d_{t+1} = d_t - \beta_t \operatorname{sign}(d_t).
\]
Since $\lim_{t\to\infty}\eta_t = 0$ and $\sum_t \eta_t = \infty$, we have $\lim_{t\to\infty}\beta_t = 0$ and $\sum_t \beta_t = \infty$. By Lemma~\ref{lem:sign_convergence}, $\lim_{t\to\infty}d_t = 0$, i.e., $\lim_{t\to\infty}\alpha_t = 1$. Therefore
\[
W_t = \alpha_t \frac{yx^\top}{\|x\|_2^2} \to \frac{yx^\top}{\|x\|_2^2} = W^*.
\]

Lastly, for any $W$ satisfying $Wx = y$:
\[
\|W\|_2 = \max_{\|v\|_2=1} \|Wv\|_2 \geq \frac{\|Wx\|_2}{\|x\|_2} = \frac{\|y\|_2}{\|x\|_2}.
\]
Since $W^*$ has rank 1:
\[
\|W^*\|_2 = \frac{\|y\|_2 \cdot \|x\|_2}{\|x\|_2^2} = \frac{\|y\|_2}{\|x\|_2}.
\]
Thus, $W^*$ achieves the lower bound.
\end{proof}

\subsection{LoRA Mitigates Mismatch: Theoretical Analysis}
\label{app:lora_theory}

Throughout this subsection, we continue to use SignGD as a proxy for Adam and idealized Muon (exact orthogonalization and no momentum), exactly as in the preceding sections. Step sizes are assumed to satisfy the conditions of Lemma~\ref{lem:sign_convergence}. We assume $r_0\neq 0$ whenever ratios or nonempty mismatch intervals are discussed.

Consider a pretrained matrix $W_0 \in \mathbb{R}^{m\times n}$ and a fine-tuning example $(z,b)$ with $z \neq 0$. The fine-tuning loss is
\[
L_{\mathrm{ft}}(W) = \frac12 \|Wz-b\|_2^2.
\]
Define the correction
\[
\Delta := W-W_0, \qquad r_0 := b-W_0 z.
\]
Then
\[
L_{\mathrm{ft}}(W_0+\Delta) = \frac12 \|\Delta z-r_0\|_2^2.
\]
Thus, fine-tuning from $W_0$ is equivalent to learning from zero a correction matrix $\Delta$ that fits the residual $r_0$.

\paragraph{Proposition D.4 (Continuation from a pretrained weight).}
Under the above setup, SignGD initialized at $W_0$ converges to
\[
W_{\mathrm{s}}^\star(W_0) = W_0 + \Delta_{\mathrm{s}}^\star, \qquad \Delta_{\mathrm{s}}^\star = \frac{r_0 \operatorname{sign}(z)^\top}{\|z\|_1},
\]
while the idealized Muon initialized at $W_0$ converges to
\[
W_{\mu}^\star(W_0) = W_0 + \Delta_{\mu}^\star, \qquad \Delta_{\mu}^\star = \frac{r_0 z^\top}{\|z\|_2^2}.
\]
Moreover,
\[
\Delta_{\mathrm{s}}^\star \in \arg\min_{\Delta:\,\Delta z=r_0} \|\Delta\|_{\max}, \qquad \Delta_{\mu}^\star \in \arg\min_{\Delta:\,\Delta z=r_0} \|\Delta\|_2.
\]

\emph{Proof.}
The gradient of $\Delta \mapsto \frac12\|\Delta z-r_0\|_2^2$ is $\nabla_\Delta L = (\Delta z-r_0)z^\top$, which is exactly the same form as in Theorems~\ref{thm:signgd} and~\ref{thm:muon}, with $x$ replaced by $z$ and $y$ replaced by $r_0$. Since $\Delta_0=0$, the claims follow immediately. $\square$

\paragraph{Proposition D.5 (Budgeted fine-tuning in native geometries).}
For $\rho \ge 0$, define the budgeted fine-tuning objectives
\[
\mathcal{E}_{\max}(\rho) := \min_{\|\Delta\|_{\max}\le \rho} \frac12\|\Delta z-r_0\|_2^2, \qquad \mathcal{E}_{2}(\rho) := \min_{\|\Delta\|_{2}\le \rho} \frac12\|\Delta z-r_0\|_2^2.
\]
Then
\[
\mathcal{E}_{\max}(\rho) = \frac12 \sum_{i=1}^m \bigl(|(r_0)_i|-\rho\|z\|_1\bigr)_+^2,
\]
and
\[
\mathcal{E}_{2}(\rho) = \frac12 \bigl(\|r_0\|_2-\rho\|z\|_2\bigr)_+^2.
\]
The smallest budgets that permit exact fit are
\[
\rho_{\mathrm{A}}^\star := \frac{\|r_0\|_\infty}{\|z\|_1}, \qquad \rho_{\mu}^\star := \frac{\|r_0\|_2}{\|z\|_2}.
\]
Thus, in the Adam/SignGD-native max-norm geometry, the matched optimizer reaches exact fit with the smallest max-norm budget; in the Muon-native spectral geometry, the matched optimizer reaches exact fit with the smallest spectral budget.

\emph{Proof.}
Under the constraint $\|\Delta\|_{\max}\le \rho$, the rows decouple: each row $\delta_i$ satisfies $|\delta_i^\top z|\le \rho\|z\|_1$, and any scalar in $[-\rho\|z\|_1,\rho\|z\|_1]$ is attained by $\delta_i=t_i\operatorname{sign}(z)/\|z\|_1$. Each term is minimized by projecting $(r_0)_i$ onto this interval. For the spectral problem, let $u:=\Delta z$. If $\|\Delta\|_2\le \rho$, then $\|u\|_2\le \rho\|z\|_2$, and conversely $\Delta=uz^\top/\|z\|_2^2$ achieves any such $u$. The result is the Euclidean projection of $r_0$ onto the ball of radius $\rho\|z\|_2$. $\square$

\paragraph{Corollary D.6 (Matched exact-fit thresholds under native budgets).}
Let $\Delta_{\mathrm{s}}^\star$ and $\Delta_\mu^\star$ be the exact-fit corrections from Proposition D.4. Define
\[
\tilde\rho_{\mu\mid\max}^\star := \|\Delta_\mu^\star\|_{\max}, \qquad \tilde\rho_{\mathrm{s}\mid 2}^\star := \|\Delta_{\mathrm{s}}^\star\|_2.
\]
Then
\[
\tilde\rho_{\mu\mid\max}^\star = \frac{\|r_0\|_\infty\|z\|_\infty}{\|z\|_2^2} \ge \frac{\|r_0\|_\infty}{\|z\|_1} = \rho_{\mathrm{A}}^\star,
\]
and
\[
\tilde\rho_{\mathrm{s}\mid 2}^\star = \frac{\sqrt{\|z\|_0}\,\|r_0\|_2}{\|z\|_1} \ge \frac{\|r_0\|_2}{\|z\|_2} = \rho_\mu^\star.
\]
In particular, whenever either inequality is strict, there exists a nonempty native-budget interval in which the matched exact-fit correction is already feasible while the mismatched exact-fit correction is not.

\emph{Proof.}
The inequalities are equivalent to $\|z\|_2^2\le \|z\|_1\|z\|_\infty$ and $\|z\|_1\le \sqrt{\|z\|_0}\,\|z\|_2$, respectively. $\square$

\paragraph{Corollary D.7 (Old-task damage of exact-fit fine-tuning).}
Suppose $W_0x=y$. Then for any fine-tuning correction $\Delta$,
\[
L_{\mathrm{old}}(W_0+\Delta) = \frac12\|\Delta x\|_2^2 \le \frac{m}{2}\|\Delta\|_{\max}^2 \|x\|_1^2,
\]
and also $L_{\mathrm{old}}(W_0+\Delta) \le \frac12\|\Delta\|_2^2\|x\|_2^2$.
Therefore, among all exact-fit fine-tuning solutions $Wz=b$, matched SignGD minimizes the first upper bound (max-norm), while matched Muon minimizes the second (spectral norm).

\emph{Proof.}
Since $W_0x=y$, $L_{\mathrm{old}}(W_0+\Delta)=\frac12\|\Delta x\|_2^2$. The bounds follow from $|(\Delta x)_i| \le \|\Delta\|_{\max}\|x\|_1$ and $\|\Delta x\|_2\le \|\Delta\|_2\|x\|_2$. The optimality claims follow from Proposition D.4. $\square$

\subsection{A Fixed-Subspace LoRA Surrogate}

In the one-sample model above, standard LoRA with both factors trainable is too expressive: both $\Delta_{\mathrm{s}}^\star$ and $\Delta_\mu^\star$ are rank-one, so rank-one LoRA can represent them exactly. To isolate the effect of constraining updates to a low-dimensional adapter geometry, we consider the fixed-subspace surrogate
\[
W = W_0 + BA,
\]
where $A \in \mathbb{R}^{r\times n}$ is fixed, $B \in \mathbb{R}^{m\times r}$ is trainable, $B_0=0$, and $Az\neq 0$.

The fine-tuning loss becomes
\[
L(B)=\frac12\|BAz-r_0\|_2^2.
\]
Let $u := Az \in \mathbb{R}^r$. Then $L(B)=\frac12\|Bu-r_0\|_2^2$, which is of the same form as the residual problem above, but with input $u$ instead of $z$.

\paragraph{Proposition D.8 (Budgeted fine-tuning under a fixed-subspace LoRA surrogate).}
Under the fixed-subspace surrogate, SignGD and idealized Muon converge to
\[
B_{\mathrm{s}}^\star = \frac{r_0 \operatorname{sign}(u)^\top}{\|u\|_1}, \qquad B_\mu^\star = \frac{r_0 u^\top}{\|u\|_2^2}.
\]
The corresponding exact-fit adapter-budget thresholds are
\[
\rho_{\mathrm{A},\mathrm{LoRA}}^\star = \frac{\|r_0\|_\infty}{\|Az\|_1}, \qquad \rho_{\mu,\mathrm{LoRA}}^\star = \frac{\|r_0\|_2}{\|Az\|_2}.
\]
The mismatched exact-fit adapter budgets are
\[
\tilde\rho_{\mu\mid\max,\mathrm{LoRA}}^\star := \|B_\mu^\star\|_{\max} = \frac{\|r_0\|_\infty\|Az\|_\infty}{\|Az\|_2^2}, \qquad \tilde\rho_{\mathrm{s}\mid 2,\mathrm{LoRA}}^\star := \|B_{\mathrm{s}}^\star\|_{2} = \frac{\sqrt{\|Az\|_0}\,\|r_0\|_2}{\|Az\|_1}.
\]
Hence, the LoRA mismatch inflation factors satisfy
\[
\frac{\tilde\rho_{\mu\mid\max,\mathrm{LoRA}}^\star}{\rho_{\mathrm{A},\mathrm{LoRA}}^\star} = \frac{\|Az\|_1\|Az\|_\infty}{\|Az\|_2^2} \le \|Az\|_0 \le r,
\]
and
\[
\frac{\tilde\rho_{\mathrm{s}\mid 2,\mathrm{LoRA}}^\star}{\rho_{\mu,\mathrm{LoRA}}^\star} = \frac{\sqrt{\|Az\|_0}\,\|Az\|_2}{\|Az\|_1} \le \sqrt{\|Az\|_0} \le \sqrt{r}.
\]
If $r=1$, then $B_{\mathrm{s}}^\star=B_\mu^\star$, so in each native geometry the matched and mismatched exact-fit adapter budgets coincide and the mismatch inflation factor equals one. If $A=I$, they reduce to the full fine-tuning thresholds from Proposition D.5 and Corollary D.6.

In addition, if $W_0x=y$, then
\[
L_{\mathrm{old}}(W_0+B_{\mathrm{s}}^\star A) = \frac12\|r_0\|_2^2 \frac{\langle \operatorname{sign}(Az),Ax\rangle^2}{\|Az\|_1^2},
\]
and
\[
L_{\mathrm{old}}(W_0+B_\mu^\star A) = \frac12\|r_0\|_2^2 \frac{\langle Az,Ax\rangle^2}{\|Az\|_2^4}.
\]

\emph{Proof.}
Apply Proposition D.4 and Proposition D.5 to the optimization problem in $B$, with input $u=Az$. The inflation factor bounds follow from $\|u\|_1\le \|u\|_0\|u\|_\infty$, $\|u\|_\infty^2\le \|u\|_2^2$, and $\|u\|_1\ge \|u\|_2$, combined with $\|u\|_0\le r$. $\square$

\paragraph{Summary.}
Propositions D.4--D.8 formalize the mismatch mechanism in this simplified setting. Fine-tuning from a pretrained model is a residual-fitting problem in which the relevant object is the correction matrix $\Delta$. Under a fixed native update budget, the matched optimizer reaches exact fit with the smallest geometry-aligned budget; under exact fit, old-task damage is controlled by the same native geometry. The fixed-subspace LoRA surrogate replaces the full geometry $z$ by the effective adapter geometry $Az$, so both the exact-fit thresholds and the old-task damage formulas are governed by the compressed adapter geometry. In particular, the worst-case LoRA mismatch inflation is at most $r$ in Adam-native max geometry and at most $\sqrt{r}$ in Muon-native spectral geometry, while $r=1$ collapses the gap and $A=I$ recovers the full fine-tuning formulas.

\paragraph{Numerical Verification.}
Figure~\ref{fig:implicit_bias_loss} shows the loss curves for the numerical verification experiment in Figure~\ref{fig:implicit_bias_and_stable_rank} (left).
Both optimizers successfully minimize the loss to near-zero, confirming that they both find solutions to the underdetermined system, albeit with different implicit biases.

\begin{figure}[h]
\centering
\includegraphics[width=0.48\columnwidth]{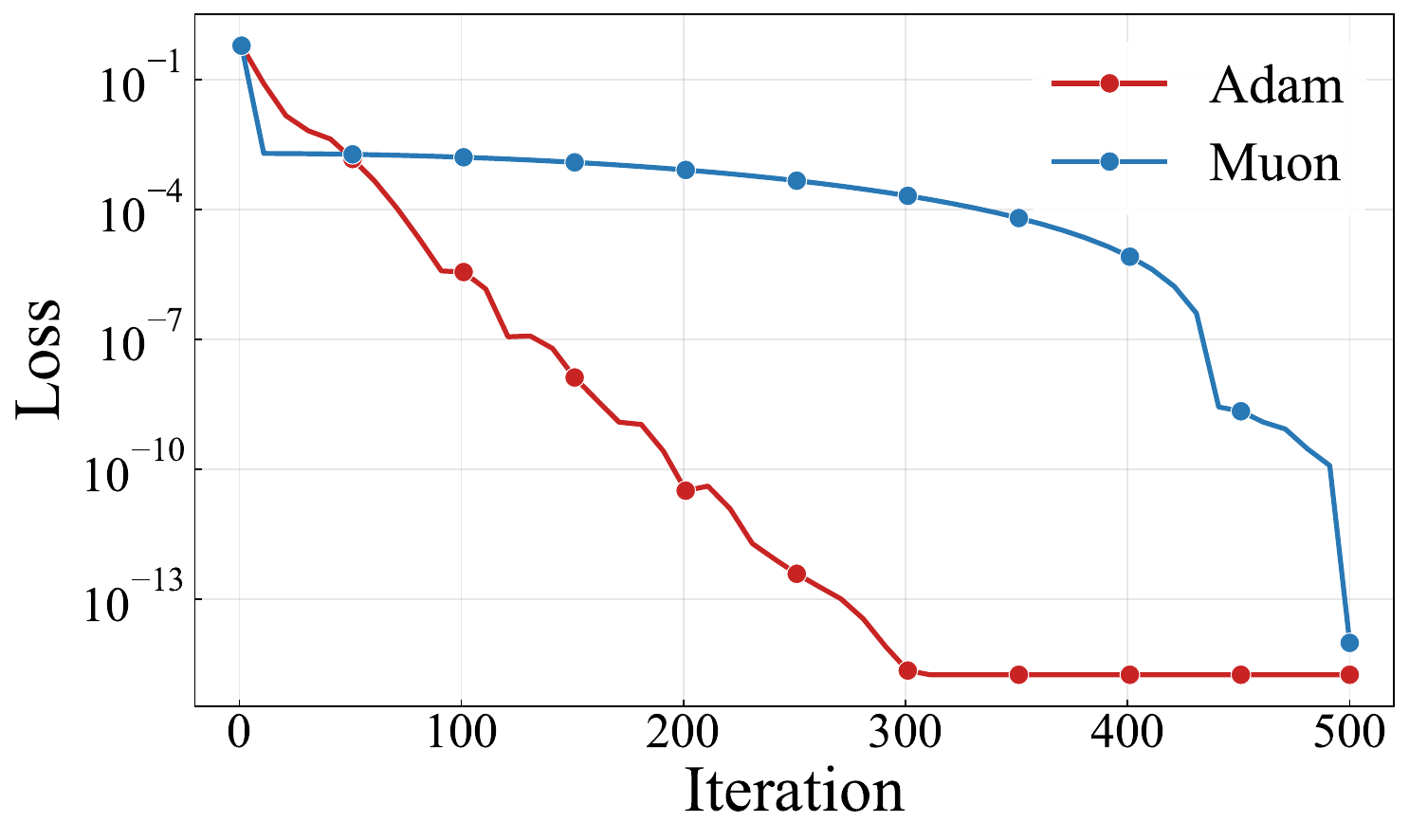}
\caption{Loss curves for the implicit bias experiment. Both Adam and Muon converge to near-zero loss, finding valid solutions to the underdetermined linear system.}
\label{fig:implicit_bias_loss}
\end{figure}

\section{Experimental Details}
\label{app:details}

\subsection{Natural Language Understanding}
\label{app:nlu_details}

\paragraph{Training Details.}
We use T5-Base.
Models are trained for 5 epochs on MRPC and CoLA, and 3 epochs on SST-2, QNLI, and MNLI, with a batch size of 64 and a sequence length of 128.
For practical evaluation, we use the original validation set as the test set
and hold out 10\% of the training data for validation.
We perform 5 evaluations during training and select the best checkpoint based on validation performance; in practice, the final checkpoint is consistently the best across all experiments.
All experiments use FP32 precision.

\paragraph{Optimizer Settings.}
For Adam, we use $\beta_1 = 0.9$, $\beta_2 = 0.999$, $\epsilon = \text{1e-8}$,
and weight decay of 0, with a warmup ratio of 0.03 and cosine learning rate decay.
For Muon, we use momentum $= 0.95$ with Nesterov momentum enabled,
Newton-Schulz iteration with 5 steps (NS5) computed in BF16 precision,
no weight decay, and shape-dependent learning rate scaling (see Appendix~\ref{app:muon_variants}).
We also evaluate Muon with Polar Express (PE) coefficients~\citep{amsel2025polar},
using their default settings with lower bound $\ell = \text{1e-3}$, 10 iterations,
safety factor 1e-2, and cushion factor 0.02.

Following the standard Muon implementation~\citep{jordan2024muon},
embedding layers and the language modeling (LM) head are optimized with Adam
while other parameters use Muon.
Since T5 uses simplified layer normalization without bias terms, there are no 1D parameters.
For full fine-tuning, the embedding layer and LM head use Adam,
with $\beta_1 = 0.9$, $\beta_2 = 0.95$, $\epsilon = \text{1e-8}$,
and no weight decay, while all other parameters use Muon.
For LoRA fine-tuning, since LoRA is only applied to linear layers, all trainable parameters are optimized entirely with Muon.

\paragraph{Learning Rate Selection.}
We perform a learning rate sweep over
$\{$1e-5, 5e-5, 1e-4, 5e-4, 1e-3, 2e-3, 5e-3, 1e-2$\}$
for each method on each dataset.
Table~\ref{tab:nlu_lr} shows the selected learning rates.

\begin{table}[h]
\centering
\caption{Selected learning rates for NLU experiments.}
\label{tab:nlu_lr}
\begin{tabular}{lccccc}
\toprule
Method & CoLA & MNLI & MRPC & QNLI & SST-2 \\
\midrule
Full-Adam & 1e-4 & 1e-4 & 1e-4 & 5e-5 & 1e-4 \\
Full-Muon & 5e-4 & 1e-4 & 5e-4 & 1e-4 & 1e-4 \\
Full-Muon-PE & 1e-3 & 1e-4 & 5e-4 & 1e-4 & 1e-4 \\
\midrule
LoRA-Adam & 1e-3 & 1e-3 & 2e-3 & 5e-4 & 5e-4 \\
LoRA-Muon & 2e-3 & 1e-3 & 2e-3 & 5e-4 & 1e-3 \\
LoRA-Muon-PE & 2e-3 & 1e-3 & 2e-3 & 5e-4 & 1e-3 \\
\bottomrule
\end{tabular}
\end{table}

\subsection{Natural Language Generation}
\label{app:nlg_details}

\paragraph{Training Details.}
We use Llama 2-7B.
Models are trained for 1 epoch with a batch size of 32 and a sequence length of 1024.
The backbone model uses BF16 precision, while LoRA's A and B matrices
use FP32 precision following the PEFT~\citep{peft} implementation.
We apply LoRA with rank $r=8$ and $\alpha=16$ to all linear layers
except for the embeddings and the language model head.
We found that the final checkpoint consistently achieves the best performance, so we evaluate on the final model.

\paragraph{Optimizer Settings.}
For Adam, we use $\beta_1 = 0.9$, $\beta_2 = 0.999$, $\epsilon = \text{1e-8}$,
and no weight decay, with a warmup ratio of 0.03 and cosine learning rate decay.
For Muon, we use the same settings as in the NLU experiments
(Section~\ref{app:nlu_details}).
Following the standard Muon implementation,
1D parameters (biases and layer normalization weights),
embedding layers, and the LM head are optimized with Adam
while other parameters use Muon.
For LoRA fine-tuning, all trainable parameters are optimized with Muon.

\paragraph{Evaluation.}
All evaluations are conducted using lm-evaluation-harness~\citep{eval-harness}.
For HumanEval and GSM8K, we modify the prompts to match the instruction tuning format
and use greedy decoding with a maximum generation length of 1024 tokens.
For commonsense benchmarks, we use the default prompts.

\paragraph{Learning Rate Selection.}
We perform a learning rate sweep over
$\{$1e-6, 5e-6, 8e-6, 1e-5, 2e-5, 5e-5, 1e-4, 5e-4$\}$
for each method on each task.
Table~\ref{tab:nlg_lr} shows the selected learning rates.

\begin{table}[h]
\centering
\caption{Selected learning rates for NLG experiments.}
\label{tab:nlg_lr}
\begin{tabular}{lccc}
\toprule
Method & Math & Code & Commonsense \\
\midrule
Full-Adam & 1e-5 & 1e-5 & 8e-6 \\
Full-Muon & 5e-5 & 5e-5 & 2e-5 \\
Full-Muon-PE & 5e-5 & 5e-5 & 2e-5 \\
\midrule
LoRA-Adam & 5e-4 & 5e-4 & 2e-4 \\
LoRA-Muon & 5e-4 & 5e-4 & 1e-4 \\
LoRA-Muon-PE & 5e-4 & 5e-4 & 1e-4 \\
\bottomrule
\end{tabular}
\end{table}

\paragraph{Larger-Scale Experiment: Llama 2-13B.}
We also extended our LoRA experiments to Llama 2-13B on CodeFeedback using the same settings as above.
We swept learning rates in $\{$1e-5, 3e-5, 5e-5, 7e-5, 1e-4, 3e-4, 5e-4, 7e-4, 9e-4$\}$ and report HumanEval Pass@1 averaged over 3 seeds (best learning rate = 7e-4 for both methods):
\begin{itemize}
    \item LoRA-Adam: 33.17{\scriptsize$\pm$1.17}\%
    \item LoRA-Muon: 34.76{\scriptsize$\pm$2.44}\%
\end{itemize}
LoRA-Muon performs comparably to LoRA-Adam at this larger scale, consistent with the Llama 2-7B results in Table~\ref{tab:nlg}.
For full fine-tuning at 13B, memory constraints prevent standard DDP (which we use to ensure fair comparison with the original Muon implementation); we leave this to future work as Muon's compatibility with memory-reduction frameworks improves.

\subsection{Image Classification}
\label{app:image_details}

\paragraph{Training Details.}
We use CLIP ViT-B/32 and freeze the text tower.
For each dataset, we use templates and class names from CLIP-Benchmark~\citep{cherti2025clipbenchmark} to build a template-ensemble text classifier, and the resulting text features (and logit scale) are cached and treated as constants.
We then optimize the vision branch with a cross-entropy objective over the induced image--text logits.
All models are trained for 40 epochs with a batch size of 256 under BF16 mixed precision.
We use cosine learning rate decay with a warmup ratio of 0.03, weight decay of 0.1, and gradient clipping with a max norm of 1.0.
We use native train/test splits from dataset sources, select the final checkpoint, and report its test accuracy.

\paragraph{Optimizer Settings.}
For Adam, we use $\beta_1=0.9$, $\beta_2=0.999$, $\epsilon=\text{1e-8}$, and weight decay 0.1.
For Muon, we use momentum $0.95$ with Nesterov momentum enabled, Newton-Schulz iteration with 5 steps computed in BF16, and the same weight decay 0.1.
Following standard Muon practice, matrix-shaped parameters (including the CLIP visual projection) are optimized with Muon, while embedding-like and other non-matrix parameters (including 1D parameters and patch/class/position embeddings) are optimized with an Adam sub-optimizer.
For LoRA fine-tuning, we inject adapters into the \texttt{q\_proj}/\texttt{v\_proj}/\texttt{visual\_projection} layers with rank $r=8$, $\alpha=16$, and dropout 0.0; all LoRA adapters use Muon regardless of their parent layer.
Muon-PE uses PE coefficients in the Newton-Schulz backend.

\paragraph{Learning Rate Selection.}
For full fine-tuning, we perform a learning rate sweep over $\{$5e-6, 1e-5, 5e-5, 1e-4$\}$.
For LoRA, we sweep $\{$5e-5, 1e-4, 5e-4, 1e-3$\}$.
Table~\ref{tab:image_lr} shows the selected learning rates.

\begin{table}[htbp]
\centering
\caption{Selected learning rates for image classification experiments.}
\label{tab:image_lr}
\begin{tabular}{lcccccc}
\toprule
Method & StanfordCars & DTD & GTSRB & RESISC45 & SUN397 & SVHN \\
\midrule
Full-Adam    & 1e-5 & 1e-5 & 1e-5 & 1e-5 & 1e-5 & 1e-5 \\
Full-Muon    & 5e-5 & 1e-5 & 1e-5 & 1e-5 & 1e-5 & 5e-5 \\
Full-Muon-PE & 5e-5 & 1e-5 & 1e-5 & 1e-5 & 1e-5 & 5e-5 \\
\midrule
LoRA-Adam    & 1e-3 & 1e-3 & 1e-3 & 1e-3 & 5e-4 & 1e-3 \\
LoRA-Muon    & 1e-3 & 1e-3 & 5e-4 & 1e-3 & 5e-4 & 1e-3 \\
LoRA-Muon-PE & 1e-3 & 1e-3 & 1e-3 & 1e-3 & 5e-4 & 1e-3 \\
\bottomrule
\end{tabular}
\end{table}

\paragraph{Wall-Clock Time.}
Tables~\ref{tab:wallclock_llama} and~\ref{tab:wallclock_clip} report training time for Llama 2-7B and CLIP ViT-B/32 experiments, respectively.

\begin{table}[htbp]
\centering
\caption{Wall-clock time (hours) for Llama 2-7B fine-tuning on 8$\times$AMD MI210 GPUs, averaged over 3 seeds. For full fine-tuning, Adam uses DeepSpeed ZeRO-2 (required to fit in memory), while Muon's memory-efficient design enables standard DDP, accounting for the larger gap. Parentheses show relative time vs.\ Adam (\textcolor{slowred}{red} = slower, \textcolor{fastgreen}{green} = faster).}
\label{tab:wallclock_llama}
\begin{tabular}{llccc}
\toprule
& & Math & Code & Commonsense \\
\midrule
\multirow{3}{*}{Full}
  & Adam    & 2.52h & 3.43h & 1.88h \\
  & Muon    & 7.25h \textcolor{slowred}{(2.9$\times$)} & 8.14h \textcolor{slowred}{(2.4$\times$)} & 4.39h \textcolor{slowred}{(2.3$\times$)} \\
  & Muon-PE & 7.26h \textcolor{slowred}{(2.9$\times$)} & 8.15h \textcolor{slowred}{(2.4$\times$)} & 4.39h \textcolor{slowred}{(2.3$\times$)} \\
\midrule
\multirow{3}{*}{LoRA}
  & Adam    & 1.02h & 1.72h & 0.96h \\
  & Muon    & 1.27h \textcolor{slowred}{(1.2$\times$)} & 1.97h \textcolor{slowred}{(1.1$\times$)} & 1.09h \textcolor{slowred}{(1.1$\times$)} \\
  & Muon-PE & 1.28h \textcolor{slowred}{(1.3$\times$)} & 1.99h \textcolor{slowred}{(1.2$\times$)} & 1.10h \textcolor{slowred}{(1.1$\times$)} \\
\bottomrule
\end{tabular}
\end{table}

\begin{table}[htbp]
\centering
\caption{Wall-clock time (minutes) for CLIP ViT-B/32 fine-tuning on 1$\times$A100 GPU, averaged over 3 seeds. Parentheses show relative time vs.\ Adam (\textcolor{slowred}{red} = slower, \textcolor{fastgreen}{green} = faster).}
\label{tab:wallclock_clip}
\small
\begin{tabular}{llcccccc}
\toprule
& & DTD & GTSRB & RESISC & Cars & SUN397 & SVHN \\
\midrule
\multirow{3}{*}{Full}
  & Adam    & 3.8 & 17.9 & 11.6 & 14.4 & 72.9 & 38.6 \\
  & Muon    & 4.0 \textcolor{slowred}{(1.1$\times$)} & 20.7 \textcolor{slowred}{(1.2$\times$)} & 13.9 \textcolor{slowred}{(1.2$\times$)} & 14.7 \textcolor{slowred}{(1.0$\times$)} & 73.5 \textcolor{slowred}{(1.0$\times$)} & 46.7 \textcolor{slowred}{(1.2$\times$)} \\
  & Muon-PE & 4.0 \textcolor{slowred}{(1.1$\times$)} & 20.5 \textcolor{slowred}{(1.1$\times$)} & 14.2 \textcolor{slowred}{(1.2$\times$)} & 14.4 \textcolor{slowred}{(1.0$\times$)} & 68.3 \textcolor{fastgreen}{(0.9$\times$)} & 46.1 \textcolor{slowred}{(1.2$\times$)} \\
\midrule
\multirow{3}{*}{LoRA}
  & Adam    & 3.6 & 16.2 & 10.5 & 14.4 & 75.0 & 33.6 \\
  & Muon    & 3.8 \textcolor{slowred}{(1.1$\times$)} & 17.6 \textcolor{slowred}{(1.1$\times$)} & 11.5 \textcolor{slowred}{(1.1$\times$)} & 14.7 \textcolor{slowred}{(1.0$\times$)} & 75.1 \textcolor{slowred}{(1.0$\times$)} & 37.3 \textcolor{slowred}{(1.1$\times$)} \\
  & Muon-PE & 3.9 \textcolor{slowred}{(1.1$\times$)} & 17.5 \textcolor{slowred}{(1.1$\times$)} & 12.2 \textcolor{slowred}{(1.2$\times$)} & 14.0 \textcolor{fastgreen}{(1.0$\times$)} & 69.1 \textcolor{fastgreen}{(0.9$\times$)} & 37.2 \textcolor{slowred}{(1.1$\times$)} \\
\bottomrule
\end{tabular}
\end{table}

\subsection{LoRA Rank Study}
\label{app:rank_study_details}

This section provides details for the LoRA rank study in Section~\ref{subsec:rank}.
For each rank $r \in \{2, 4, 8, 16, 32, 64, 128, 256, 512\}$, we set $\alpha = 2r$ and perform a learning rate sweep. For MetaMath and CodeFeedback, the sweep covers $\{$1e-5, 3e-5, 5e-5, 7e-5, 1e-4, 3e-4, 5e-4, 7e-4, 1e-3$\}$, and all other hyperparameters match the NLG experiments (Section~\ref{app:nlg_details}).
For StanfordCars, the sweep covers $\{$1e-4, 3e-4, 5e-4, 7e-4, 1e-3, 3e-3, 5e-3, 7e-3, 1e-2$\}$, and all other hyperparameters match the image classification experiments (Section~\ref{app:image_details}).
Tables~\ref{tab:rank_lr_math}, \ref{tab:rank_lr_code}, and~\ref{tab:rank_lr_vision} show the selected learning rates for each rank.

\begin{table}[h]
\centering
\caption{Selected learning rates for LoRA rank study on MetaMath.}
\label{tab:rank_lr_math}
\begin{tabular}{lcc}
\toprule
Rank & LoRA-Muon & LoRA-Adam \\
\midrule
$r=2$ & 7e-4 & 1e-3 \\
$r=4$ & 5e-4 & 1e-3 \\
$r=8$ & 5e-4 & 5e-4 \\
$r=16$ & 3e-4 & 5e-4 \\
$r=32$ & 3e-4 & 3e-4 \\
$r=64$ & 3e-4 & 1e-4 \\
$r=128$ & 1e-4 & 1e-4 \\
$r=256$ & 1e-4 & 1e-4 \\
$r=512$ & 7e-5 & 7e-5 \\
\bottomrule
\end{tabular}
\end{table}

\begin{table}[h]
\centering
\caption{Selected learning rates for LoRA rank study on Code-Feedback.}
\label{tab:rank_lr_code}
\begin{tabular}{lcc}
\toprule
Rank & LoRA-Muon & LoRA-Adam \\
\midrule
$r=2$ & 5e-4 & 1e-3 \\
$r=4$ & 7e-4 & 7e-4 \\
$r=8$ & 5e-4 & 5e-4 \\
$r=16$ & 5e-4 & 5e-4 \\
$r=32$ & 5e-4 & 3e-4 \\
$r=64$ & 3e-4 & 3e-4 \\
$r=128$ & 3e-4 & 3e-4 \\
$r=256$ & 1e-4 & 1e-4 \\
$r=512$ & 7e-5 & 1e-4 \\
\bottomrule
\end{tabular}
\end{table}

\begin{table}[h]
\centering
\caption{Selected learning rates for LoRA rank study on StanfordCars.}
\label{tab:rank_lr_vision}
\begin{tabular}{lcc}
\toprule
Rank & LoRA-Muon & LoRA-Adam \\
\midrule
$r=2$ & 1e-2 & 1e-2 \\
$r=4$ & 5e-3 & 7e-3 \\
$r=8$ & 3e-3 & 5e-3 \\
$r=16$ & 3e-3 & 3e-3 \\
$r=32$ & 1e-3 & 1e-3 \\
$r=64$ & 1e-3 & 1e-3 \\
$r=128$ & 7e-4 & 5e-4 \\
$r=256$ & 3e-4 & 3e-4 \\
$r=512$ & 3e-4 & 3e-4 \\
\bottomrule
\end{tabular}
\end{table}

\subsection{Catastrophic Forgetting Evaluation}
\label{app:forgetting_details}

This section provides details for the catastrophic forgetting evaluation in Section~\ref{subsec:forgetting}.
We use the Llama 2-7B models fine-tuned on MetaMath in Section~\ref{subsec:nlg} and evaluate them on benchmarks that assess knowledge acquired during pretraining but unrelated to the fine-tuning domain. Following \citet{kotha2024understanding} and \citet{li-etal-2024-revisiting}, we exclude benchmarks where performance improves after fine-tuning,
as these do not reflect forgetting of pretrained knowledge.

\paragraph{Commonsense Reasoning (for Math Fine-tuning).}
For models fine-tuned on MetaMath, we evaluate on commonsense reasoning benchmarks:
ARC-Challenge, ARC-Easy, HellaSwag, OpenBookQA, and PIQA.
We exclude WinoGrande and BoolQ as they showed improved performance after fine-tuning.
This filtering ensures that the reported metrics genuinely reflect forgetting rather than being confounded by task transfer effects.

\paragraph{Weight Distance from Pretrained Model.}
Table~\ref{tab:weight_distance} reports the L2 and cosine distance between fine-tuned and pretrained weights for the Llama 2-7B models in Section~\ref{subsec:nlg}, normalized so that Adam = 1.0$\times$.

\begin{table}[htbp]
\centering
\caption{Distance from pretrained weights relative to Adam (normalized so Adam = 1.0$\times$). Values $>$1 indicate the optimizer moves weights farther from the pretrained model than Adam; $<$1 indicates closer.}
\label{tab:weight_distance}
\begin{tabular}{llcccc}
\toprule
& & \multicolumn{2}{c}{Full Fine-Tuning} & \multicolumn{2}{c}{LoRA} \\
\cmidrule(lr){3-4} \cmidrule(lr){5-6}
Dataset & Optimizer & L2 & Cos.\ Dist. & L2 & Cos.\ Dist. \\
\midrule
\multirow{3}{*}{Math}
  & Adam    & 1.00$\times$ & 1.00$\times$ & 1.00$\times$ & 1.00$\times$ \\
  & Muon    & 2.37$\times$ & 5.61$\times$ & 0.83$\times$ & 0.69$\times$ \\
  & Muon-PE & 2.71$\times$ & 7.36$\times$ & 0.91$\times$ & 0.82$\times$ \\
\midrule
\multirow{3}{*}{Code}
  & Adam    & 1.00$\times$ & 1.00$\times$ & 1.00$\times$ & 1.00$\times$ \\
  & Muon    & 2.44$\times$ & 5.93$\times$ & 0.79$\times$ & 0.62$\times$ \\
  & Muon-PE & 2.62$\times$ & 6.86$\times$ & 0.80$\times$ & 0.65$\times$ \\
\midrule
\multirow{3}{*}{Commonsense}
  & Adam    & 1.00$\times$ & 1.00$\times$ & 1.00$\times$ & 1.00$\times$ \\
  & Muon    & 0.80$\times$ & 0.65$\times$ & 0.38$\times$ & 0.15$\times$ \\
  & Muon-PE & 0.87$\times$ & 0.75$\times$ & 0.42$\times$ & 0.18$\times$ \\
\bottomrule
\end{tabular}
\end{table}

\subsection{LoRA Variants}
\label{app:variants_details}

For the experiments in Section~\ref{subsec:variants}, we use the same training setup as the NLU experiments (Section~\ref{app:nlu_details}).
Table~\ref{tab:variants_lr} shows the selected learning rates for each method.

\begin{table}[h]
\centering
\caption{Selected learning rates for LoRA variants experiments.}
\label{tab:variants_lr}
\begin{tabular}{lccccc}
\toprule
Method & CoLA & MNLI & MRPC & QNLI & SST-2 \\
\midrule
rsLoRA-Adam & 5e-4 & 5e-4 & 2e-3 & 1e-3 & 5e-4 \\
LoRA-One-Adam & 5e-4 & 1e-3 & 1e-3 & 5e-4 & 1e-3 \\
PiSSA-Adam & 5e-4 & 5e-4 & 5e-4 & 1e-4 & 5e-4 \\
rsLoRA-Muon-PE & 1e-3 & 5e-4 & 1e-3 & 5e-4 & 5e-4 \\
LoRA-One-Muon-PE & 2e-3 & 1e-3 & 2e-3 & 1e-3 & 5e-4 \\
PiSSA-Muon-PE & 5e-4 & 5e-4 & 1e-3 & 5e-4 & 5e-4 \\
\midrule
AdaLoRA-Adam & 5e-3 & 1e-3 & 5e-3 & 1e-2 & 2e-3 \\
LoRA-Pro-Adam & 5e-4 & 5e-4 & 1e-3 & 1e-4 & 1e-4 \\
LoRA-RITE-Adam & 1e-3 & 2e-3 & 1e-3 & 1e-3 & 1e-3 \\
DoRA-Adam & 1e-3 & 5e-4 & 2e-3 & 5e-4 & 2e-3 \\
\bottomrule
\end{tabular}
\end{table}

We also experimented with LoFT~\citep{tastan2025loft}, but after tuning, it did not outperform LoRA-Adam on our benchmarks.
For reference, on GLUE with T5-Base, LoFT achieves an average of 88.83\% (CoLA: 82.45{\scriptsize$\pm$0.75}\%, MNLI: 86.14{\scriptsize$\pm$0.07}\%, MRPC: 87.99{\scriptsize$\pm$0.42}\%, QNLI: 93.15{\scriptsize$\pm$0.11}\%, SST-2: 94.42{\scriptsize$\pm$0.24}\%), compared to LoRA-Adam's 88.93\%.
Among algorithm-modifying methods, we included LoRA-Pro and LoRA-RITE, which outperform LoRA-Adam on this benchmark.

\paragraph{Variant-Specific Settings.}
We use the default settings from each method's official implementation unless otherwise specified.
\begin{itemize}
    \item \textbf{AdaLoRA}: We set the target average rank to $r=8$ to match the rank used in other methods.
    \item \textbf{PiSSA}: We use full SVD for initialization.
    \item \textbf{LoRA-One}: We use \texttt{stable\_gamma=64} and approximate the negative gradient $-G$ using \texttt{torch.svd\_lowrank}~\citep{Ansel_PyTorch_2_Faster_2024} with $q=512$ and \texttt{niter=16}. For gradient estimation, we use a batch size of 1 and 8 iterations.
\end{itemize}

\section{Additional Results}
\label{app:additional_plots}

\subsection{Weight Spectral Analysis}
\label{app:spectral_analysis}

This section provides additional spectral analysis of the attention weights during NanoChat pretraining, complementing Figure~\ref{fig:implicit_bias_and_stable_rank} (right) in the main text.

Figure~\ref{fig:spectral_combined} shows both the stable rank and SVD entropy of the attention QKV projection weights.
For a weight matrix $W$ with singular values $\sigma_1 \geq \sigma_2 \geq \cdots \geq \sigma_r$, we define:
\begin{itemize}
    \item \textbf{Stable rank}: $\text{srank}(W) = \|W\|_F^2 / \|W\|_2^2 = \sum_i \sigma_i^2 / \sigma_1^2$. This measures the effective dimensionality of the weight matrix; a higher stable rank indicates the matrix utilizes more of its capacity rather than being dominated by a few large singular values.
    \item \textbf{SVD entropy}: $H(W) = -\sum_i p_i \log p_i / \log r$, where $p_i = \sigma_i^2 / \sum_j \sigma_j^2$. This quantifies the dispersion of singular values, normalized to $[0, 1]$; a higher entropy indicates a more uniform distribution.
\end{itemize}

\begin{figure}[h]
\centering
\includegraphics[width=0.35\columnwidth]{figures/pretraining_dynamics/attn_qkv_stable_rank.pdf}
\hspace{1em}
\includegraphics[width=0.35\columnwidth]{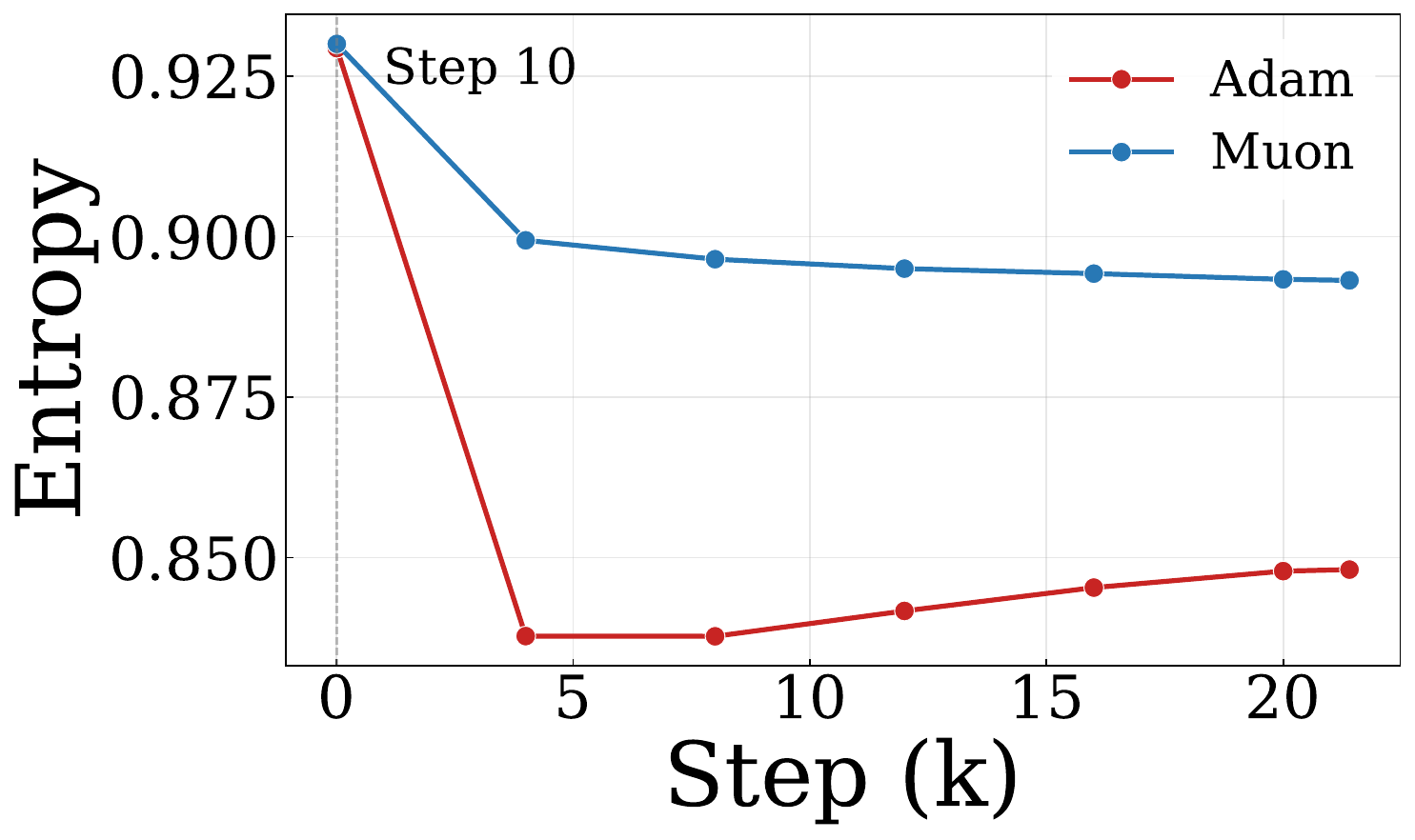}
\caption{Spectral properties of attention QKV projection weights during NanoChat pretraining. \textbf{Left:} Stable rank. \textbf{Right:} SVD entropy. Muon-trained weights consistently maintain higher stable rank and entropy throughout training, indicating a more distributed spectral structure.}
\label{fig:spectral_combined}
\end{figure}

Figure~\ref{fig:spectral_by_group} provides a more detailed breakdown, showing the stable rank and SVD entropy separately for query (Q), key (K), and value (V) projections.
The difference between Muon and Adam is consistent across all three projection types, with Muon producing weights that have a higher stable rank and entropy in each case.

\begin{figure}[h]
\centering
\includegraphics[width=0.67\columnwidth]{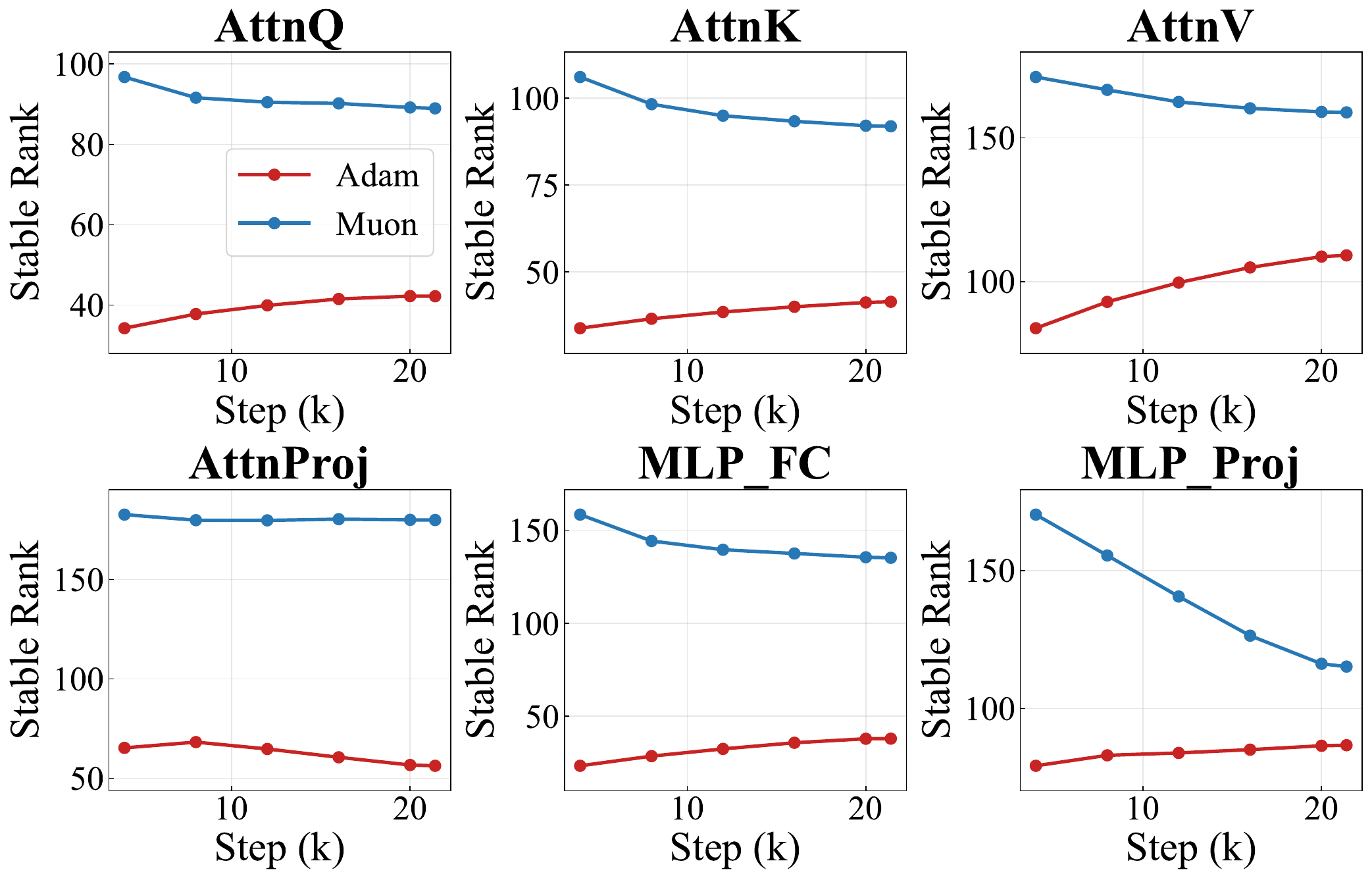}
\\[0.5em]
\includegraphics[width=0.67\columnwidth]{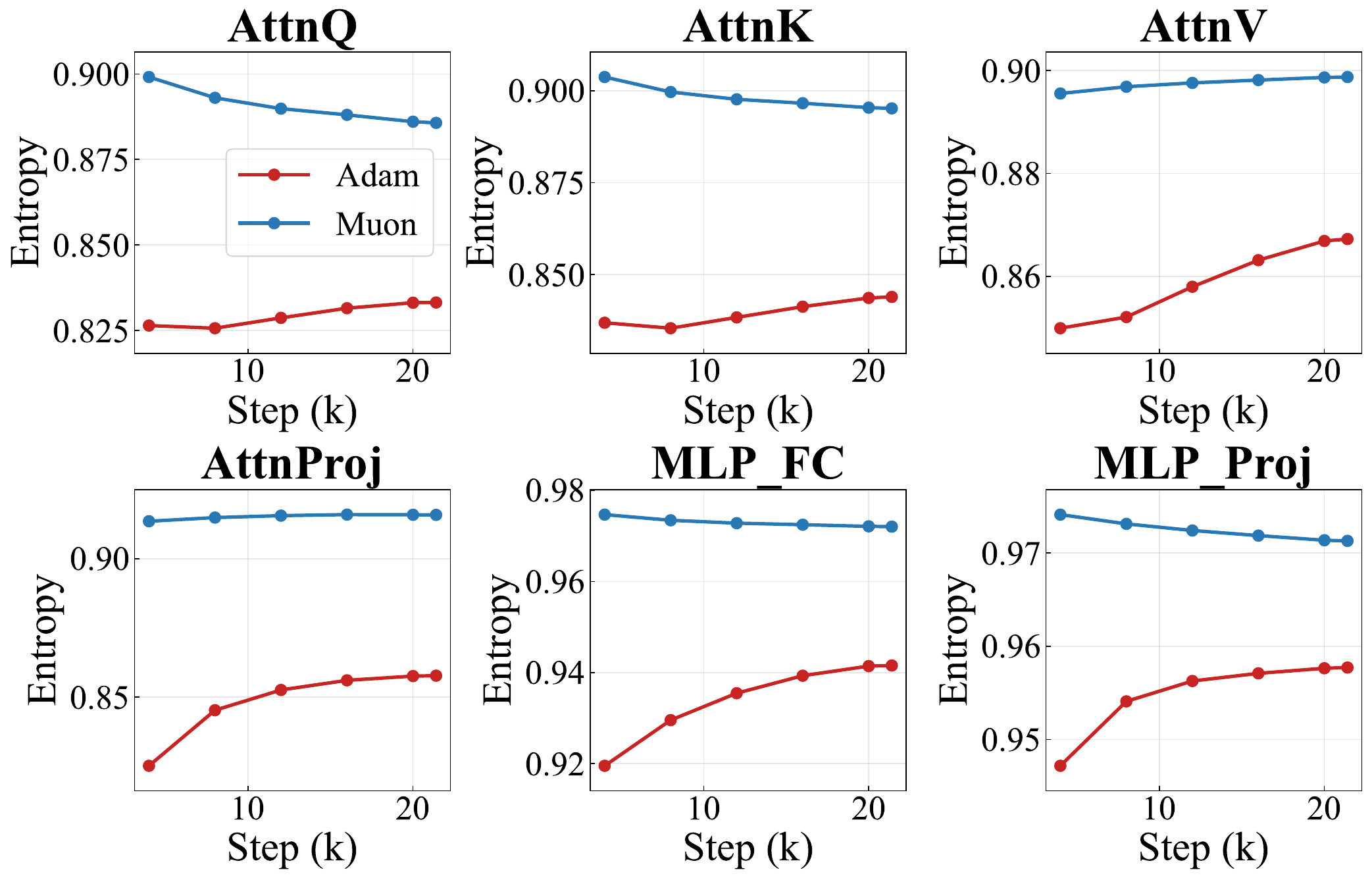}
\caption{Detailed spectral analysis by parameter type. \textbf{Top:} Stable rank for Q, K, V projections and MLP layers separately. \textbf{Bottom:} SVD entropy for Q, K, V projections and MLP layers separately. The spectral differences between Muon and Adam are consistent across all parameter types.}
\label{fig:spectral_by_group}
\end{figure}

\subsection{Spectral Analysis of LoRA Matrices}
\label{app:lora_spectral}

We analyze the spectral properties of the LoRA $A$ and $B$ matrices during Llama 2-7B fine-tuning, using the same settings as Table~\ref{tab:nlg}.
Figures~\ref{fig:lora_spectral_math}, \ref{fig:lora_spectral_code}, and~\ref{fig:lora_spectral_commonsense} show the stable rank and normalized SVD entropy of LoRA matrices throughout training for attention (Q/K/V/O) and dense layers across all three tasks.

LoRA-Muon consistently produces higher stable rank (${\sim}$6--7 vs.\ ${\sim}$3--5 for Adam) and entropy (${\sim}$0.98--1.0 vs.\ ${\sim}$0.80--0.95 for Adam) across all layer types, mirroring the patterns observed in pretrained weights (Section~\ref{subsec:implicit_bias} and Appendix~\ref{app:spectral_analysis}).
This confirms that Muon's implicit bias toward uniform singular value distributions extends to the LoRA matrices.

Notably, LoRA learns on freshly initialized $A$ and $B$ matrices rather than directly modifying pretrained weights.
This may help explain why LoRA mitigates mismatch: Muon can freely express its spectral implicit bias on these new matrices, while the pretrained knowledge remains preserved in the frozen base weights.
In contrast, full fine-tuning forces Muon to directly alter Adam-shaped weights, causing disruption.

\begin{figure}[h]
\centering
\includegraphics[width=0.85\columnwidth]{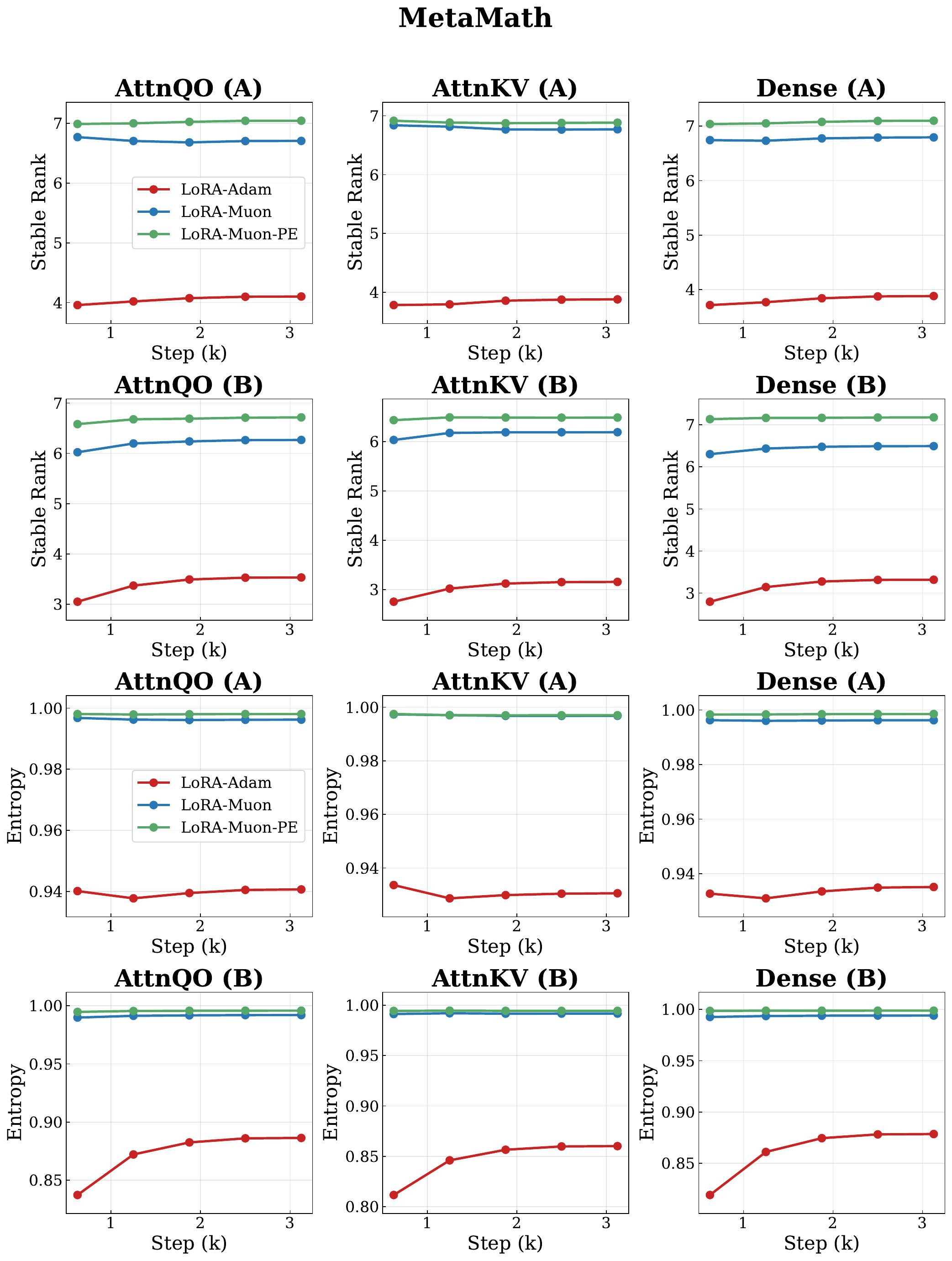}
\caption{Stable rank and SVD entropy of LoRA matrices during Llama 2-7B fine-tuning on MetaMath.}
\label{fig:lora_spectral_math}
\end{figure}

\begin{figure}[h]
\centering
\includegraphics[width=0.85\columnwidth]{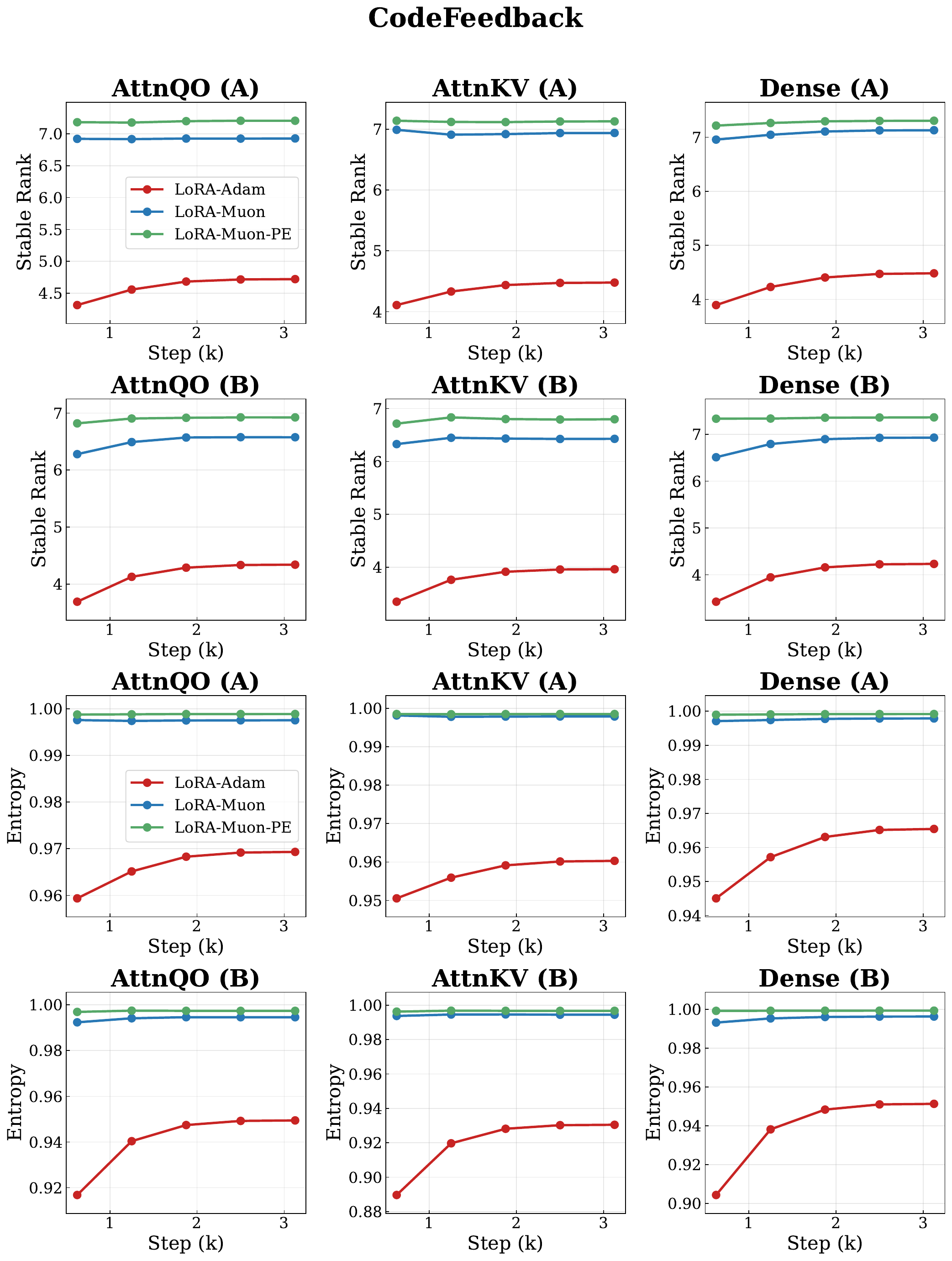}
\caption{Stable rank and SVD entropy of LoRA matrices during Llama 2-7B fine-tuning on CodeFeedback.}
\label{fig:lora_spectral_code}
\end{figure}

\begin{figure}[h]
\centering
\includegraphics[width=0.85\columnwidth]{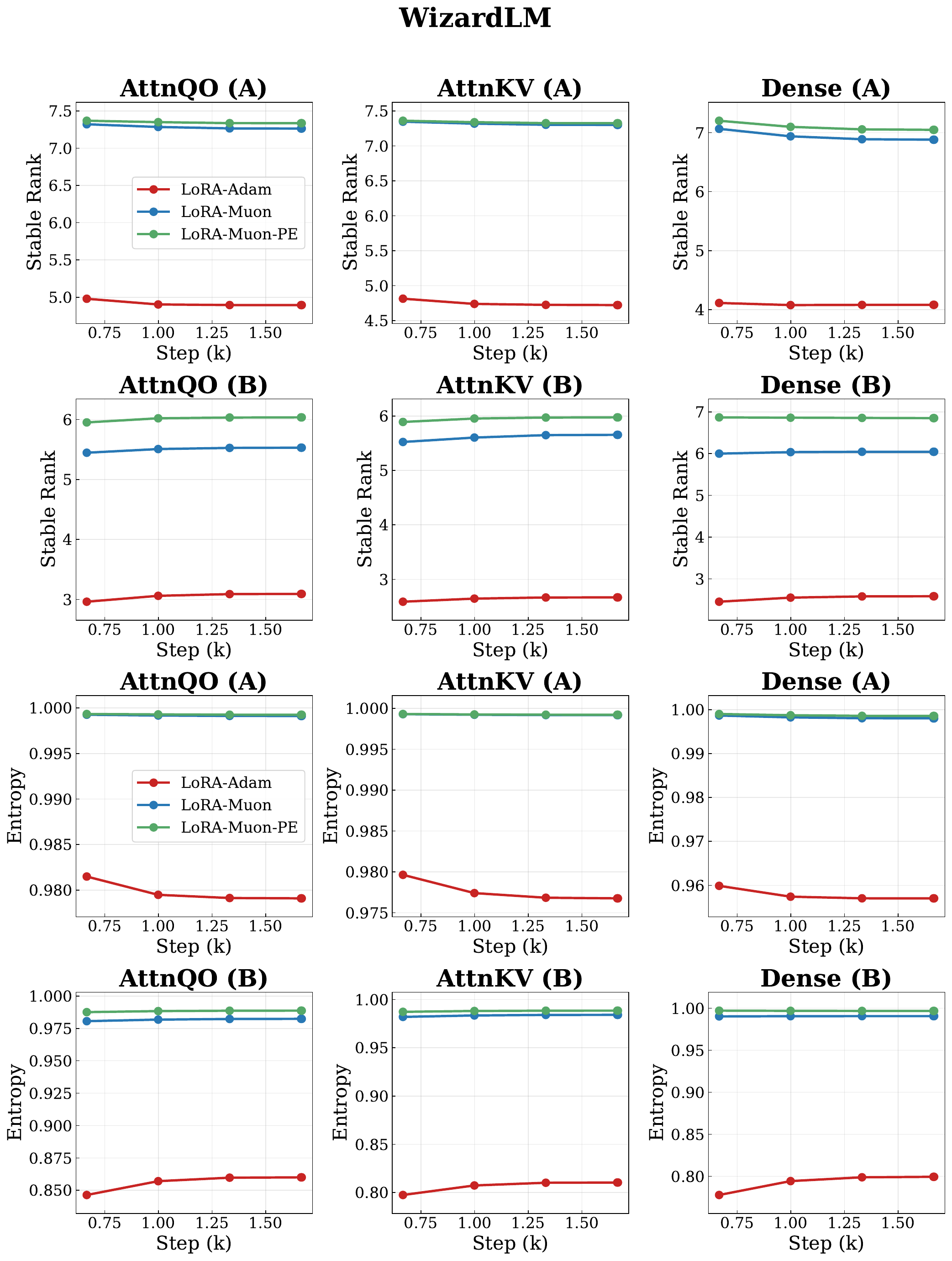}
\caption{Stable rank and SVD entropy of LoRA matrices during Llama 2-7B fine-tuning on WizardLM (commonsense).}
\label{fig:lora_spectral_commonsense}
\end{figure}

\section{Computational Resources}
\label{app:compute}

T5-Base experiments (NLU and LoRA variants) were trained and evaluated on a single AMD Instinct MI210 GPU.
Llama 2-7B/13B experiments (NLG, rank study, and catastrophic forgetting) were trained on 8$\times$ AMD Instinct MI210 GPUs and evaluated on 8$\times$ NVIDIA A6000 GPUs.
CLIP ViT-B/32 experiments (image classification) were trained and evaluated on a single NVIDIA A100 40GB GPU.

\end{document}